\newcommand{\defref}[1]{Definition~\ref{#1}}
\newcommand{\propref}[1]{Property~\ref{#1}}
\newcommand{\exref}[1]{Example~\ref{#1}}
\newcommand{\figref}[1]{Figure~\ref{#1}}
\newcommand{\secref}[1]{Section~\ref{#1}}
\newcommand{\thref}[1]{Theorem~\ref{#1}}
\newcommand{\lemref}[1]{Lemma~\ref{#1}}
\newcommand{\condref}[1]{Condition~\ref{#1}}
\newcommand{\caseref}[1]{Case~\ref{#1}}
\newcommand{\Integers}{\mathbb{Z}}
\newcommand{\IntegersStar}{\mathbb{Z}^*}
\newcommand{\Naturals}{\mathbb{N}}
\newcommand{\NValues}{A}
\newcommand{\Curly}[1]{\{#1\}}
\newcommand{\Tuple}[1]{\left \langle#1 \right \rangle}
\newcommand{\constraint}[1]{\textsc{#1}}
\newcommand{\Result}{R}
\newcommand{\CtrGamma}{\gamma}
\newcommand{\SumAggr}{\mathtt{sum}}
\newcommand{\One}{\mathtt{one}}
\newcommand{\Width}{\mathtt{width}}
\newcommand{\DecreasingSequencePattern}{\reg{(>(>|=)^*)^*>}}
\newcommand{\DecreasingTerracePatternName}{\textsc{decreasing\_terrace}}
\newcommand{\DecreasingTerracePattern}{\reg{>=^+>}}
\newcommand{\IncreasingTerracePatternName}{\textsc{increasing\_terrace}}
\newcommand{\IncreasingTerracePattern}{\reg{<=^+<}}
\newcommand{\PeakPatternName}{\textsc{peak}}
\newcommand{\PeakPattern}{\reg{<(<|=)^*(>|=)^*>}}
\newcommand{\ProperPlainPattern}{\reg{>=^+<}}
\newcommand{\ValleyPatternName}{\textsc{valley}}
\newcommand{\ValleyPattern}{\reg{>(>|=)^*(<|=)^*<}}
\newcommand{\ZigzagPattern}{\reg{(<>)^+<(>|\varepsilon)~|~(><)^+>(<|\varepsilon)}}
\newcommand{\After}{a}
\newcommand{\Before}{b}
\newcommand{\pattern}{\sigma}
\newcommand{\reg}[1]{\text{\mbox{`$#1$'}}}
\newcommand{\SigmaPattern}{$\pattern$-pattern}
\newcommand{\Char}[2]{#1_{#2}}
\newcommand{\CharArg}[3]{#1_{#2}(#3)}
\newcommand{\Definition}[6]{The~\emph{#1} of~#2,
  denoted by~$#3$, is a function that maps an element of~$#4$
  to~$#5$#6.}
\newcommand{\width}{\omega}
\newcommand{\Shift}{\delta}
\newcommand{\Language}[1]{\ensuremath{\mathcal{L}_{#1}}}
\newcommand{\Alphabet}{\Sigma}
\newcommand{\seqlength}{n}
\newcommand{\Upp}[1]{\textnormal{up}_{#1}}
\newcommand{\XSeq}{\Tuple{X_1, X_2, \dots, X_\seqlength}}
\newcommand{\OutputSeq}{T}
\newcommand{\X}{X}
\newcommand{\Frac}[2]{\left \lfloor \frac{#1}{#2} \right \rfloor}
\newcommand\xqed[1]{%
  \leavevmode\unskip\penalty9999 \hbox{}\nobreak\hfill
  \quad\hbox{#1}}
\newcommand\qedexample{\xqed{\scriptsize$\triangle$}}
\newcommand\ListStyle{%
   \renewcommand{\labelitemi}{$\circ$}}
\newcommand{\NAut}{k}
\newcommand{\FinalV}{\Result}
\newcommand{\acc}{r}
\newcommand{\Acc}{A}
\newcommand{\Coeff}[1]{e_{#1}}
\newcommand{\Function}{f}
\newcommand{\Aut}{\mathcal{M}}
\newcommand{\IncrementalAut}{incremental\nobreakdash-automaton}
\newcommand{\TrCoeff}[3]{\alpha^{#1}_{#2, #3}}
\newcommand{\DTrCoeff}[3]{\gamma^{#1}_{#2, #3}}
\newcommand{\InitialV}[1]{\alpha^{0}_{#1}}
\newcommand{\DInitialV}[1]{\gamma^{0}_{#1}}
\newcommand{\Delay}[1]{d_{#1}}
\newcommand{\Tr}{t}
\newcommand{\LinFun}{\Coeff{}+\Coeff{0}\cdot\seqlength+\sum\limits_{i=1}^{\NAut}\Coeff{i}\cdot\Result_i}
\newcommand{\LinFunVar}{\Coeff{0}\cdot\seqlength+\sum\limits_{i=1}^{\NAut}\Coeff{i}\cdot\Result_i }
\newcommand{\final}[1]{{\color{black} #1}}
\newcommand{\Peak}{P}
\newcommand{\Valley}{V}
\newcommand{\Product}{\mathcal{I}}
\newcommand{\State}{q}
\newcommand{\DelayedProduct}{\Product^*}
\newcommand{\Intersect}{\Product=\Aut_1\cap\Aut_2\cap\dots\cap\Aut_{\NAut}}
\newcommand{\Digraph}[2]{G_{#1}^{#2}}
\newcommand{\Weight}{w}
\newcommand{\Cycles}{\mathcal{C}}
\newcommand{\Automaton}{\textsc{automaton}}
\newcommand{\x}{X}
\newcommand{\SigSeq}{S}
\newcommand{\Sig}{S}
\newcommand{\Found}{\mathtt{found}}
\newcommand{\NotFound}{\mathtt{not\_found}}
\newcommand{\Const}{d}
\newcommand{\Hyp}{H}
\newcommand{\BoolFun}{f}
\newcommand{\Predicate}{C}
\newcommand{\NbPredicates}{p}
\newcommand{\NbFunctions}{m}
\newcommand{\InfeasibleSet}{\mathcal{I}}
\newcommand{\HypTimeSeries}[1]{\mathcal{T}_{#1}}
\newcommand{\DataPos}{\mathcal{D}^+}
\newcommand{\DataNeg}{\mathcal{D}^-}
\newcommand{\DataSet}{\mathcal{D}}
\newcommand{\HypSet}{\mathcal{H}}
\newcommand{\xvar}{X}
\newcommand{\LastFoundIndex}{t^*}
\newcommand{\Regret}[1]{\rho(#1)}
\newcommand{\MaxElement}{\phi}
\newcommand{\LossInt}{[\LossMin,\LossMax]}
\newcommand{\LossI}[1]{\mathcal{L}_{#1}}
\newcommand{\QreClass}{\mathbb{S}}
\newcommand{\GapToLossFun}{h_{\CtrGamma}}
\newcommand{\xseq}{\langle X_1, X_2, \dots, X_{\seqlength} \rangle}
\newcommand{\Sign}{\Sgn{\Result}}
\newcommand{\HomogeneityProp}[1]{\textsc{homogeneity} property#1}
\newcommand{\ValueIndependent}[1]{value independent#1}
\newcommand{\Max}[1]{\max \left  (#1 \right)}
\newcommand{\C}[1]{c_{#1}}
\newcommand{\D}[1]{d_{#1}}
\newcommand{\Sgn}[1]{\Signum(#1)}
\newcommand{\LossMin}{\ell_{\mathit{min}}}
\newcommand{\LossMax}{\ell_{\mathit{max}}}
\newcommand{\FirstAcc}{C}
\newcommand{\SecondAcc}{D}
\newcommand{\ThirdAcc}{R}
\newcommand{\Transducer}[1]{\mathcal{T}_{#1}}
\newcommand{\FixedAut}{\mathcal{A}_{\Aut}}
\newcommand{\nacc}{p}
\newcommand{\PeakConstraint}{\constraint{nb\_peak}}
\newcommand{\trans}[3]{#1\xrightarrow{#3} #2}
\DeclareMathOperator{\Gap}{gap}
\DeclareMathOperator{\Loss}{loss}
\DeclareMathOperator{\Signum}{sgn}
\begin{document}

\author{
   Ekaterina~Arafailova 
  \and Nicolas~Beldiceanu 
  \and Helmut~Simonis
  }

\title{Synthesising a Database of Parameterised Linear and \\
Non-Linear Invariants for Time-Series Constraints
\thanks{This is an extended version of the CP~2017 paper~\cite{CP17LinearInvariants}.
Ekaterina Arafailova is supported by the EU H2020 programme under grant 640954 for project GRACeFUL.
Nicolas~Beldiceanu is partially supported by the GRACeFUL project and
by the Gaspard Monge Program for Optimisation and Operations Research~(PGMO).
Helmut Simonis is supported by Science Foundation Ireland (SFI) under
grant SFI/10/IN.1/I3032; the Insight Centre for Data Analytics is supported by~SFI
under grant SFI/12/RC/2289. \\
}}
   \institute{Ekaterina~Arafailova and Nicolas~Beldiceanu \at TASC (LS2N) IMT
     Atlantique, FR -- 44307 Nantes, France \\
     \email{\{Ekaterina.Arafailova,Nicolas.Beldiceanu\}@imt-atlantique.fr
     }
\and Helmut~Simonis \at Insight Centre for Data Analytics,
     University College Cork, Ireland  \\
     \email{Helmut.Simonis@insight-centre.org}
   }

\maketitle

\begin{abstract}
Many constraints restricting the result of some computations
over an integer sequence can be compactly represented by register
automata.
We improve the propagation of the conjunction of such constraints
on the same sequence by synthesising a database of linear
and non-linear invariants using their register-automaton
representation.
The obtained invariants are formulae parameterised by a function of the
sequence length and proven to be true for any long enough sequence.
To assess the quality of such linear invariants,
we developed a method to verify whether a generated linear invariant
is a facet of the convex hull of the feasible points.
This method, as well as the proof of non-linear invariants, are based on the 
systematic generation of constant-size deterministic finite automata that accept
all integer sequences whose result verifies some simple condition.
We apply such methodology to a set of~$44$ time\nobreakdash-series constraints
and obtain~$1400$ linear invariants from which~70\% are facet defining,
and~$600$ non-linear invariants, which were tested on short-term electricity
production problems.
\end{abstract}

\section{Introduction}\label{sec:introduction}

We present a framework for synthesising necessary conditions 
for a conjunction of sequence constraints that are each represented by
a register automaton~\cite{BeldiceanuCarlssonPetit04}, and are imposed
on the same integer sequence of length $\seqlength$.
Our necessary conditions are in the form of linear inequalities,
implications whose right-hand side is a linear inequality, and
disjunctions of inequalities.
In addition, they are parameterised by a function of $\seqlength$ and
instance-independent, i.e.~they are true for any integer sequence
of length $\seqlength$ greater than some small constant.

In order to synthesise linear inequalities and implications with linear
inequalities we draw full benefit from register automata representing the
constraints since they do not encode explicitly all potential
values of registers as states, and allow a constant-size representation of
many counting constraints imposed on a sequence of integer variables.
Moreover their compositional nature permits representing a conjunction
of sequence constraints as the intersection of the corresponding
register automata~\cite{Menana,menana_thesis},
i.e.~the intersection of the languages accepted by all register automata,
without representing explicitly the Cartesian product of all register values.
As a consequence, the size of such an intersection register automaton is often quite compact,
even if maintaining domain consistency for such constraints is in general
NP-hard~\cite{ASTRA:AAAI14:regCount};
for instance, the intersection of the $22$ register automata for all
$\constraint{nb}\_\pattern$ time-series constraints described in~\cite{Catalog18}
has only $16$ states.

To formally analyse the quality of the generated invariants we developed a method
allowing us to verify whether a linear invariant is a facet of the convex hull or not.
The method identifies two distinct points located on the line corresponding to the linear invariant,
and shows that these points are always feasible provided the precondition associated
with the invariant holds.

For synthesising disjunctions of inequalities,
we use a slightly different approach, comprising three steps:
data generation, mining of invariants, and proof of invariants.
The proof part is based on the idea that, in order to prove that there
is no sequence satisfying a conjunction of conditions,
we can represent a set of sequences satisfying each condition
by a constant-size automaton without registers.
Then, a sequence satisfying all the conditions must be accepted
by the intersection of such automata.
If the intersection is empty, then such a sequence does not exist. 

\noindent The contributions of this paper are:
\begin{itemize}
\item
First,
Section~\ref{sec:linear-invariants} provides the basis of a simple,
systematic method to precompute linear inequalities and conditional
linear inequalities for a conjunction of $\Automaton$ constraints on
the same sequence.
We call such inequalities and implications \emph{linear invariants}
and \emph{conditional linear invariants}, respectively.
Each linear invariant and each conditional linear invariant involves
the result variables of the different $\Automaton$ constraints in a
considered conjunction representing the fact that the result variables
cannot vary independently. 
Such invariants may be parametrised by a function of the
sequence length and are independent of the domains of the
sequence variables.
Finally, we describe a systematic method for verifying whether
a linear invariant is a facet of the convex hull or not.
\item
Second,
Section~\ref{sec:non-linear-invariants} shows how to obtain disjunctions of
inequalities, possibly parameterised by a function of the sequence length.
We call such disjunctions \emph{non-linear invariants}.
\item
Third, to mechanise all proofs required in Section~\ref{sec:linear-invariants}
for proving that a linear invariant is facet defining,
and in Section~\ref{sec:non-linear-invariants} for proving non-linear invariants,
Section~\ref{sec:conditional-automata}
defines a special kind of constant\nobreakdash-size
automaton without registers, named \emph{conditional automata}
that recognises all (and only all) sequences satisfying some condition,
e.g.~all sequences maximising the number of peaks.
It shows how to construct such conditional automata in a systematic way.
\item
Fourth, within the context of time-series constraints, \secref{sec:evaluation} shows
the impact of the database of $2000$ synthesised invariants on the propagation of
time-series constraints on short-term electricity production problems.
\end{itemize}

Note that all obtained parameterised invariants are formulae that are \emph{always true}.
Hence they are computed once and for all, put into a \emph{database of parameterised invariants},
and consulted every time when required:
there is no need to rerun our methods for synthesising invariants for every instance.

Adding redundant constraints to a constraint model has been recognised from the very beginning
of Constraint Programming as a major source of improvement~\cite{DincbasSimonisVanHentenryck88}.
Attempts to generate such implied constraints in a systematic way were limited
(1)~by the difficulty to manually prove a large number of conjectures~\cite{HansenCaporossi00,BeldiceanuCarlssonRamponTruchet05},
(2)~by the limitations of automatic proof systems~\cite{FrischMiguelWalsh01,CharnleyColtonMiguel06}, or
(3)~to special cases for very few constraints like
$\constraint{alldifferent}$,
$\constraint{cardinality}$,
$\constraint{element}$~\cite{Lee02,AppaMagosMourtos04,Hooker:2011:IMO:2090089}.
Within the context of register automata, linear invariants relating
consecutive register values of the same constraint were obtained~\cite{ASTRA:GCAI15:ICs}
using Farkas's lemma~\cite{Boyd:convexOpti} in a resource-intensive procedure.


\section{Background \label{sec:background}}

This section presents the necessary background and notation on regular
expressions, register automata, and time-series constraints.
Two complementary facets of time-series constraints will be presented:
first, their declarative definition, second the transducers used to synthesise
an implementation of time-series constraints.
These transducers will be used in Section~\ref{sec:conditional-automata}
to generate a constant\nobreakdash-size automaton associated with an upper bound
minus a constant shift of a time-series constraint.

\subsection{Background on Regular Expressions and Register Automata}

For a regular expression $\pattern$, its language~\cite{Crochemore}
is denoted by $\Language{\pattern}$.
The \emph{size}~\cite{BoundsConstraints} of a regular expression $\pattern$,
denoted by $\Char{\width}{\pattern}$, is the number of letters in the shortest word
of $\Language{\pattern}$.

A \emph{register automaton}~\cite{Beldiceanu:automata:journal} $\Aut$
with $p>0$ registers is a tuple $\Tuple{Q, \Sigma, \delta, q_0, I, A, \alpha}$, where
$Q$ is the set of \emph{states},
$\Sigma$ is the \emph{input alphabet},
$\delta \colon (Q\times\Integers^p) \times \Sigma \rightarrow Q \times \Integers^p$
is the \emph{transition function},
$q_0 \in Q$ is the \emph{initial~state},
$I$ is a sequence of length $p$ of the initial values of the $p$ registers,
$A\subseteq Q$ is the \emph{set of accepting~states}, and 
$\alpha \colon  \Integers^p \rightarrow \Integers$ is a function,
called \emph{acceptance function},
which maps the registers of an accepting~state into an integer.
If, by consuming the symbols of a word $w$ in $\Sigma^*$, the automaton
$\Aut$ triggers a sequence of transitions from $q_0$, its initial state, to some accepting
state where $\Tuple{d_1,d_2,\dots,d_p}$ are the values of the
registers at this stage, then $\Aut$ returns $\alpha(d_1,d_2,\dots,d_p)$, otherwise it \emph{fails}.
In this paper, the input alphabet of the register automata is $\Curly{\reg{<}, \reg{=}, \reg{>}}$.

Within all figures, the acceptance function is depicted by a box
connected by dotted lines to each state.
If a register is left unchanged while triggering a given transition,
then we do not mention this register update on the corresponding transition.

\subsection{Defining Time-Series Constraints}

Given an integer sequence $\x=\xseq$,
a \emph{time-series constraint} $g\_f\_\pattern(\x,\Result)$,
introduced in~\cite{Beldiceanu:synthesis},
restricts $\Result$ to be the result of some computations over an
integer sequence $X = \XSeq$, where:
\begin{itemize}
\item
$\pattern$ is a regular expression~\cite{Crochemore} over the alphabet
$\Alphabet = \{\reg{<},\reg{=},\reg{>}\}$ with which we associate
two integer constants $b_\sigma$ and $a_\sigma$ whole role is explained below;
the sequence $\SigSeq=\langle \Sig_1,\Sig_2,\dots,\Sig_{n-1}\rangle$,
called the \emph{signature} and containing \emph{signature symbols}, is linked to the sequence
$\x$ via the \emph{signature conditions} $(\xvar_i<\xvar_{i+1}\Leftrightarrow \Sig_i=\reg{<})$
$\land~(\xvar_i=\xvar_{i+1}\Leftrightarrow \Sig_i=\reg{=})$
$\land~(\xvar_i>\xvar_{i+1}\Leftrightarrow \Sig_i=\reg{>})$ for all $i\in[1,n-1]$~\cite{Beldiceanu:automata:journal,VeanesHooimeijerLivshitsMolnarBjorner12}.
When~$\Tuple{\Sig_i,\Sig_{i+1},\dots,\Sig_j}$ (with $1\leq i\leq j\leq\seqlength$)
is a maximal word matching~$\pattern$, the
sequence~$\Tuple{\xvar_{i+\Char{\Before}{\pattern}},\xvar_{i+\Char{\Before}{\pattern}+1},\dots,\xvar_{j+1-\Char{\After}{\pattern}}}$ is called a $\pattern$-\emph{pattern};
\item
$f$ is a function over sequences, called \emph{feature},
and is used for computing a value for each $\pattern$-pattern;
the role of the two constants $b_\sigma$ and $a_\sigma$ is to trim the left and right borders
of an occurrence of the regular expression $\pattern$ when computing the feature values;
\item
$g$ is a function over sequences, called \emph{aggregator}, and
is used for aggregating the feature values of the different
$\pattern$\nobreakdash-patterns.
\end{itemize}

The result value $\Result$ of a time-series constraints is restricted
to be the result of aggregation, computed using $g$, of the list of values
of feature $f$ for all $\pattern$-patterns in $\x$.
In this paper, we consider the following class of time-series constraints.
\begin{definition}[value-independent time-series constraints]
\label{def:qre-class}
A time-series constraints $g\_f\_\pattern(\x, \Result)$  is
\emph{\ValueIndependent{}} if any two integer sequences with
the same signature yield the same value of $\Result$.
\end{definition}

We denote by $\QreClass$ the class of all \ValueIndependent{}
time-series constraints.
In the rest of the paper, we only consider  time-series constraints in $\QreClass$,
namely the $\constraint{sum\_one}\_\pattern(\x, \Result)$ and the
$\constraint{sum\_width}\_\pattern(\x, \Result)$ families:
\begin{itemize}
\item
For $\constraint{sum\_one}\_\pattern$, the feature $\One$ denotes the constant function $1$,
and the aggregator $\SumAggr$ is a sum.
Consequently $\Result$ is the number of $\pattern$-patterns of $\x$.
In the following we use $\constraint{nb}\_\pattern$ as a shorthand for
$\constraint{sum\_one}\_\pattern$.
\item
For $\constraint{sum\_width}\_\pattern$, the feature $\Width$ denotes the number of elements
in a $\pattern$-pattern.
Then $\Result$ is the sum of the number of elements of all $\pattern$-patterns of $\x$.
\end{itemize}
If there is no~\SigmaPattern~in~$\x$, then~$\Result$ is the default value of $g$,
which is $0$ in the case of the $\SumAggr$ aggregator.
The \emph{length} of an integer sequence  is the number of its elements.
In the following, we assume non-empty integer sequences.

\begin{example}
\label{ex:peak}
Consider the $\PeakPatternName = \PeakPattern$
and the $\ValleyPatternName = \ValleyPattern$ regular expressions
with the values
$\Char{\Before}{\PeakPatternName}$, $\Char{\After}{\PeakPatternName}$,
$\Char{\Before}{\ValleyPatternName}$ and $\Char{\After}{\ValleyPatternName}$ all
being $1$.
The signature of $\x = \Tuple{0,1,2,2,0,0, 4,1}$ is $\SigSeq = \Tuple{<,<,=,>,=,<,>}$.
There is one maximal occurrence of the $\ValleyPatternName$ regular expression in $\SigSeq$,
namely $\reg{>=<}$.
There are two maximal occurrences of the $\PeakPatternName$ regular expression in $\SigSeq$,
namely $\reg{<<=>}$ and $\reg{<>}$.
Hence, $\constraint{nb}\_\PeakPatternName(\x, 2)$ holds.
The $\PeakPatternName$-pattern $\Tuple{1,2,2}$ (resp.\ $\Tuple{4}$) corresponds
to the first (resp.\ second) maximal occurrence of $\PeakPatternName$ in $\SigSeq$.
The width of the first and the second $\PeakPatternName$-patterns
of $\x$, is, respectively, $3$ and $1$.
The sum of the widths of all $\PeakPatternName$-patterns of $\x$ is $3+1=4$.
Hence, $\constraint{sum\_width}\_\PeakPatternName(\x, 4)$ holds.
\qedexample
\end{example}

\subsection{Operational View of Time-Series Constraints}

Both, to identify all $\pattern$-patterns of an integer sequence $\x$
and to synthesise a register automaton computing the result $\Result$ of a time-series
constraint $g\_f\_\pattern(\x, \Result)$,
the notion of \emph{seed transducer} was introduced in~\cite{Beldiceanu:synthesis}.
It was shown in~\cite{ASTRA:ICTAI17:generation} how to generate such seed transducer
from a regular expression.
For the purpose of this paper, we consider a simplified version of seed transducers
of~\cite{Beldiceanu:synthesis,ASTRA:ICTAI17:generation} that we now present.

A \emph{seed transducer} of $\pattern$ is a deterministic transducer where each
transition is labelled with two letters:
a letter in the input alphabet $\Alphabet = \Curly{\reg{<}, \reg{=}, \reg{>}}$,
called the \emph{input symbols}, and a letter in the output alphabet
$\Omega = \Curly{\Found, \NotFound}$, called the  \emph{output symbols}.
Hence, a transducer consumes the signature $\SigSeq$ of an integer sequence $\x$
and produces an output sequence $T$ where each element is in $\Omega$.
Every element of $\Omega$ is called a \emph{phase letter} and corresponds
to a recognition phase of a new occurrence of $\pattern$ in $\SigSeq$.
Consider different possibilities of the produced symbol $T_i$
when consuming a symbol $\Sig_i$ of $\SigSeq$:
\begin{itemize}
\item
$T_i$ is $\Found$.
A transition labelled by this output symbol corresponds to the discovery
of a new occurrence of $\pattern$ in $\SigSeq$.
\item
$T_i$ is $\NotFound$.
Such transitions do not correspond to the discovery of a new occurrence of $\pattern$ in $\SigSeq$,
but rather to some intermediate phases that do not need to be detailed for the purpose of this paper.
\end{itemize}

A transition labelled with $\Found$ is called a \emph{$\Found$-transition}.
A \emph{$\Found$-path} is any sequence of consecutive transitions of
the transducer containing at least one $\Found$-transition.

\begin{example}
\label{ex:peak-transducer}
Consider the $\PeakPatternName$ regular expression introduced in \exref{ex:peak},
and its seed transducer given in Part~(A) of \figref{fig:peak-transducer-separated}:
\begin{itemize}
\item
the transition from $r$ to $t$ is a single $\Found$-transition,
\item
the sequence of transitions from $s$ to $r$, from $r$ to $t$ and
from $t$ to $r$ is a $\Found$-path.
\end{itemize}
While consuming the signature $\SigSeq = \Tuple{<,<,=,>,=,<,>}$ of 
the integer sequence $\Tuple{0,1,2,2,0,0,4,1}$,
the seed transducer produces the output sequence
$\langle\NotFound,\NotFound,\NotFound,\Found,\NotFound,$ $\Found\rangle$.
As shown in \exref{ex:peak}, $\SigSeq$ contains two maximal occurrences of
$\PeakPatternName$, complying with the two $\Found$ letters in $t$.
\qedexample
\end{example}


\section{Types of Synthesised Invariants}
\label{sec:invariant-types}

Consider a conjunction of two time-series constraints
$\CtrGamma_1(X,\Result_1)$ and $\CtrGamma_2(X,\Result_2)$
imposed on the same sequence of integer variables $X=\XSeq$.
In this section, we present a classification of different types of
invariants that involves $\Result_1$, $\Result_2$ and $\seqlength$.

\paragraph{Farkas Linear Invariants for a Single Constraint}

The method for generating linear invariants based on the Farkas's lemma
was described in~\cite{ASTRA:GCAI15:ICs}, and is used for generating
linear invariants linking the registers of a register automaton
representing a single constraint $\CtrGamma_i$ with $i$ in $\Curly{1,2}$.
Although, this method is fairly general,
the generation of invariants can be time consuming and the
set of generated invariants is too large.
This requires an extra step for selecting the tightest generated invariants.

\paragraph{Linear Invariants for a Conjunction of Constraints}
A contribution of this paper is a systematic method for generating parameterised
linear invariants linking the result variables $\Result_1$ and $\Result_2$
of two time-series constraints.
This method applies for any conjunction of constraints, where each
constraint can be represented by a register automaton, satisfying a
certain property, named  the~\IncrementalAut~property, which will be
introduced in \propref{prop:incremental-automaton} of Section~\ref{sec:linear-invariants}.
The class of automata satisfying the \IncrementalAut~property is
smaller compared to the ones satisfying the conditions of the method
of~\cite{ASTRA:GCAI15:ICs}.
However, it still covers $35$ constraints of the volume~II of the
Global Constraint Catalogue~\cite{Catalog18}. 
We further show in a systematic way that many of the generated
invariants are facets of the convex hull of feasible combinations of
$\Result_1$ and $\Result_2$.

\paragraph{Conditional Linear Invariants for a Conjunction of Constraints}
We also generate conditional parameterised linear invariants, where the
condition may be a requirement on $\seqlength$, $\Result_1$ or $\Result_2$,
e.g.~$\Result_1 > 0 \land \Result_2 > 0$, $\seqlength > 3$.
Such invariants are useful when, for example, a linear invariant is
a facet of the convex hull and holds only for long enough sequences. 
The method for generating such invariants is based on the method for
synthesising linear invariants, and the same conditions on register
automata apply.

\paragraph{Non-Linear Invariants}
The non-linear invariants we synthetise are of the form
$P_1 \lor P_2 \lor \dots \lor P_k$, where every $P_k$ is
a negation of an \emph{atomic relation}. 
We define in \secref{sec:non-linear-invariants} a set of $8$ atomic relations,
some of which are $\Result_i=c$, $\Result_i=\Upp{\Result_i}(\seqlength)-c$,
where $c$ is a natural number, and $\Upp{\Result_i}(\seqlength)$ is the maximum
value of $\Result_i$ among all time series of length $\seqlength$~\cite{BoundsConstraints}.
Such invariants are required when the set of feasible combinations of
$\Result_1$ and $\Result_2$ is non-convex and therefore
linear invariants are not enough for fully describing it.


\section{Synthesising Parameterised Linear Invariants}
\label{sec:linear-invariants}

Consider $\NAut$ register automata $\Aut_1,\Aut_2,\dots,\Aut_{\NAut}$ over the same alphabet~$\Sigma$.
Let $\acc_i$ denote the number of registers of $\Aut_i$, and let~$\FinalV_i$ designate its returned value.
In this section we show how to systematically generate linear invariants of the form

\begin{equation}
\label{inv:main}
\LinFun \geq 0~~\textnormal{with}~~e,e_0,e_1,\dots,e_k\in\mathbb{Z},
\end{equation}

which hold after the signature of the same input sequence~$\XSeq$
is completely consumed by the $\NAut$ register automata $\Aut_1,\Aut_2,\dots,\Aut_{\NAut}$.
We call such linear invariant \emph{general} since it holds regardless of any
conditions on the result variables $\FinalV_1,\FinalV_2,\dots,\FinalV_{\NAut}$.
Stronger, but less general, invariants may be obtained when the
initial values of the registers cannot be assigned to the result variables.

Our method for generating invariants is applicable to a restricted
class of register automata that we now introduce.
\begin{property}[\IncrementalAut~property]
\label{prop:incremental-automaton}
A register automaton~$\Aut$ with~$\acc$ registers has the \emph{\IncrementalAut} property
if the following four conditions are all satisfied:
\begin{enumerate}
\item\label{enum:inc_auto_0}
For every register~$A_j$ of~$\Aut$, its initial value~$\InitialV{j}$ is a natural number.
\item\label{enum:inc_auto_1}
For every register~$A_j$ of~$\Aut$ and for every transition $\Tr$ of~$\Aut$,
the update of~$A_j$ upon triggering transition $\Tr$ is of the form
$A_j \gets \TrCoeff{\Tr}{j}{0} + \sum \limits_{i = 1}^{\acc} \TrCoeff{\Tr}{j}{i}\cdot A_i$, with
$\TrCoeff{\Tr}{j}{0}\in\mathbb{N}$ and $\TrCoeff{\Tr}{j}{1},\TrCoeff{\Tr}{j}{2},\dots,\TrCoeff{\Tr}{j}{\acc}\in\{0,1\}$.
\item\label{enum:inc_auto_2}
The register~$A_{\acc}$ is called the \emph{main register} and
verifies all the following three conditions:
  \begin{enumerate}
  \item\label{enum:inc_auto_2a}
  the value returned by $\Aut$ is the last value of its main register $A_{\acc}$,
  \item\label{enum:inc_auto_2b}
  for every transition~$\Tr$ of~$\Aut$, $\TrCoeff{\Tr}{\acc}{\acc} =1$,
  \item\label{enum:inc_auto_2c}
  for a non\nobreakdash-empty subset~$T$ of transitions of~$\Aut$,
  $\sum \limits_{i = 1}^{\acc-1} \TrCoeff{\Tr}{\acc}{i} > 0, ~ \forall \Tr \in T$.
\end{enumerate}
\item\label{enum:inc_auto_3}
For all other registers~$A_j$ with $j<\acc$, on every transition~$\Tr$
of~$\Aut$, we have $\sum \limits_{i=1,i\neq j}^{\acc}\TrCoeff{\Tr}{j}{i}=0$ and,
if~$\TrCoeff{\Tr}{r}{j}>0$, then $\TrCoeff{\Tr}{j}{j}$ is $0$.
\end{enumerate}
\end{property}

The intuition behind the~\IncrementalAut~property is that there is one register
that we name the \emph{main register}, whose last value is the final value,
returned by the register automaton, (see~\ref{enum:inc_auto_2a}).
At some transitions, the update of the main register is a linear
combination of the other registers, while on the other transitions its
value either does not change or is incremented by a non-negative constant,
(see~\ref{enum:inc_auto_2b} and \ref{enum:inc_auto_2c}).
All other registers may only be incremented by a non-negative constant or
assigned to some non-negative integer value, and they \emph{may} contribute to
the final value, (see~\ref{enum:inc_auto_3}).
These registers are called \emph{potential registers}.
Both register automata in Parts~(A) and~(B) of~\figref{fig:peak-valley-separated}
have the~\IncrementalAut~property, and their single registers are the main registers.
Volumes~I and~II of the global constraint catalogue contain more than~$50$ such register automata.
In particular, in Volume~II, the register automata for all the constraints of the
$\constraint{nb}\_\pattern$ and the $\constraint{sum\_width}\_\pattern$ families
have the \IncrementalAut~property.
In the rest of this paper we assume that all register automata~$\Aut_1,\Aut_2,\dots,\Aut_{\NAut}$
have the~\IncrementalAut~property.

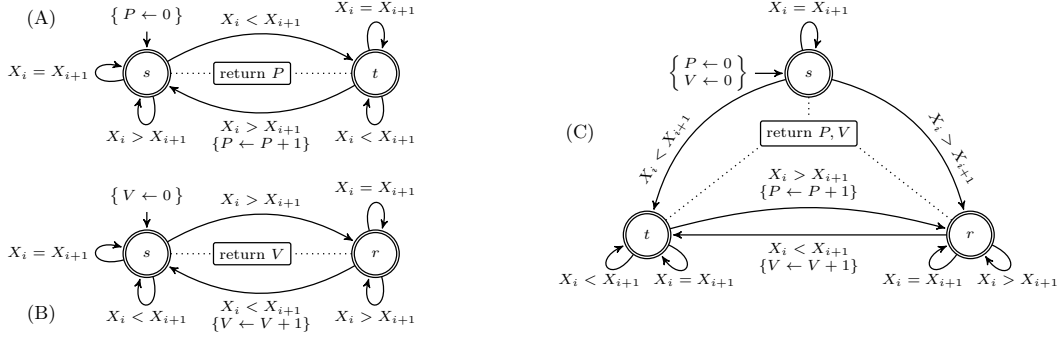
\begin{figure}
{\centering
\scalebox{0.8}{
\newcommand{\InitialText}{\scriptsize $\left \{\begin{array}{l}\Peak \leftarrow 0\\ \Valley \leftarrow 0 \end{array}\right\}$}
\begin{tikzpicture}[->,>=stealth',shorten >=1pt,auto,node distance=38mm,semithick,
                    information text/.style={rounded corners=1pt,inner sep=1ex},font=\scriptsize]
\begin{scope}[xshift=1cm]
\node[initial, accepting, initial where=above,
      initial text={\scriptsize $\left \{\begin{array}{l} P\leftarrow 0\end{array}\right\}$},
      initial distance=3mm,state,draw=black](s) {$s$};
\node[state,draw=black, accepting] (t) [right of=s]{$t$};
\node[rectangle,draw,rounded corners=1pt] at (1.75,0) (ctr1) {return $P$};
\path
 (s) edge [draw=black, loop left]  node[left]  {$X_i=X_{i+1}$} (s)
 (s) edge [draw=black, loop below] node[below] {$X_i>X_{i+1}$} (s)
 (s) edge [draw=black, bend left]  node[above] {$X_i<X_{i+1}$} (t)
 (t) edge [draw=black, loop above] node[above] {$X_i=X_{i+1}$} (t)
 (t) edge [draw=black, loop below] node[below] {$X_i<X_{i+1}$} (t)
 (t) edge [draw=black, bend left]  node[below]{$\begin{array}{c} X_i > X_{i+1} \\
                                                                 \Curly{P\leftarrow P+1}
                                                \end{array}$ } (s);
\draw [dotted,-] (s) -- (ctr1);
\draw [dotted,-] (t) -- (ctr1);
\node at (-1.75,0.9) {\normalsize (A)};
\end{scope}
\begin{scope}[xshift=1cm, yshift=-3cm]
      \node[initial, accepting, initial where=above,
            initial text={\scriptsize $\left \{\begin{array}{l} V\leftarrow 0\end{array}\right\}$},
            initial distance=3mm,state,draw=black](s) {$s$};
\node[state,draw=black, accepting] (r) [right of=s]{$r$};
\node[rectangle,draw,rounded corners=1pt] at (1.75,0) (ctr2) {return $V$};
\path
 (s) edge [draw=black, loop left] node[left]{$X_i = X_{i+1}$} (s)
 (s) edge [draw=black, loop below] node[below]{$X_i < X_{i+1}$} (s)
 (s) edge [draw=black, bend left]  node[above]{$X_i > X_{i+1}$ } (r)
 (r) edge [draw=black, loop above] node[above]{$X_i = X_{i+1}$ } (r)
 (r) edge [draw=black, loop below] node[below]{$X_i > X_{i+1}$ } (r)
 (r) edge [draw=black, bend left]  node[below]{$\begin{array}{c} X_i < X_{i+1} \\
                                                                 \Curly{V\leftarrow V+1}
                                                \end{array}$ } (s);
\draw [dotted,-] (s) -- (ctr2);
\draw [dotted,-] (r) -- (ctr2);
\node at (-1.75,-1) {\normalsize (B)};
\end{scope}
\begin{scope}[xshift=12cm,yshift=0cm]
\node[initial,initial where=left,initial text=\InitialText,initial distance=5mm,state,draw=black, accepting](s) {$s$};
\node[state, draw=black, accepting] (t) [below left of=s]{$t$};
\node[state, draw=black, accepting] (r) [below right of=s]{$r$};
\node[rectangle,draw,rounded corners=1pt] at (0,-1) (ctr3) {return $P,V$};
\path
 (s) edge [draw=black, loop above] node[above] {$X_i = X_{i+1}$}   (s)
 (s) edge [draw=black, bend left]  node[pos=0.7,above=0.05,sloped] {$X_i > X_{i+1}$}  (r)
 (s) edge [draw=black, bend right] node[pos=0.7,above=0.05,sloped] {$X_i < X_{i+1}$}  (t)
 (r) edge [draw=black, in=205,out=235,loop] node[below] {$X_i = X_{i+1}~~~$} (r)
 (r) edge [draw=black, in=305,out=335,loop] node[below] {$~~~X_i > X_{i+1}$} (r)
 (t) edge [draw=black, in=205,out=235,loop] node[below] {$X_i < X_{i+1}~~~$} (t)
 (t) edge [draw=black, in=305,out=335,loop] node[below] {$~~~X_i = X_{i+1}$} (t)
 (r) edge [draw=black]             node[below] {$\begin{array}{c} X_i < X_{i+1} \\
                                                                  \{\Valley\leftarrow\Valley+1\}
                                                 \end{array}$} (t)
 (t) edge [draw=black, bend angle=15, bend left] node[above]{$\begin{array}{c} X_i > X_{i+1} \\
                                                                               \{\Peak\leftarrow\Peak+1\}
                                                              \end{array}$} (r);
\draw [dotted,-] (s) -- (ctr3);
\draw [dotted,-] (t) -- (ctr3.south west);
\draw [dotted,-] (r) -- (ctr3.south east);
\node at (-3.8,-1) {\normalsize (C)};
\end{scope}
\end{tikzpicture}
}}

\caption{\label{fig:peak-valley-separated}
(A)~Register automaton for $\constraint{nb\_peak}$;
(B)~Register automaton for $\constraint{nb\_valley}$;
(C)~Intersection of~(A) and~(B).
}
\end{figure}

Our approach for systematically generating linear invariants of type
$\LinFun \geq 0$ considers each combination of signs of the coefficients
$e_i$ (with $i\in[0,\NAut]$).
It consists of three steps:
\begin{enumerate}
\item
Construct a non-negative function $v = \LinFun$, which represents the left-hand side
of the sought linear invariant (see~\secref{sec:constructing-automaton}).
\item
Select the coefficients $e_0,e_1,\dots,\Coeff{\NAut}$, called the \emph{relative coefficients}
of the linear invariant, so that there exists a constant $C$ such that $\LinFunVar\geq C$
(see~\secref{sec:relative-terms}).
\item
Compute $C$ and set the coefficient $e$, called the \emph{constant term}
of the linear invariant, to $-C$ (see~\secref{sec:absolute-term}).
\end{enumerate}

\noindent The three previous steps are performed as follows:
\begin{enumerate}
\item
First, we assume a sign for each coefficient $e_i$ (with $i\in[0,\NAut]$),
which tells whether we have to consider or not the contribution of the potential registers;
note that each combination of signs of the coefficients $e_i$ (with $i\in[0,\NAut]$)
will lead to a different linear invariant.
Then, from the intersection $\Product$ of $\Aut_1,\Aut_2,\dots,\Aut_{\NAut}$,
we construct a digraph  called the \emph{invariant digraph},
where each transition $\Tr$ of $\Product$ is replaced by an arc whose weight represents
the lower bound of the variation of the term $\LinFunVar$ while triggering~$\Tr$.
\item
Second, we find the coefficients $e_i$ (with $i\in[0,\NAut]$) so that
the invariant digraph does not contain any negative cycles.
When the invariant digraph has no negative cycles, the value of $\LinFunVar$
is bounded from below for any integer sequence.
\item
Third, to obtain $C$ we compute the shortest path in the invariant digraph from the node 
of the invariant digraph corresponding to the initial state of $\Product$, to all nodes
corresponding to accepting states of $\Product$.
\end{enumerate}

\subsection[Constructing the Invariant Digraph for a Conjunction of 
            $\Automaton$ Constraints wrt a Linear Function]{Constructing
            the Invariant Digraph for a Conjunction of 
            $\Automaton$ Constraints \\ wrt a Linear Function}
\label{sec:constructing-automaton}

First, \defref{def:invariant-graph} introduces the notion of \emph{invariant digraph}~$\Digraph{\Product}{v}$
of the register automaton $\Product = \Aut_1 \cap \Aut_2 \cap \dots \cap \Aut_{\NAut}$ wrt
a linear function~$v$ involving the values returned by these register automata.
Second, \defref{def:sequence-walk} introduces the notion of \emph{weight of an accepting sequence~$X$}
wrt $\Product$ in $\Digraph{\Product}{v}$, which makes the link between a path in~$\Digraph{\Product}{v}$
and the vector of values returned by~$\Product$ after consuming the signature of~$X$.
Finally, \thref{th:linear-invariants-main} shows that the weight of~$X$ in~$\Digraph{\Product}{v}$
is a lower bound on the linear function~$v$.

\begin{definition}[invariant digraph]
\label{def:invariant-graph}
Consider an accepting sequence~$X=\XSeq$ wrt the register automaton $\Intersect$,
and a linear function~$v=\LinFun$, where $(\FinalV_1,\FinalV_2,\dots,\FinalV_{\NAut})$
is the vector of values returned by $\Product$ after consuming the signature of~$X$.
The \emph{invariant digraph} of~$\Product$ wrt~$v$, denoted by $\Digraph{\Product}{v}$,
is a weighted digraph defined in the following way:

\begin{itemize}
\item
The set of nodes of~$\Digraph{\Product}{v}$ is the set of states of~$\Product$.
\item
The set of arcs of~$\Digraph{\Product}{v}$ is the set of transitions of~$\Product$,
where for every transition~$\Tr$, the corresponding symbol of the alphabet is
replaced by an integer weight, which is
$\Coeff{0}+\sum\limits_{i=1}^{\NAut}\Coeff{i}\cdot\beta^{\Tr}_i$,
where~$\beta^{\Tr}_i$ is defined as follows:
\begin{numcases}{\beta^{\Tr}_i = }
\TrCoeff{\Tr}{i,\acc_i}{0}                           & \text{if~} $\Coeff{i}\geq 0$, \label{cond:definition-graph-pos} \\
\sum \limits_{j = 1}^{\acc_i} \TrCoeff{\Tr}{i, j}{0} & \text{if~} $\Coeff{i}<0$,     \label{cond:definition-graph-neg} 
\end{numcases}

where~$\acc_i$ denotes the number of registers of $\Aut_i$, and
$\TrCoeff{\Tr}{i, p}{0}$ (with $p\in[1,\acc_i]$) is the constant
in the update of the register of~$\Product$ corresponding to the register~$p$ of~$\Aut_i$.
\end{itemize}
\end{definition}

\begin{definition}[walk and weight of an accepting sequence]
\label{def:sequence-walk}
Consider an accepting sequence~$X$ of length $\seqlength$ wrt the register automaton $\Intersect$,
and a linear function $v = \LinFun$, where $(\FinalV_1,\FinalV_2,\dots,\FinalV_{\NAut})$ is
the vector of values returned by $\Product$ after consuming the signature of~$X$.
\begin{itemize}
\item
The \emph{walk of~$X$ in~$\Digraph{\Product}{v}$} is a path in~$\Digraph{\Product}{v}$
whose sequence of arcs is the sequence of the corresponding transitions of~$\Product$
triggered upon consuming the signature of~$X$.
\item
The \emph{weight of~$X$ in~$\Digraph{\Product}{v}$} is the weight of its path
in~$\Digraph{\Product}{v}$ plus a constant value, which is a lower bound
on~$v$ corresponding to the initial values of the registers and is called
the \emph{initialisation weight} in~$\Digraph{\Product}{v}$.
It equals~$\Coeff{}+\Coeff{0}\cdot(p-1)+\sum _{i=1}^{\NAut}\Coeff{i}\cdot\beta_i^0$,
where~$p$ is the arity of the signature, and where~$\beta_i^0$ is defined as follows:

\begin{numcases}{\beta_i^0 = }
\InitialV{i,\acc_i}                       & \text{if~} $\Coeff{i}\geq0$, \label{cond:definition-sequence-weight-pos}\\
\sum \limits_{j=1}^{\acc_i}\InitialV{i,j} & \text{if~} $\Coeff{i}<0$,    \label{cond:definition-sequence-weight-neg}
\end{numcases}

where~$\acc_i$ denotes the number of registers of $\Aut_i$,
and $\InitialV{i,p}$ (with $p\in[1,\acc_i]$) is the initial value
of the register of~$\Product$ corresponding to the register $p$ of~$\Aut_i$.
\end{itemize}
\end{definition}

\begin{example}
\label{ex:peak-valley-automaton}
Consider the $\constraint{peak}(X,\Peak)$ and the $\constraint{valley}(X,\Valley)$ constraints
introduced in~\exref{ex:peak} on the same sequence $X = \Tuple{X_1,X_2,\dots,X_\seqlength}$.
\figref{fig:peak-valley-separated} gives the automata for $\constraint{peak}$, $\constraint{valley}$,
and their intersection $\Product$.
We aim to find inequalities of the
form~$\Coeff{}+\Coeff{0}\cdot\seqlength+\Coeff{1}\cdot\Peak+\Coeff{2}\cdot\Valley\geq 0$~that hold for every
\vspace{-0.4cm}
\begin{multicols}{2}
\noindent integer sequence~$X$.
After consuming the signature of~$X$, $\Product$ returns a pair of values $(\Peak,\Valley)$,
which are the number of peaks (resp.\ valleys) in~$X$.
The invariant digraph of
$\Product$ wrt~$v =  \Coeff{} + \Coeff{0} \cdot \seqlength +$ $\Coeff{1} \cdot \Peak + \Coeff{2} \cdot \Valley$
is given in the figure on the right.
As neither of the two automata has any potential registers,
the weights of the arcs of~$\Digraph{\Product}{v}$
do not depend on the signs of~$\Coeff{1}$ and $\Coeff{2}$.
Hence, for every integer sequence~$X$, its weight in~$\Digraph{\Product}{v}$
equals~$\Coeff{}+\Coeff{0}\cdot\seqlength+\Coeff{1}\cdot\Peak+\Coeff{2}\cdot\Valley$. \qedexample

\scalebox{0.8}{
\newcommand{\InitialText}{\scriptsize $\left \{\begin{array}{l}\Peak \leftarrow 0\\ \Valley \leftarrow 0 \end{array}\right\}$}
\begin{tikzpicture}[->,>=stealth',shorten >=1pt,auto,node distance=35mm,semithick,
                    information text/.style={rounded corners=1pt,inner sep=1ex},font=\normalsize]
\begin{scope}
\node[state] (s)                    {$s$};
\node[state] (t) [below left of=s]  {$t$};
\node[state] (r) [below right of=s] {$r$};
\path
 (s)     edge [loop left]                node[left]{$e_0$}      (s)
 (s.330) edge [bend left]                node[right]{$~e_0$}    (r)
 (s.210) edge [bend right]               node[left] {$e_0~$}    (t)
 (r)     edge [loop below]               node[below]{$e_0$}     (r)
 (t)     edge [loop below]               node[below] {$e_0$}    (t)
 (r)     edge                            node[below]{$e_0+e_2$} (t)
 (t)     edge [bend angle=15, bend left] node[above]{$e_0+e_1$} (r);
\node at (0,0.8) {~};
\end{scope}
\end{tikzpicture}
}

\end{multicols}
\end{example}

\begin{theorem}[lower bound on the weight of an accepting sequence]
\label{th:linear-invariants-main}
Consider an accepting sequence~$X=\XSeq$ wrt the register automaton~$\Intersect$,
and a linear function $v = \LinFun$, where~$(\FinalV_1,\FinalV_2,\dots,\FinalV_{\NAut})$
is the vector of values returned by~$\Product$.
Then, the weight of~$X$ in~$\Digraph{\Product}{v}$ is less than or equal to~$\LinFun$.
\end{theorem}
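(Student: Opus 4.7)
The plan is to unfold the weight $W(X)$ of $X$ in $\Digraph{\Product}{v}$ as a sum over the walk, then bound the result\nobreakdash-dependent part by exploiting the \IncrementalAut~property of each $\Aut_i$. By \defref{def:invariant-graph} and \defref{def:sequence-walk}, writing $W(X)$ as the initialisation weight plus the weights of the $\seqlength-1$ arcs traversed and regrouping the terms, one obtains
$W(X)=\Coeff{}+\Coeff{0}\cdot\seqlength+\sum_{i=1}^{\NAut}\Coeff{i}\cdot S_i$,
where $S_i=\beta_i^0+\sum_{\tau}\beta_i^{\tau}$ aggregates the constant parts of the register updates of $\Aut_i$, either restricted to the main register (when $\Coeff{i}\geq 0$) or summed over all registers of $\Aut_i$ (when $\Coeff{i}<0$). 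Comparing with $v=\Coeff{}+\Coeff{0}\cdot\seqlength+\sum_{i=1}^{\NAut}\Coeff{i}\cdot\Result_i$, it then suffices to show that $\Coeff{i}\cdot\Result_i\geq\Coeff{i}\cdot S_i$ for every $i\in[1,\NAut]$.

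A preliminary induction on the length of the run, using conditions~\ref{enum:inc_auto_0}, \ref{enum:inc_auto_1}, and~\ref{enum:inc_auto_3}, establishes that every register of every $\Aut_i$ remains in $\mathbb{N}$ throughout. When $\Coeff{i}\geq 0$, condition~\ref{enum:inc_auto_2b} says that at every transition~$\tau$ the main register increases by $\TrCoeff{\tau}{\acc_i}{0}+\sum_{j<\acc_i}\TrCoeff{\tau}{\acc_i}{j}\cdot A_j$, which is at least $\TrCoeff{\tau}{\acc_i}{0}$ by register non\nobreakdash-negativity. Summing over all transitions and adding the initial value of the main register yields $\Result_i\geq S_i$, hence $\Coeff{i}\cdot\Result_i\geq\Coeff{i}\cdot S_i$.

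The case $\Coeff{i}<0$ is handled by a potential\nobreakdash-function argument. Define $\Phi=\sum_{j=1}^{\acc_i}A_j$. A direct computation of $\Delta\Phi$ at a single transition~$\tau$ gives
$\Delta\Phi=\sum_{j=1}^{\acc_i}\TrCoeff{\tau}{j}{0}+\sum_{j<\acc_i}\bigl(\TrCoeff{\tau}{\acc_i}{j}+\TrCoeff{\tau}{j}{j}-1\bigr)\cdot A_j$.
The bracketed summand equals $0$ when the potential register $A_j$ is cashed into the main register, since then $\TrCoeff{\tau}{\acc_i}{j}=1$ and condition~\ref{enum:inc_auto_3} forces $\TrCoeff{\tau}{j}{j}=0$; otherwise it is non\nobreakdash-positive. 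Hence $\Delta\Phi\leq\beta_i^{\tau}$, and telescoping yields $\Phi^{\mathrm{final}}\leq S_i$. Since $\Result_i=A_{\acc_i}^{\mathrm{final}}\leq\Phi^{\mathrm{final}}$, we obtain $\Result_i\leq S_i$, and because $\Coeff{i}<0$ this again gives $\Coeff{i}\cdot\Result_i\geq\Coeff{i}\cdot S_i$.

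Summing these inequalities over $i$ and adding the common term $\Coeff{}+\Coeff{0}\cdot\seqlength$ completes the proof. The delicate step is the $\Coeff{i}<0$ branch: the clean form of $\Delta\Phi$ relies crucially on the interaction condition in~\ref{enum:inc_auto_3}, which guarantees that whenever a potential register feeds the main register it is simultaneously reset, so its value is \emph{moved} rather than \emph{duplicated}. Without this property, $\Phi$ could grow faster than $\sum_j\TrCoeff{\tau}{j}{0}$ and the upper bound on $\Result_i$ would fail.
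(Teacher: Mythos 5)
Your proof is correct, and its skeleton matches the paper's: unfold the weight into $\Coeff{}+\Coeff{0}\cdot\seqlength+\sum_{i}\Coeff{i}\cdot S_i$, reduce to the per-automaton comparison $\Coeff{i}\cdot\Result_i\geq\Coeff{i}\cdot S_i$, and split on the sign of $\Coeff{i}$. The non-negative case is argued the same way in both proofs. Where you genuinely diverge is the case $\Coeff{i}<0$: the paper disposes of it informally, asserting that since the constant increments of \emph{every} register are charged to the arc weights, and the potential registers' contributions are ``always taken into account,'' the resulting sum over-approximates $\Result_i$. Your potential-function argument with $\Phi=\sum_j A_j$ and the telescoping bound $\Delta\Phi\leq\beta_i^{\tau}$ makes this precise, and in doing so it isolates exactly which clause of the \IncrementalAut~property carries the load: the requirement in Condition~\ref{enum:inc_auto_3} that a potential register feeding the main register must simultaneously have its self-coefficient zeroed, so that its value is transferred rather than duplicated and the bracket $\TrCoeff{\tau}{\acc_i}{j}+\TrCoeff{\tau}{j}{j}-1$ never exceeds $0$. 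The paper's version is shorter but leaves the reader to reconstruct why over-counting the constants bounds the final main-register value from above; yours buys a self-contained, checkable inequality at the cost of a slightly longer computation, and also makes explicit the register non-negativity induction that the paper uses tacitly in both cases.
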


\begin{proof}
Since, when doing the intersection of register automata we do not merge registers, 
the registers of $\Product$ that come from different register automata do not interact,
i.e.~their updates are independent, hence their returned values are also independent.
By definition of the invariant digraph, the weight of any of its arc is
$\Coeff{0}+\sum\limits_{i=1}^{\NAut}\Coeff{i}\cdot\beta^{\Tr}_i$,
where~$\beta^{\Tr}_i$ depends on the sign of~$\Coeff{i}$, and
where~$\Tr$ is the corresponding transition in~$\Product$.
Then, the weight of~$X$ in~$\Digraph{\Product}{v}$ is the constant
$\Coeff{}+\Coeff{0}\cdot(p-1)+\sum \limits_{i=1}^{\NAut}\Coeff{i}\cdot\beta_i^0$
(see~\defref{def:sequence-walk}) plus the weight of the walk of~$X$, which is in total
$\Coeff{}+\Coeff{0}\cdot(p-1)+\sum \limits_{i=1}^{\NAut}\Coeff{i}\cdot\beta_i^0+\Coeff{0}\cdot(\seqlength-p+1)+\sum\limits_{j=1}^{\seqlength-p+1}\sum\limits_{i=1}^{\NAut}\Coeff{i}\cdot\beta^{\Tr_j}_i=\Coeff{}+\Coeff{0}\cdot\seqlength+\sum\limits_{i=1}^{\NAut}
\Coeff{i}\cdot\left(\beta_i^0+\sum\limits_{j=1}^{\seqlength-p+1}\beta^{\Tr_j}_i\right)$,
where $p$ is the arity of the considered signature,
and~$\Tr_1,\Tr_2,\dots\Tr_{\seqlength-p+1}$ is the sequence of transitions of~$\Product$
triggered upon consuming the signature of~$X$.
We now show that the value~$\Coeff{i}\cdot\left(\beta_i^0+\sum\limits_{j=1}^{\seqlength-p+1}\beta^{\Tr_j}_i\right)$
is not greater than~$\Coeff{i}\cdot\FinalV_i$.
This will imply that the weight of the walk of~$X$ in~$\Digraph{\Product}{v}$ is less than or equal to~$v=\LinFun$.

Consider the~$v_i=\Coeff{i}\cdot\FinalV_i$ linear function.
We show that the weight of~$X$ in~$\Digraph{\Product}{v_i}$,
which equals~$\Coeff{i}\cdot\left(\beta_i^0+\sum\limits_{j=1}^{\seqlength-p+1}\beta^{\Tr_j}_i\right)$,
is less than or equal to~$\Coeff{i}\cdot\FinalV_i$.
Depending on the sign of~$\Coeff{i}$ we consider two cases.

\noindent{\bf Case~1: $\mathbf{\Coeff{i}\geq 0}$}.
In this case, the weight of every arc of $\Digraph{\Product}{v_i}$ is~$\Coeff{i}$
multiplied by $\TrCoeff{\Tr}{\acc_i}{0}$, where~$\Tr$ is the corresponding transition
in~$\Product$, and~$\acc_i$ is the main register of~$\Aut_i$
(see~\caseref{cond:definition-graph-pos} of \defref{def:invariant-graph}).
If, on transition $\Tr$, some potential registers of~$\Aut_i$
are incremented by a positive constant, the real contribution
of the register updates on this transition to $\FinalV_i$ is at
least~$\TrCoeff{\Tr}{\acc_i}{0}$ since $\Coeff{i}\geq 0$.
The same reasoning applies to the contribution of the initial values
of the potential registers to the final value $\FinalV_i$.
Since this contribution is non-negative, it is ignored, and $\beta_i^0 = \InitialV{j}$
(see~\caseref{cond:definition-graph-pos} of \defref{def:sequence-walk}).
Hence $\Coeff{i}\cdot\left(\beta_i^0+\sum\limits_{j=1}^{\seqlength-p+1}\beta^{\Tr_j}_i\right)=\Coeff{i}\cdot\left(\InitialV{\acc_i}+\sum\limits_{j=1}^{\seqlength-p+1}\TrCoeff{\Tr}{\acc_i}{0}\right)\leq\Coeff{i}\cdot\FinalV_i$.

\noindent{\bf Case~2: $\mathbf{\Coeff{i}<0}$}.
In this case, the weight of every arc of~$\Digraph{\Product}{v_i}$ is~$\Coeff{i}$
multiplied by the sum of the non-negative constants, which come from the updates
of \emph{every} register of~$\Aut_i$
(see~\caseref{cond:definition-sequence-weight-neg} of \defref{def:invariant-graph}).
The contribution of the potential registers is always taken into account, and
since~$\Coeff{i}<0$, it is always negative.
The same reasoning applies to the contribution of the initial values of the potential registers
to the returned value $\FinalV_i$.
To obtain a lower bound on~$v$, observe that the initial values of the potential registers are
non-negative and that $\Coeff{i}<0$; therefore we assume that the initial values of the potential
registers always contribute to~$\FinalV_i$ (see~\caseref{cond:definition-graph-neg} of \defref{def:sequence-walk}).
Hence $\Coeff{i}\cdot(\beta_i^0+\sum\limits_{j=1}^{\seqlength-p+1}\beta^{\Tr_j}_i)\leq\Coeff{i}\cdot\FinalV_i$.
\hspace{\fill} \qed
\end{proof}

Note that, if all the considered register automata $\Aut_1,\Aut_2,\dots,\Aut_{\NAut}$
do not have potential registers, then for every accepting sequence~$X=\XSeq$
wrt~$\Intersect$ and for any linear function~$v=\LinFun$, the weight of~$X$
in~$\Digraph{\Product}{v}$ is \emph{equal} to~$v$. 
If there is at least one potential register for at least one register automaton~$\Aut_i$,
then there may exist an accepting sequence~$X=\XSeq$ wrt~$\Intersect$ whose weight
in~$\Digraph{\Product}{v}$ is strictly less than~$v$.

\subsection{Finding the Relative Coefficients of the Linear Invariant}
\label{sec:relative-terms}
We now focus on finding the relative coefficients~$\Coeff{0},\Coeff{1},\dots,\Coeff{\NAut}$ of
the linear invariant~$v=\LinFun\geq0$ such that, after consuming the signature of any accepting sequence
by the register automaton~$\Intersect$, the value of~$v$ is non-negative. 

For any accepting sequence~$X$ wrt~$\Product$, by~\thref{th:linear-invariants-main},
we have that the weight~$\Weight$ of~$X$ in~$\Digraph{\Product}{v}$ is less than or equal to~$v$.
Recall that $\Weight$ consists of a constant part, and of a part that depends on~$X$,
which involves the coefficients $\Coeff{0},\Coeff{1},\dots,\Coeff{\NAut}$; thus,
these coefficients must be chosen in a way that there exists a constant~$C$
such that~$\Weight \geq C$, and $C$ does not depend on~$X$.
This is only possible when~$\Digraph{\Product}{v}$ does not contain \emph{any} negative cycles.
Let~$\Cycles$ denote the set of all simple circuits of~$\Digraph{\Product}{v}$,
and let~$\Weight_e$ denote the weight of an arc~$e$ of~$\Digraph{\Product}{v}$.
In order to prevent negative cycles in $\Digraph{\Product}{v}$, we solve the following
minimisation problem, parameterised by $(s_0,s_1,\dots s_{\NAut})$,
the signs of~$\Coeff{0},\Coeff{1},\dots,\Coeff{\NAut}$:
\begin{IEEEeqnarray}{l.L.l}
\textnormal{minimise }  & \sum\limits_{c\in\Cycles} W_c+\sum\limits_{i=1}^{\NAut}|\Coeff{i}| \label{fun:obj} \\
\textnormal{subject to} & W_c=\sum\limits_{e\in c}\Weight_e &~~\forall c\in\Cycles \label{cond:1} \\
                        & W_c\geq 0 &~~\forall c\in\Cycles \label{cond:2}  \\
                        & s_i=\textnormal{`$-$'}\Rightarrow\Coeff{i}\leq 0,~~
                          s_i=\textnormal{`$+$'}\Rightarrow\Coeff{i}\geq 0  &~~\forall i \in [0, \NAut] \label{cond:3} \\
                        & \Coeff{i}\neq 0 &~~\forall i\in[1,\NAut] \label{cond:5}
\end{IEEEeqnarray}

In order to obtain the coefficients~$\Coeff{0},\Coeff{1},\dots,\Coeff{\NAut}$
so that~$\Digraph{\Product}{v}$ does not contain any negative cycles, it is
enough to find a solution to the satisfaction problem (\ref{cond:1})-(\ref{cond:5}).
Minimisation is required to obtain linear invariants that eliminate as many infeasible values
of~$(\FinalV_1,\FinalV_2,\dots,\FinalV_{\NAut})$ as possible.
Within the objective function (\ref{fun:obj}), the term $\sum\limits_{c\in\Cycles} W_c$
is for minimising the weight of every simple circuit,
while the term $\sum\limits_{i=1}^{\NAut} |\Coeff{i}|$ is for obtaining the coefficients with the smallest absolute value.
By changing the sign vector~$(s_0,s_1,\dots s_{\NAut})$ we obtain different linear invariants.

\begin{example}[finding the relative coefficients]
\label{ex:peak-valley-relative-terms}
Consider $\constraint{nb\_peak}(X, \Peak)$ and $\constraint{nb\_valley}(X,$ $\Valley)$
with~$X$ being a time series of length $\seqlength$.
The invariant digraph of the intersection of the register automata for the~$\constraint{nb\_peak}$ and
$\constraint{nb\_valley}$ constraints wrt~$v=\Coeff{}+\Coeff{0}\cdot\seqlength+\Coeff{1}\cdot\Peak+\Coeff{2}\cdot\Valley$
was given in~\exref{ex:peak-valley-automaton}.
This digraph has four simple circuits, namely $s-s$, $t-t$, $r-r$, and $r-t-r$,
which are labelled by~$1$, $2$, $3$ and~$4$, respectively.
Then, the minimisation problem for finding the relative coefficients of the linear invariant $v\geq 0$,
parameterised by $(s_0,s_1,s_2)$, the signs of $\Coeff{0}$, $\Coeff{1}$ and $\Coeff{2}$,
is the following:

\begin{IEEEeqnarray}{l.L.l}
  \textnormal{minimise } &  \sum\limits_{j=1}^{4} W_j+\sum\limits_{i=0}^2 |\Coeff{i}| \nonumber \\
  \textnormal{subject to } & W_j=\Coeff{0}, &~~\forall j\in [1,3] \nonumber \\
                           & W_{4}=\Coeff{0}+\Coeff{1}+\Coeff{2} \nonumber \\
                           & W_j\geq 0 &~~\forall j\in[1,4] \label{cond:ex} \\
                           & s_i=\textnormal{`$-$'}\Rightarrow\Coeff{i}\leq 0,~~ s_i=\textnormal{`$+$'}\Rightarrow\Coeff{i}\geq 0 &~~\forall i\in [0,2] \nonumber \\
                           & \Coeff{i}\neq 0 &~~\forall i\in [1,2] \nonumber
\end{IEEEeqnarray}

Note that the value of~$\Coeff{0}$ must be non-negative otherwise (\ref{cond:ex})~cannot be satisfied for $j\in\Curly{1,2,3}$.
Hence we consider only the combinations of signs of the form~$(\textnormal{`$+$'},s_1,s_2)$
with $s_1$ and $s_2$ being either `$-$' or `$+$'.
The following table gives the optimal solution of the minimisation problem for the considered combinations of signs:

\vspace{0.2cm}
\setlength{\tabcolsep}{1em}
\hspace*{1.5cm}\begin{tabular}{ccccc}
\toprule
$(s_0, s_1, s_2)$                 & $(+,-,-)$   & $(+,-,+)$  & $(+,+,-)$  & $(+,+,+)$ \\

$(\Coeff{0},\Coeff{1},\Coeff{2})$ & $(1,-1,-1)$ & $(0,-1,1)$ & $(0,1,-1)$ & $(0,1,1)$ \\ 

\bottomrule
\end{tabular}
\qedexample
\end{example}

\subsection{Finding the Constant Term of the Linear Invariant}
\label{sec:absolute-term}
Finally, we focus on finding the constant term~$\Coeff{}$ of the linear invariant $v=\LinFun\geq0$,
when the coefficients $\Coeff{0},\Coeff{1},\dots,\Coeff{\NAut}$ are known, and
when the digraph of the register automaton $\Intersect$ wrt $v$ does not contain any negative cycles.
By~\thref{th:linear-invariants-main}, the weight of any accepting sequence~$X$ wrt~$\Product$
in~$\Digraph{\Product}{v}$ is less than or equal to~$v$, then if the weight of~$X$ is non-negative,
it implies that~$v$ is also non-negative.
Since the invariant digraph~$\Digraph{\Product}{v}$ does not contain any negative cycles,
then the weight of~$X$ cannot be smaller than some constant~$C$.
Hence it suffices to find this constant and set the constant term~$\Coeff{}$ to $-C$.
The value of~$C$ is computed as the constant~$\Coeff{0}\cdot(p-1)-\sum\limits_{i=1}^{\NAut} \beta_i^0$
(see~\defref{def:sequence-walk}) plus the shortest path length from the node of~$\Digraph{\Product}{v}$
corresponding to the initial state of~$\Product$ to all the nodes of~$\Digraph{\Product}{v}$
corresponding to the accepting states of~$\Product$.

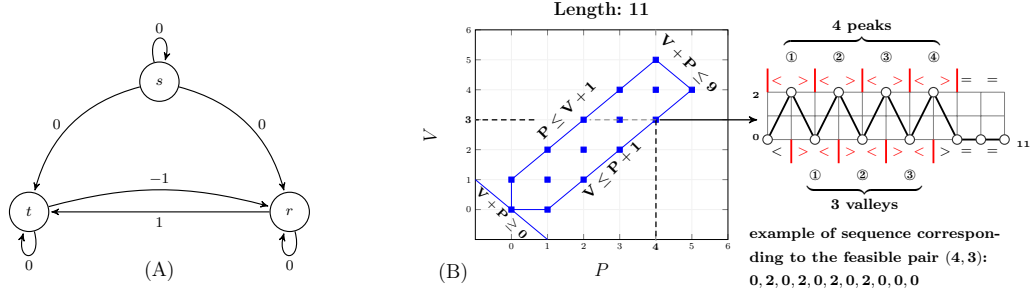
\begin{figure}[!t]
\scalebox{0.7}{
\begin{tikzpicture}[font=\scriptsize]
\begin{scope}[yshift=9cm,
              ->,>=stealth',shorten >=1pt,auto,node distance=35mm,semithick,
              information text/.style={rounded corners=1pt,inner sep=1ex}]
\node[state] (s) {\small$s$};
\node[state] (t) [below left of=s]  {\small$t$};
\node[state] (r) [below right of=s] {\small$r$};
\path
 (s) edge [loop above]               node[above]{\small$0$}  (s)
 (s) edge [bend left]                node[right]{\small$~0$} (r)
 (s) edge [bend right]               node[left] {\small$0~$} (t)
 (r) edge [loop below]               node[below]{\small$0$}  (r)
 (t) edge [loop below]               node[below]{\small$0$}  (t)
 (r) edge                            node[below]{\small$1$}  (t)
 (t) edge [bend angle=15, bend left] node[above]{\small$-1$} (r);
\node at (0,-3.6) {\large (A)};
\end{scope}
\begin{scope}[xshift=6cm, yshift=6cm, scale = 0.7]
\begin{axis}[xlabel={\Large \bf $P$},
    ylabel={\Large \bf $V$},
    title={\Large \bf Length: 11},
    minor tick num=1,
    grid=both,grid style={line width=.1pt, draw=gray!10},
    major grid style={line width=.1pt,draw=gray!10},
    xmin=-1,xmax=6,
    ymin=-1,ymax=6,
    xtick={0,1,2,3,5,6},
    ytick={0,1,2,4,5,6},
    extra x ticks={4},
    extra x tick style={xticklabel style={font=\boldmath}},
    extra y ticks={3},
    extra y tick style={yticklabel style={font=\boldmath}},
    scatter/classes={
     c1={mark=square*,blue!100},
     c2={mark=triangle*,blue!90},
     c3={mark=triangle*,blue!80},
     c4={mark=diamond*,blue!70},
     c5={mark=pentagon*,blue!60},
     c6={mark=*,blue!40},
     c7={mark=x,blue!30},
     c8={mark=+,blue!20},
     c9={mark=o,blue!10},
     c10={mark=o,green!30},
     c11={mark=o,green!20},
     c12={mark=o,green!10}}]
    \addplot[scatter,only marks,scatter src=explicit symbolic]
    coordinates {(0,0) [c1]
                 (1,0) [c1]
                 (1,1) [c1]
                 (2,1) [c1]
                 (2,2) [c1]
                 (3,2) [c1]
                 (3,3) [c1]
                 (4,3) [c1]
                 (4,4) [c1]
                 (5,4) [c1]
                 (0,1) [c1]
                 (1,2) [c1]
                 (2,3) [c1]
                 (3,4) [c1]
                 (4,5) [c1]};
    \addplot[blue]
    coordinates {(0,0)
                 (1,0)
                 (5,4)
                 (4,5)
                 (0,1)
                 (0,0)};
 \addplot[blue]
    coordinates {(1,-1)
                 (-1,1)};
\end{axis}
\draw[line width=0.7pt,densely dashed] (4.9,3.25) -- (4.9,-0.02);
\draw[line width=0.7pt,densely dashed] (1.6,3.25) -- (0,3.25);
\draw[line width=0.5pt,densely dashed,color=gray] (4.9,3.25) -- (3.1,3.25);
\node at (2.5,3.5) [rotate=45]        (Cut1) {\small $\bf\Peak\leq\Valley+1$};
\node at (3.5,2.1) [below,rotate=45]  (Cut2) {\small $\bf\Valley\leq\Peak+1$};
\node at (5.8+0.2,5.1-0.1) [below,rotate=-45] (Cut3) {\small $\bf\Valley+\Peak\leq9$};
\node at (0.86,0.96) [below,rotate=-45] (Cut4) {\scalebox{0.96}{ $\bf\Valley+\Peak\geq0$}};
\node at (-0.6,-0.9) {\large(B)};
\draw[line width=0.7pt,->,>=stealth',shorten >=1pt] (5,3.25) -- (7.7,3.25);
\end{scope}
\begin{scope}[xshift=11.1cm, yshift=7.9cm, scale = 0.45]
\draw[step=1cm,gray,very thin] (1,0) grid (11,2);
\coordinate [label=left:{\tiny$\mathbf{11}$}] (Xmax) at (11+1.33,0-0.2);
\coordinate [label=left:{\tiny{\color{black}$\mathbf{0}$}}] (Ymin) at (0.9,0+0.1);
\coordinate [label=left:{\tiny{\color{black}$\mathbf{2}$}}] (Ymax) at (0.9,2-0.1);
\node at (1.4,2.5) {$\color{red}<$};
\node at (2.5,2.5) {$\color{red}>$};
\node at (3.4,2.5) {$\color{red}<$};
\node at (4.5,2.5) {$\color{red}>$};
\node at (5.4,2.5) {$\color{red}<$};
\node at (6.5,2.5) {$\color{red}>$};
\node at (7.4,2.5) {$\color{red}<$};
\node at (8.5,2.5) {$\color{red}>$};
\node at (9.4,2.5) {$=$};
\node at (10.5,2.5) {$=$};
\draw[draw=red,line width=1pt] (1,2) -- (1,3);
\draw[draw=red,line width=1pt] (3,2) -- (3,3);
\draw[draw=red,line width=1pt] (5,2) -- (5,3);
\draw[draw=red,line width=1pt] (7,2) -- (7,3);
\draw[draw=red,line width=1pt] (9,2) -- (9,3);
\coordinate [label=left:{\footnotesize\bf{\color{black}\ding{172}}}] (peak1) at (2.0+0.5,3+0.5);
\coordinate [label=left:{\footnotesize\bf{\color{black}\ding{173}}}] (peak2) at (4.0+0.5,3+0.5);
\coordinate [label=left:{\footnotesize\bf{\color{black}\ding{174}}}] (peak3) at (6.0+0.5,3+0.5);
\coordinate [label=left:{\footnotesize\bf{\color{black}\ding{175}}}] (peak4) at (8.0+0.5,3+0.5);
\coordinate [label=left:$\footnotesize\color{black}\overbrace{\hspace*{84pt}}^{\displaystyle\text{\bf 4 peaks}}$] (P) at (8.5,4.5);
\node at (1.4,-0.5) {$<$};
\node at (2.5,-0.5) {$\color{red}>$};
\node at (3.4,-0.5) {$\color{red}<$};
\node at (4.5,-0.5) {$\color{red}>$};
\node at (5.4,-0.5) {$\color{red}<$};
\node at (6.5,-0.5) {$\color{red}>$};
\node at (7.4,-0.5) {$\color{red}<$};
\node at (8.5,-0.5) {$>$};
\node at (9.4,-0.5) {$=$};
\node at (10.5,-0.5) {$=$};
\draw[draw=red,line width=1pt] (2,0) -- (2,-1);
\draw[draw=red,line width=1pt] (4,0) -- (4,-1);
\draw[draw=red,line width=1pt] (6,0) -- (6,-1);
\draw[draw=red,line width=1pt] (8,0) -- (8,-1);
\coordinate [label=left:{\footnotesize\bf{\color{black}\ding{172}}}] (valley1) at (3.0+0.5,0-1.5);
\coordinate [label=left:{\footnotesize\bf{\color{black}\ding{173}}}] (valley2) at (5.0+0.5,0-1.5);
\coordinate [label=left:{\footnotesize\bf{\color{black}\ding{174}}}] (valley3) at (7.0+0.5,0-1.5);
\coordinate [label=left:$\footnotesize\color{black}\underbrace{\hspace*{62pt}}_{\displaystyle\text{\bf 3 valleys}}$] (V) at (7.75,-2.5);
\draw[draw=black,line width=1pt,rounded corners=1pt] (1,0) -- (2,2) -- (3,0) -- (4,2) -- (5,0) -- (6,2) -- (7,0) -- (8,2) -- (9,0) -- (10,0) -- (11,0);
\coordinate (c1)  at ( 1,0); \filldraw[fill=white,draw=black!80,line width=0.6pt] (c1)  circle (0.2);
\coordinate (c2)  at ( 2,2); \filldraw[fill=white,draw=black!80,line width=0.6pt] (c2)  circle (0.2);
\coordinate (c3)  at ( 3,0); \filldraw[fill=white,draw=black!80,line width=0.6pt] (c3)  circle (0.2);
\coordinate (c4)  at ( 4,2); \filldraw[fill=white,draw=black!80,line width=0.6pt] (c4)  circle (0.2);
\coordinate (c5)  at ( 5,0); \filldraw[fill=white,draw=black!80,line width=0.6pt] (c5)  circle (0.2);
\coordinate (c6)  at ( 6,2); \filldraw[fill=white,draw=black!80,line width=0.6pt] (c6)  circle (0.2);
\coordinate (c7)  at ( 7,0); \filldraw[fill=white,draw=black!80,line width=0.6pt] (c7)  circle (0.2);
\coordinate (c8)  at ( 8,2); \filldraw[fill=white,draw=black!80,line width=0.6pt] (c8)  circle (0.2);
\coordinate (c9)  at ( 9,0); \filldraw[fill=white,draw=black!80,line width=0.6pt] (c9)  circle (0.2);
\coordinate (c10) at (10,0); \filldraw[fill=white,draw=black!80,line width=0.6pt] (c10) circle (0.2);
\coordinate (c11) at (11,0); \filldraw[fill=white,draw=black!80,line width=0.6pt] (c11) circle (0.2);
\node[anchor=west] at (0,-4) {\bf \footnotesize example of sequence correspon-};
\node[anchor=west] at (0,-5) {\bf \footnotesize ding to the feasible pair $\mathbf{(4,3)}$:};
\node[anchor=west] at (0,-6) {\footnotesize$\mathbf{0,2,0,2,0,2,0,2,0,0,0}$};
\end{scope}
\end{tikzpicture}
}

\caption[(A)~The invariant digraph of the register automata for two time-series constraints;
         (B)~The set of feasible values of the result variables of two constraints]
{\label{fig:concrete-graph-polytope}
(A)~The invariant digraph of the register automata for the $\constraint{nb\_peak}$ and
the $\constraint{nb\_valley}$ time-series constraints;
(B)~The set of feasible values of the result variables $\Peak$ and $\Valley$ of the $\constraint{nb\_peak}$ and
the $\constraint{nb\_valley}$ time-series constraints, respectively, for sequences of length~$11$.}
\end{figure}

\begin{example}[obtaining invariants]
\label{ex:invariants}
Consider~$\constraint{nb\_peak}(X, \Peak)$ and $\constraint{nb\_valley}(X,\Valley)$
with~$X$ being a time series of length $\seqlength$ such that \final{$\seqlength \geq 2$}.
In~\exref{ex:peak-valley-relative-terms}, we found four vectors for
the relative coefficients~$\Coeff{0}$, $\Coeff{1}$, $\Coeff{2}$ of the
linear invariant~$\Coeff{}+\Coeff{0}\cdot\seqlength+\Coeff{1}\cdot\Peak+\Coeff{2}\cdot\Valley\geq 0$.
For every found vector for the relative coefficients~$(\Coeff{0},\Coeff{1},\Coeff{2})$,
we obtain a weighted digraph, whose weights now are integer numbers.
For example, for the vector $(\Coeff{0},\Coeff{1},\Coeff{2})=(0,-1,1)$, the obtained digraph
is given in Part~(A) of~\figref{fig:concrete-graph-polytope}.
We compute the length of the shortest path from the node~$s$, which corresponds to the initial state
of the register automaton in Part~(C) of~\figref{fig:peak-valley-separated} to every node corresponding
to an accepting state of the register automaton in Part~(C) of~\figref{fig:peak-valley-separated}.
The length of the shortest path from~$s$ to~$s$ is~$0$, from~$s$ to~$t$ is $0$, and from~$s$ to~$r$ is $-1$.
The minimum of these values is~$-1$, hence the constant term~$\Coeff{}$ equals~$-(0+(-1))=1$.
The obtained linear invariant is~$\Peak\leq\Valley+1$.

In a similar way, we find the constant terms for the other found vectors of the relative coefficients
$(\Coeff{0},\Coeff{1},\Coeff{2})$, and obtain three other linear invariants:
$\Valley\leq\Peak+1$,
$\Valley+\Peak\leq\seqlength-2$,
$\Valley+\Peak\geq 0$.

Part~(B) of \figref{fig:concrete-graph-polytope} shows the polytope of feasible points
$(\Peak,\Valley)$ when $\seqlength$ is $11$.
Observe that three of the four linear invariants found are facets of the convex hull of this polytope.
\qedexample
\end{example}

The next example illustrates how the method presented in this section can also be used
for generating linear invariants for non-time-series constraints.

\begin{example}[generating invariants for non-time-series constraints]
Consider a sequence of integer variables~$X=\XSeq$ with every~$X_i$ ranging over~$[0,3]$,
four $\constraint{among}$~\cite{BeldiceanuContejean94} constraints that restrict
the variables~$\FinalV_0$, $\FinalV_1$, $\FinalV_2$, $\FinalV_3$ to be
the number of occurrences of values $0,1,2,3$, respectively,
in~$X$, as well as the four corresponding $\constraint{stretch}$~\cite{Pesant:stretch}
constraints restricting the stretch length in~$X$ to be respectively
in~$[1,4]$, $[2,5]$, $[3,5]$, and~$[1,2]$.
In addition assume that value~$2$ (resp.\ $1$) cannot immediately
follow a~$3$ (resp.\ $2$). 
The intersection of the corresponding register automata has~$17$ states
and allows one to generate~$16$ linear invariants, one of them
being~$2+\seqlength+\FinalV_0+\FinalV_1-\FinalV_2-2\cdot\FinalV_3\geq 0$.
Since the sum of all~$\FinalV_i$ is $\seqlength$, this linear invariant can
be simplified to $2+2\cdot\seqlength-2\cdot\FinalV_2-3\cdot\FinalV_3\geq 0$,
which is equivalent to $2\cdot(\FinalV_2+\FinalV_3-\seqlength)\leq 2-\FinalV_3$. 
This inequality means that if~$X$ consists only of the values~$2$ and~$3$,
i.e.~$\FinalV_2+\FinalV_3-\seqlength=0$, then $\FinalV_3\leq 2$,
which represents the conjunction of the conditions that the stretch length
of~$\FinalV_3\in [1,2]$ and $(X_i=3)\Rightarrow (X_{i+1}\neq 2)$.
\qedexample
\end{example}


\subsection{Improving the Generated Linear Invariants}
\label{sec:linear-invariants-improving}
When at least one of the register automata $\Aut_1,\Aut_2,\dots,\Aut_{\NAut}$
has at least one potential register, then there may exist an accepting sequence~$X=\XSeq$
wrt~$\Intersect$ such that the weight of~$X$ in the invariant digraph~$\Digraph{\Product}{v}$
is strictly less than~$v=\LinFun$.
This may lead to weaker invariants and \exref{ex:weak-invariants} illustrates such a situation.

\begin{example}[weak invariant]
\label{ex:weak-invariants}
Given the proper plateau regular expression~$\ProperPlainPattern$,
consider a conjunction of $\constraint{nb\_proper\_plateau}(X,\Result_1)$
and $\constraint{sum\_width}$ $\constraint{\_proper\_plateau}(X,\Result_2)$
imposed on the same time series~$X$ of length~$\seqlength$, and a linear function
$v=\Coeff{}+\Coeff{0}\cdot\seqlength+\Coeff{1}\cdot\FinalV_1+\Coeff{2}\cdot\FinalV_2$.
The intersection of the register automata for these two constraints is given in
Part~(A) of \figref{fig:bad-intersections1}.
By inspection we can derive the invariant $\FinalV_2\geq 2\cdot\FinalV_1$,
which cannot be generated by the method described in Sections~\ref{sec:constructing-automaton},
\ref{sec:relative-terms} and~\ref{sec:absolute-term}, because of the following reason:
when $\Coeff{0}=0$, $\Coeff{1}=-2$, and $\Coeff{2}=1$,
the weights of the arcs from~$a$ to~$b$ and from~$b$ to~$c$ are both~$\Coeff{0}$,
and the weight of the arcs from~$c$ to~$a$ is $\Coeff{0}+\Coeff{1}+\Coeff{2}$, and
thus the weight of the cycle~$a-b-c-a$ is~$3\cdot\Coeff{0}+\Coeff{1}+\Coeff{2}=-1$.

Just before triggering the transition from $c$ to $a$, the value of the register~$D_2$
is at least~$1$ since the register automaton had triggered the transition from~$b$ to~$c$
before, which incremented~$D_2$.
Let us modify the intersection~$\Product$ so that the register~$D_2$ is not updated on
the transition from~$b$ to~$c$, and the register~$R_2$ is updated as~$R_2+D_2+2$ on
the transition from~$c$ to~$a$.
The modified register automaton~$\DelayedProduct$ recognises the same set of signatures
as~$\Product$, and after consuming any accepting sequence wrt~$\Product$,
the register automaton~$\DelayedProduct$ returns the same tuple of final
values as~$\Product$.
In addition, the weight of the cycle~$a-b-c-a$ in~$\DelayedProduct$ is equal
to~$3\cdot\Coeff{0}+\Coeff{1}+2\cdot\Coeff{2}$, which is~$0$ when~$\Coeff{0}=0$,
$\Coeff{1}=-2$, and~$\Coeff{2}=1$.
Hence, the invariant $\FinalV_2\geq 2\cdot\FinalV_1$ can be generated after
some modifications of the intersection~$\Product$.
\qedexample
\end{example}

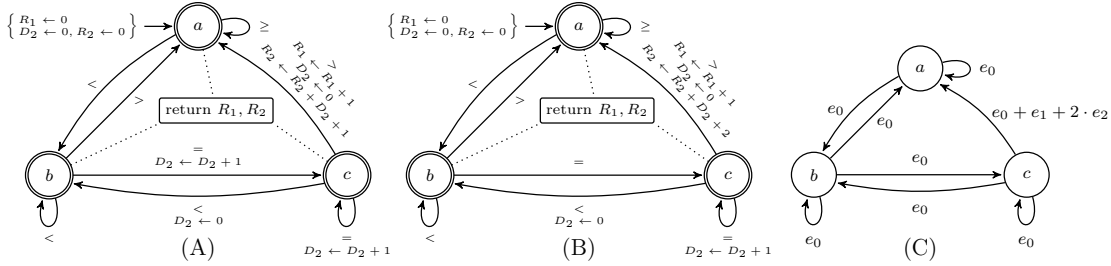
\begin{figure}[!h]
\scalebox{0.8}{
\begin{tikzpicture}[->,>=stealth',shorten >=1pt,auto,node distance=35mm,semithick,
                    information text/.style={rounded corners=1pt,inner sep=1ex}]
\begin{scope}
\node[initial,initial where=left,initial text={\tiny $\left
  \{\begin{array}{l} R_1 \leftarrow 0\\ D_2 \leftarrow 0, R_2 \leftarrow 0 \end{array}\right\}$},initial distance=5mm,state,accepting] (a) {$a$};
\node[state,accepting] (c) [below right of=a]{$c$};
\node[state,accepting] (b) [below left  of=a]{$b$};
\node[rectangle,draw,rounded corners=1pt] at (0.3,-1.4) (ctr) {\scriptsize return $\Result_1,\Result_2$};
\path
 (a)     edge [loop right]    node[right]                {\tiny$\begin{array}{c}\geq                        \end{array}$} (a)
 (a.200) edge [bend right=15] node[left]                 {\tiny$\begin{array}{c}<                           \end{array}$} (b.70)
 (b)     edge [loop below]    node                       {\tiny$\begin{array}{c}<                           \end{array}$} (b)
 (b)     edge                 node[right]                {\tiny$\begin{array}{c}>                           \end{array}$} (a)
 (b)     edge                 node[above]                {\tiny$\begin{array}{c}= \\ D_2 \leftarrow D_2 + 1 \end{array}$} (c)
 (c)     edge [loop below]    node                       {\tiny$\begin{array}{c}= \\ D_2 \leftarrow D_2 + 1 \end{array}$} (c)
 (c.200) edge [bend left=10]  node[below]                {\tiny$\begin{array}{c}< \\ D_2 \leftarrow 0 \\    \end{array}$} (b.340)
 (c)     edge [bend right=15] node[above,pos=0.4,sloped] {\tiny$\begin{array}{c}> \\ R_1 \leftarrow R_1 + 1 \\
                                                                                     D_2 \leftarrow 0 \\
                                                                                     R_2 \leftarrow R_2 + D_2 + 1 \\
                                                                                                            \end{array}$}(a.320);
\draw [dotted,-] (a) -- (ctr);
\draw [dotted,-] (b) -- (ctr.south west);
\draw [dotted,-] (c) -- (ctr.south east);
\node at (0,-3.7cm) {\large (A)};
\end{scope}
\begin{scope}[xshift=6.33cm]
\node[initial,initial where=left,initial text={\tiny $\left
  \{\begin{array}{l} R_1 \leftarrow 0\\ D_2 \leftarrow 0, R_2 \leftarrow 0 \end{array}\right\}$},initial distance=5mm,state,accepting] (a) {$a$};
\node[state,accepting] (c) [below right of=a]{$c$};
\node[state,accepting] (b) [below left  of=a]{$b$};
\node[rectangle,draw,rounded corners=1pt] at (0.3,-1.4) (ctr) {\scriptsize return $\Result_1,\Result_2$};
\path
 (a)     edge [loop right]    node[right]                {\tiny$\begin{array}{c}\geq                        \end{array}$} (a)
 (a.200) edge [bend right=15] node[left]                 {\tiny$\begin{array}{c}<                           \end{array}$} (b.70)
 (b)     edge [loop below]    node                       {\tiny$\begin{array}{c}<                           \end{array}$} (b)
 (b)     edge                 node[right]                {\tiny$\begin{array}{c}>                           \end{array}$} (a)
 (b)     edge                 node[above]                {\tiny$\begin{array}{c}=                           \end{array}$} (c)
 (c)     edge [loop below]    node                       {\tiny$\begin{array}{c}= \\ D_2 \leftarrow D_2 + 1 \end{array}$} (c)
 (c.200) edge [bend left=10]  node[below]                {\tiny$\begin{array}{c}< \\ D_2 \leftarrow 0 \\    \end{array}$} (b.340)
 (c)     edge [bend right=15] node[above,pos=0.4,sloped] {\tiny$\begin{array}{c}> \\ R_1 \leftarrow R_1 + 1 \\
                                                                                     D_2 \leftarrow 0 \\
                                                                                     R_2 \leftarrow R_2 + D_2 + 2 \\
                                                                                                            \end{array}$} (a.320);
\draw [dotted,-] (a) -- (ctr);
\draw [dotted,-] (b) -- (ctr.south west);
\draw [dotted,-] (c) -- (ctr.south east);
\node at (0,-3.7cm) {\large (B)};
\end{scope}
\begin{scope}[xshift=12cm,node distance=25mm,yshift=-0.7cm]
\node[state] (a) {$a$};
\node[state] (c) [below right of=a]{$c$};
\node[state] (b) [below left  of=a]{$b$};
\path
 (a)     edge [loop right]    node[right] {\footnotesize$\begin{array}{c} \Coeff{0}                           \end{array}$} (a)
 (a.200) edge [bend right=15] node[left]  {\footnotesize$\begin{array}{c} \Coeff{0}                           \end{array}$} (b.70)
 (b)     edge [loop below]    node        {\footnotesize$\begin{array}{c} \Coeff{0}                           \end{array}$} (b)
 (b)     edge                 node[right] {\footnotesize$\begin{array}{c} \Coeff{0}                           \end{array}$} (a)
 (b)     edge                 node[above] {\footnotesize$\begin{array}{c} \Coeff{0}                           \end{array}$} (c)
 (c)     edge [loop below]    node        {\footnotesize$\begin{array}{c} \Coeff{0}                           \end{array}$} (c)
 (c.200) edge [bend left=10]  node[below] {\footnotesize$\begin{array}{c} \Coeff{0}                           \end{array}$} (b.340)
 (c)     edge [bend right=15] node[right] {\footnotesize$\begin{array}{c} \Coeff{0}+\Coeff{1}+2\cdot\Coeff{2} \end{array}$} (a.320);
\node at (0,-3cm) {\large (C)};
\end{scope}
\end{tikzpicture}}
\caption[Intersection of the register automata for two time-series constraints, 
for which our method does not generate sharp linear invariants]{(A)~Intersection\index{intersection} of register automata for $\constraint{nb\_proper\_plateau}$
and $\constraint{sum\_width\_}$ $\constraint{proper\_plateau}$, for which the method described in Sections~\ref{sec:constructing-automaton},
\ref{sec:relative-terms} and~\ref{sec:absolute-term} does not generate facet\nobreakdash-defining invariants;
(B)~Delayed intersection obtained from the intersection in~(A);
(C)~Invariant digraph obtained from the delayed intersection in~(B).\label{fig:bad-intersections1}}
\end{figure}

To handle the issue presented in \exref{ex:weak-invariants} we introduce
a \emph{preprocessing technique} of the intersection of register automata.
The technique relies on the notion of \emph{delay} of a potential register~$\Acc$ at
a state~$q$ of the intersection~$\Product$, which is a lower bound on the value of~$\Acc$
when a sequence of triggered transitions of the register automaton ends up in state~$q$.
Intuitively, we can change the updates of some registers in a way that for any accepting
sequence wrt~$\Product$, the returned tuple of values does not change, but the arcs of
the invariant digraph obtained from the modified intersection~$\DelayedProduct$ will have
larger weights.
The modified intersection that we obtain satisfies the three following conditions:

\begin{enumerate}

\item
\label{cond:delayed-product-1}
The set of accepting sequences wrt~$\Product$ coincides with the set of
accepting sequences wrt~$\DelayedProduct$.

\item
\label{cond:delayed-product-2}
For every accepting sequence~$X$ wrt~$\Product$, the register automata~$\Product$
and~$\DelayedProduct$ return the same tuple of values.

\item
\label{cond:delayed-product-3}
For any accepting sequence~$X$, the weight of~$X$ in~$\Digraph{\DelayedProduct}{v}$
is greater than or equal to the weight of~$X$ in~$\Digraph{\Product}{v}$, where~$v$
is~$\LinFun$.
\end{enumerate}

By \condref{cond:delayed-product-3}, since for every~$X$, the weight of~$X$
in~$\Digraph{\DelayedProduct}{v}$ is greater than or equal to the weight
of~$X$ in $\Digraph{\Product}{v}$, the weight of every simple circuit
in~$X$ may also increase, which may lead to stronger invariants.
To obtain such register automaton $\DelayedProduct$, we first introduce in
\defref{def:delay} the notion of \emph{list of delays} of a state~$\State$ of
the intersection~$\Product$, denoted by~$\Delay{\State}$.
An element~$i$ of~$\Delay{\State}$ is an array whose values correspond to
the potential registers of~$\Aut_i$.
The value~$j$ of this array represents a lower bound on the value of
the register of $\Product$ corresponding the potential register~$j$
of~$\Aut_i$ when the register automaton~$\Product$ arrives to the
state~$\State$.
Further, based on this notion, in \defref{def:delayed-product},
we introduce the notion of \emph{delayed intersection}.
Finally, in \thref{th:delayed-product} we show that the delayed
intersection satisfies Conditions~\ref{cond:delayed-product-1},
\ref{cond:delayed-product-2}, and~\ref{cond:delayed-product-3}.  

\begin{definition}[list of delays of a state]
\label{def:delay}
Consider a register automaton~$\Intersect$.
The \emph{list of delays}~$\Delay{\State}$ of a state~$\State$ is a
list of arrays, where the size of the~$i$-{\rm th} array in~$\Delay{\State}$
is the number of potential registers in the register automaton~$\Aut_i$.
Let~$j$ be the index of a register of~$\Aut_i$, let~$\mathcal{T}_{\State}$
denote the set of transitions entering~$\State$, and~$\mathcal{T'}_{\State}$
denote a subset of transitions of~$\mathcal{T}_{\State}$ starting from a state
different from~$\State$, then the value~$\Delay{\State}[i][j]$ is defined as

$
\Delay{\State}[i][j] = 
\begin{cases}

0 & \exists \Tr \in \mathcal{T}_{\State},~\TrCoeff{\Tr}{i, j}{j} = 0, \\ 
 \min(\InitialV{i,j}, \min \limits_{\Tr \in \mathcal{T'}_{\State}}  \TrCoeff{\Tr}{i, j}{0})  & \textnormal{$\State$ is the initial state
  of $\Product$, and $\forall \Tr \in \mathcal{T'}_{\State},~\TrCoeff{\Tr}{i, j}{j} > 0$, } \\ 
\min \limits_{\Tr \in \mathcal{T'}_{\State}} 
\TrCoeff{\Tr}{i, j}{0} & \textnormal{~otherwise,} \\ 
\end{cases}$
\end{definition}

where~$\TrCoeff{\Tr}{i,j}{j}$ (resp.\ $\TrCoeff{\Tr}{i,j}{0}$) denotes
the coefficient of the register~$\Acc_j$ (resp.\ the free term) in the
update of~$\Acc_j$ in the automaton~$\mathcal{M}_i$.

\begin{example}[list of delays of a state]
\label{ex:delay-list}
Consider two register automata~$\Aut_1$ and~$\Aut_2$ such that their
intersection~$\Product$ is given in Part~(A) of \figref{fig:bad-intersections1}.
The register automaton~$\Aut_1$ has one register~$R_1$, and~$\Aut_2$ has two
registers~$D_2$ and~$R_2$.
Let us compute the list of delays of every state of~$\Product$.
Since only~$\Aut_1$ does not have any potential registers then for any
state~$q$ of~$\Product$, the array~$\Delay{\State}[1]$ is empty.
The following table gives the list of delays of every potential register of~$\Product$.
\begin{center}
\begin{tabular}{lccc} \toprule
state & $a$ & $b$ & $c$\\ \midrule
$\Delay{\State}$ & $[[], [0]]$ & $[[], [0]]$ & $[[], [1]]$ \\ \bottomrule
\end{tabular}
\end{center}
It implies that, when the register automaton~$\Product$ is either in state~$a$ or state~$b$,
we only know that its potential register~$D_2$ is non-negative.
However, when~$\Product$ is in the state~$c$, the value of its potential register is at least~$1$.
\qedexample 
\end{example}

\begin{definition}[delayed intersection]
\ListStyle
\label{def:delayed-product}
Consider the register automaton~$\Intersect$.
The delayed intersection~$\DelayedProduct$ of $\Aut_1,\Aut_2,\dots,\Aut_{\NAut}$
is obtained from~$\Product$ using the following rules:

\begin{itemize}
\item
The set of states and accepting states of~$\DelayedProduct$ coincide
with those of~$\Product$.
\item
The set of transitions of~$\DelayedProduct$ coincide
with the one of~$\Product$.
\item
The number of registers of~$\DelayedProduct$ is the same as for~$\DelayedProduct$,
and is denoted by~$\acc$.
\item
The initial values of main registers of~$\DelayedProduct$ are the same as for~$\DelayedProduct$.
For every potential register~$\Acc^*_{i,j}$ of~$\DelayedProduct$, its initial value equals
$\InitialV{i,j}-\Delay{\State}[i][j]$, where~$\State$ is the initial state of~$\DelayedProduct$
and~$\InitialV{i,j}$ is the initial value of~$\Acc_{i,j}$ of~$\Product$.
\item
For every transition~$\Tr$ from a state~$\State_1$ to a state~$\State_2$ and for any register~$\Aut_{i,j}$
of~$\Product$, the update of~$A_{i,j}$ on~$\Tr$ is equal to
$\TrCoeff{\Tr}{i,j}{0}+\sum\limits_{k=1}^{\acc}\TrCoeff{\Tr}{i,j}{k}\cdot A_{i,k}$,
while the update of the corresponding register~$\Aut^*_{i,j}$ on the corresponding transition
of~$\DelayedProduct$ is equal to $\DTrCoeff{\Tr}{i,j}{0}+\sum\limits_{k=1}^{\acc}\TrCoeff{\Tr}{i,j}{k}\cdot A^*_{i,k}$,
where~$\DTrCoeff{\Tr}{i,j}{0}$ is defined as follows:
  \begin{itemize}
  \item
  If~$\Acc_{i,j}$ is a main register of~$\Product$, then
  $\DTrCoeff{\Tr}{i,j}{0}=\TrCoeff{\Tr}{i,j}{0}+\sum \limits_{k = 1}^{\acc_i - 1}\TrCoeff{\Tr}{i,j}{k}\cdot\Delay{\State_1}[i][k]$,
  where~$\acc_i$ is the number of registers of the register automaton~$\Aut_i$.
  \item
  If~$\Acc_{i,j}$ is a potential register of~$\Product$, then
  $\DTrCoeff{\Tr}{i,j}{0}=\TrCoeff{\Tr}{i,j}{0}+\Delay{\State_1}[i][j]-\Delay{\State_2}[i][j]$.
  \end{itemize}

\item
The acceptance function of~$\DelayedProduct$ is the same as for~$\Product$.
\end{itemize}
\end{definition}

\begin{example}[delayed intersection]
Consider two register automata~$\Aut_1$ and~$\Aut_2$
such that their intersection~$\Product$ is
given in Part~(A) of \figref{fig:bad-intersections1}.
The delayed intersection~$\DelayedProduct$ constructed according to
\defref{def:delayed-product} is given in Part~(B) of \figref{fig:bad-intersections1}.
The main difference between~$\DelayedProduct$ and~$\Product$ is that the register~$D_2$
is no longer updated on the transition from~$b$ to~$c$, but its contribution is integrated directly
to~$R_2$ on the transition from state~$c$ to state~$a$.
\qedexample
\end{example}

\begin{theorem}[properties of delayed intersection]
\label{th:delayed-product}
Consider the register automaton $\Intersect$ and the corresponding delayed intersection $\DelayedProduct$.
The three following conditions are satisfied:

\begin{enumerate}
\item
\label{cond:theorem-delayed-product-1}
The set of accepting sequence wrt~$\Product$ coincides with the set of
accepting sequence wrt~$\DelayedProduct$.
\item
\label{cond:theorem-delayed-product-2}
For every accepting sequence~$X$ wrt~$\Product$, the register automata~$\Product$
and~$\DelayedProduct$ return the same tuple of values.
\item
\label{cond:theorem-delayed-product-3}
For any accepting sequence~$X$, the weight of~$X$ in~$\Digraph{\DelayedProduct}{v}$
is greater than or equal to the weight of~$X$ in~$\Digraph{\Product}{v}$, where~$v$ is~$\LinFun$.
\end{enumerate}
\end{theorem}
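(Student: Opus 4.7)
\emph{Condition~1.} This part is essentially by inspection of Definition~\ref{def:delayed-product}: the set of states, the set of accepting states, and the transition skeleton (input labels and source/target pairs) of $\DelayedProduct$ coincide with those of $\Product$, only the update constants $\DTrCoeff{\Tr}{i,j}{0}$ and the initial register values being modified. Since acceptance of a signature depends only on which sequence of states is visited, the two automata accept the very same signatures and hence the same integer sequences.

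\emph{Condition~2.} I would proceed by induction on the length of the prefix of the signature consumed so far, maintaining the invariant that, whenever the automata arrive in a common state $\State$: (a) every main register has the same value in $\Product$ and in $\DelayedProduct$, and (b) for every potential register, the value of $A^*_{i,j}$ equals the value of $A_{i,j}$ shifted by the delay assigned to $\State$. The base case is immediate from the choice of initial values in Definition~\ref{def:delayed-product}. For the inductive step, consider a transition $\Tr$ from $\State_1$ to $\State_2$ and split on the form of the update imposed by Property~\ref{prop:incremental-automaton}(3). For a potential register, the update is either a reset or an increment by a non-negative constant; one then checks that the additive term $\Delay{\State_1}[i][j]-\Delay{\State_2}[i][j]$ appearing in $\DTrCoeff{\Tr}{i,j}{0}$ exactly accounts for the change of shift between $\State_1$ and $\State_2$, using the case analysis of Definition~\ref{def:delay}. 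For the main register, the extra summand $\sum_{k=1}^{\acc_i-1}\TrCoeff{\Tr}{i,j}{k}\cdot\Delay{\State_1}[i][k]$ in $\DTrCoeff{\Tr}{i,j}{0}$ precisely re-injects the contribution of the potential registers that were shifted in $\DelayedProduct$; hence the main register's value agrees with that of $\Product$ after the transition. Since by Property~\ref{prop:incremental-automaton}(2a) the value returned by each $\Aut_i$ is the final value of its main register, Condition~2 follows.

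\emph{Condition~3.} Decompose the weight of $X$ in $\Digraph{\Product}{v}$ and $\Digraph{\DelayedProduct}{v}$ as initialisation weight plus a sum of arc weights along the unique walk induced by the signature of $X$, and handle each index $i\in[1,\NAut]$ independently according to the sign of $\Coeff{i}$. When $\Coeff{i}\geq 0$, only the main register enters both $\beta^{\Tr}_i$ and $\beta^0_i$, so the initialisation-weight contributions agree, while each arc-weight difference equals $\Coeff{i}\cdot\sum_{k<\acc_i}\TrCoeff{\Tr}{i,\acc_i}{k}\cdot\Delay{\State_1}[i][k]\geq 0$. When $\Coeff{i}<0$, the potential-register pieces of $\DTrCoeff{\Tr}{i,j}{0}-\TrCoeff{\Tr}{i,j}{0}=\Delay{\State_1}[i][j]-\Delay{\State_2}[i][j]$ telescope along the walk, and combine with the modified initialisation term $\Coeff{i}\cdot\bigl(-\sum_{j<\acc_i}\Delay{\State_0}[i][j]\bigr)$ to leave only $-\Coeff{i}\cdot\Delay{\State_{\text{end}}}[i][j]\geq 0$ per potential register. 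One still has to account for the extra main-register summand; here the key observation from Property~\ref{prop:incremental-automaton}(3) is that whenever $\TrCoeff{\Tr}{i,\acc_i}{k}>0$ the register $k$ is reset on $\Tr$, hence $\Delay{\State_2}[i][k]=0$ by the first case of Definition~\ref{def:delay}, which forces the telescoping to absorb the mismatch coming from the main register.

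\emph{Main obstacle.} The delicate part is clearly the case $\Coeff{i}<0$ of Condition~3: one must reconcile the non-negative boost to the main-register constants with the telescoping sum over potential-register constants, all against a negative multiplier. The bookkeeping relies crucially on the interaction between Property~\ref{prop:incremental-automaton}(3) (which forces a potential register to reset when it is aggregated into the main register) and the first case of Definition~\ref{def:delay} (which nullifies the delay on any state entered by a reset transition), so the conceptual heart of the proof is showing that these two mechanisms are consistent in every transition of $\Product$.
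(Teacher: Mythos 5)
Your three-part plan coincides with the paper's own proof: Condition~1 by inspection of the shared transition skeleton, Condition~2 by induction on the consumed prefix, and Condition~3 by decomposing the weight into an initialisation term plus arc weights and splitting on the sign of $\Coeff{i}$. For Condition~2 your formulation is in fact tidier than the paper's: you carry the invariant $A_{i,j}=A^*_{i,j}+\Delay{\State}[i][j]$ for the potential registers inside the main induction, whereas the paper proves the main-register equality first and then remarks that the potential-register relation ``can also be shown by induction'', deferring exactly the part your strengthened hypothesis handles.

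The place that deserves scrutiny is the case $\Coeff{i}<0$ of Condition~3, and you are right to single it out. Note first that the paper's own proof of this case simply asserts $\DTrCoeff{\Tr_{\ell}}{i,\acc_i}{0}=\TrCoeff{\Tr_{\ell}}{i,\acc_i}{0}$, i.e.\ that the main register's constant is unchanged in $\DelayedProduct$ --- which contradicts both \defref{def:delayed-product} and the worked example, where that constant goes from $1$ to $2$ on the transition from $c$ to $a$ --- and then lets the telescoping of the potential-register constants do all the work. Your account is more honest about where the compensation must come from, but as written it does not cancel: on a transition $\Tr\colon\State_1\to\State_2$ that consumes register $k$, the main register's constant gains $+\Delay{\State_1}[i][k]$, and if you apply the formula $\DTrCoeff{\Tr}{i,k}{0}=\TrCoeff{\Tr}{i,k}{0}+\Delay{\State_1}[i][k]-\Delay{\State_2}[i][k]$ verbatim with $\Delay{\State_2}[i][k]=0$, the potential register's constant \emph{also} gains $+\Delay{\State_1}[i][k]$; the two contributions have the same sign and sum to $+2\cdot\Delay{\State_1}[i][k]$ rather than absorbing one another. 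The cancellation you want only goes through if, on a reset transition, the $+\Delay{\State_1}[i][k]$ term is dropped from the potential register's new constant --- which is what Part~(B) of \figref{fig:bad-intersections1} actually implements ($D_2\gets 0$, not $D_2\gets 1$, on the transition from $c$ to $a$). Under that reading, the reduction accumulated in register $k$'s constants since its last reset telescopes to exactly $-\Delay{\State_1}[i][k]$ and is matched by the $+\Delay{\State_1}[i][k]$ boost to the main register, while any residual delay discarded by a reset or left at the final state only decreases $\gamma$, which is the right direction when $\Coeff{i}<0$. So your mechanism is the correct one, but you need to make the reset-case bookkeeping explicit instead of invoking the generic telescoping identity; the paper leaves the same point under-justified, just in a different way.
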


\begin{proof}
We prove each of the three statements separately.

\vspace{0.2cm}
\noindent{\bf [Proof of (\ref{cond:theorem-delayed-product-1})].}
Since~$\Product$ have the same sets of states, transitions and
accepting states, and every~$\Aut_i$ has the \IncrementalAut~property,
then the sets of accepting sequences of~$\Product$ and~$\DelayedProduct$
are the same.

\vspace{0.2cm}
\noindent{\bf [Proof of (\ref{cond:theorem-delayed-product-2})].}  
Since the acceptance function of both~$\Product$ and~$\DelayedProduct$
returns a tuple of main registers, we will show that after
consuming the signature~$S$ of any accepting sequence,
the main registers of~$\Product$ and~$\DelayedProduct$ contain the same values.
Let us prove this statement by induction on the length of~$S$.

\vspace{0.2cm}
\noindent{\bf Base case.}
Let us consider a sequence~$S=\Tuple{S_1}$ consumed by~$\DelayedProduct$.
The register automaton~$\DelayedProduct$ triggered one transition~$\Tr$ from
its initial state~$\State$ to some other state~$\State'$.
Then, let us consider a main register~$\Acc^*_{i,\acc_i}$. 
By definition, its value equals
$\TrCoeff{\Tr}{i,j}{0}+\Acc^*_{i,\acc_i,\acc_i}+\sum\limits_{k=1}^{\acc_i-1}\TrCoeff{\Tr}{i,j}{k}\cdot(\Acc^*_{i,k}+\Delay{\State}[i][k])$.
Since any potential register~$\Acc^*_{i,k}$ has not been updated,
its contains the initial value, which equals $\InitialV{i,j}-\Delay{\State}[i][k]$.
Furthermore, the value of~$\Acc^*_{i,\acc_i}$ after one transition is
equal to $\TrCoeff{\Tr}{i,j}{0}+ 
 \InitialV{i,\acc_i}+\sum \limits_{k=1}^{\acc_i-1}\TrCoeff{\Tr}{i,j}{k}\cdot(\InitialV{i,j}-\Delay{\State}[i][k]+\Delay{\State}[i][k]) =
 \TrCoeff{\Tr}{i,j}{0}+\InitialV{i,\acc_i}+\sum\limits_{k=1}^{\acc_i-1}\TrCoeff{\Tr}{i,j}{k}\cdot\InitialV{i,j}$,
which coincides with the value of the corresponding register~$\Acc_{i,j}$ of~$\Product$.

\vspace{0.2cm}
\noindent{\bf Induction step.}
Assume that after having consumed a sequence $S=\Tuple{S_1,S_2,\dots,S_{m-1}}$,
the main registers of~$\DelayedProduct$ contain the same values as the main register
of~$\Product$ after having consumed the same sequence.
Let us show that after consuming one another symbol~$S_m$, which triggers a transition~$\Tr$,
the main registers of~$\DelayedProduct$ and~$\Product$ will have the same value.
The update of~$\Acc^*_{i,\acc_i}$ on~$\Tr$ is equal to
$\TrCoeff{\Tr}{i,j}{0}+\Acc^*_{i,\acc_i}+\sum\limits_{k=1}^{\acc_i-1}\TrCoeff{\Tr}{i,j}{k}\cdot(\Acc^*_{i,k}+\Delay{\State}[i][k])$.
By assumption of induction the value of~$\Acc^*_{i,\acc_i}$ in~$\Product$
and~$\Acc_{i,\acc_i}$ in~$\DelayedProduct$ are the same after consuming~$S$.
Hence, we only need to show after having consumed~$S$, that the value of
the potential register~$\Acc_{i,k}$ of~$\Product$ equals~$\Acc^*_{i,k}+\Delay{\State}[i][k]$. 
This can also be shown by induction, starting from a state that is a destination of
a triggered transition~$\Tr'$ such that~$\TrCoeff{\Tr'}{i,k}{k}=0$.

\vspace{0.2cm}
\noindent{\bf [Proof of (\ref{cond:theorem-delayed-product-3})].}  
We now prove the last statement.  
Let us consider the invariant digraphs~$\Digraph{\DelayedProduct}{v}$
and~$\Digraph{\Product}{v}$, where~$v=\LinFun$.
We now show that for every accepting sequence~$X=\XSeq$ wrt~$\Product$,
its weight in~$\Digraph{\DelayedProduct}{v}$ is greater than or equal to
its weight in~$\Digraph{\Product}{v}$.
The weight of~$X$ in~$\Digraph{\Product}{v}$ is the constant
$\Coeff{}+\Coeff{0}\cdot(p-1)+\sum\limits_{i=1}^{\NAut}\Coeff{i}\cdot\beta_i^0$
(see~\defref{def:sequence-walk}) plus the weight of the walk of~$X$, which is in total
$\Coeff{}+\Coeff{0}\cdot(p-1)+\sum\limits_{i=1}^{\NAut}\Coeff{i}\cdot\beta_i^0+\Coeff{0}\cdot(\seqlength-p+1)+\sum\limits_{j=1}^{\seqlength-p+1}\sum\limits_{i=1}^{\NAut}\Coeff{i}\cdot\beta^{\Tr_j}_i=\Coeff{}+\Coeff{0}\cdot\seqlength+\sum\limits_{i=1}^{\NAut}\Coeff{i}\cdot\left(\beta_i^0+\sum\limits_{j=1}^{\seqlength-p+1}\beta^{\Tr_j}_i\right)$,
where~$p$ is the arity of the considered signature,
and~$\Tr_1,\Tr_2,\dots\Tr_{\seqlength-p+1}$ is the sequence of transitions
of~$\Product$ triggered upon consuming the signature of~$X$.
Similarly, the weight of~$X$ in~$\Digraph{\DelayedProduct}{v}$ is equal to
$\Coeff{}+\Coeff{0}\cdot\seqlength+\sum\limits_{i=1}^{\NAut}\Coeff{i}\cdot\left(\delta_i^0+\sum\limits_{j=1}^{\seqlength-p+1}\delta^{\Tr_j}_i\right)$,
where~$\delta_i^0$ is the initialisation weight in~$\DelayedProduct$,
and every~$\delta^{\Tr_j}_i$ is the weight of an arc~$\Tr_j$ in~$\Digraph{\DelayedProduct}{v}$.

We now show that the value~$\Coeff{i}\cdot\left(\beta_i^0+\sum\limits_{j=1}^{\seqlength-p+1}\beta^{\Tr_j}_i\right)$
is not greater than~$\Coeff{i}\cdot\left(\delta_i^0+\sum\limits_{j=1}^{\seqlength-p+1}\delta^{\Tr_j}_i\right)$.
This will imply that the weight of the walk of~$X$ in~$\Digraph{\Product}{v}$
is less than or equal to the weight of the walk of~$X$ in~$\Digraph{\DelayedProduct}{v}$.

By \defref{def:invariant-graph}, the weight of every arc of~$\Digraph{\Product}{v}$
(resp.\ $\Digraph{\DelayedProduct}{v}$), corresponding to a transition~$\Tr$ of~$\Product$,
(resp.\ $\DelayedProduct$) is equal to
$\sum\limits_{i=1}^{\NAut}\Coeff{i}\cdot\beta_i^{\Tr}$
(resp.\ $\sum\limits_{i=1}^{\NAut}\Coeff{i}\cdot\delta_i^{\Tr}$).

As in \thref{th:linear-invariants-main}, we consider the
function~$v_i=\Coeff{i}\cdot\FinalV_i$.
Depending on the sign of~$\Coeff{i}$ we have two cases:

\vspace{0.25cm}
\noindent{\bf Case~(1): $\Coeff{i} \geq 0$.}~~
Then, the weight of~$X$ in~$\Digraph{\Product}{v_i}$
(resp.\ $\Digraph{\DelayedProduct}{v_i}$) is equal
to~$\Coeff{i}\cdot\alpha$ (resp.\ $\Coeff{i}\cdot\gamma$),
where~$\alpha$ denotes
$\beta_i^0+\sum\limits_{j=1}^{\seqlength-p+1}\beta^{\Tr_j}_i=
\sum\limits_{k=1}^{\acc_i}\InitialV{i,k}+\sum\limits_{\ell=1}^{\seqlength-p+1}\TrCoeff{\Tr_{\ell}}{i,\acc_i}{0}$
(resp.\ $\gamma$ denotes
$\delta_i^0+\sum\limits_{j=1}^{\seqlength-p+1}\delta^{\Tr_j}_i=
\sum\limits_{k=1}^{\acc_i}\DInitialV{i,k}+\sum\limits_{\ell=1}^{\seqlength-p+1}\DTrCoeff{\Tr_{\ell}}{i,\acc_i}{0}$).
Since every
$\DTrCoeff{\Tr_{\ell}}{i,\acc_i}{0}=
\TrCoeff{\Tr_{\ell}}{i,\acc_i}{0}+\sum\limits_{k=1}^{\acc_i-1}\Delay{\State}[i][k]$,
it implies that
$\DTrCoeff{\Tr_{\ell}}{i,\acc_i}{0}\geq\TrCoeff{\Tr_{\ell}}{i,\acc_i}{0}$.
Then,~$\alpha\leq\gamma$, and when~$\Coeff{i}>0$,
we have $\Coeff{i}\cdot\gamma\geq\Coeff{i}\cdot\alpha$.

\vspace{0.25cm}
\noindent{\bf Case~(2): $\Coeff{i} < 0$.}~~
Then, the weight of~$X$ in~$\Digraph{\Product}{v_i}$
(resp.\ $\Digraph{\DelayedProduct}{v_i}$) is equal
to~$\Coeff{i}\cdot\alpha$ (resp.\ $\Coeff{i}\cdot\gamma$),
where~$\alpha$ denotes
$\beta_i^0+\sum\limits_{j=1}^{\seqlength-p+1}\beta^{\Tr_j}_i=
\sum\limits_{k=1}^{\acc_i}\InitialV{i,k}+\sum\limits_{\ell=1}^{\seqlength-p+1}\sum\limits_{k=1}^{\acc_i}\TrCoeff{\Tr_{\ell}}{i,k}{0}$
(resp.\ $\gamma$ denotes
$\delta_i^0+\sum\limits_{j=1}^{\seqlength-p+1}\delta^{\Tr_j}_i=
\sum\limits_{k=1}^{\acc_i}\DInitialV{i,k}+\sum\limits_{\ell=1}^{\seqlength-p+1}\sum\limits_{k=1}^{\acc_i}\DTrCoeff{\Tr_{\ell}}{i,k}{0}$).
Further, by construction of~$\DelayedProduct$, every~$\DTrCoeff{\Tr_{\ell}}{i,k}{0}$ (with $i\in[1,\acc_i]$)
is equal to $\TrCoeff{\Tr_{\ell}}{i,k}{0}+\Delay{\State_1}[i][k]-\Delay{\State_2}[i][k]$,
where~$\State_1$ and~$\State_2$ are the source and the destination of the transition~$\Tr_{\ell}$, respectively.
In addition,~$\DTrCoeff{\Tr_{\ell}}{i,\acc_i}{0}=\TrCoeff{\Tr_{\ell}}{i,\acc_i}{0}$. 
By replacing every $\DTrCoeff{\Tr_{\ell}}{i,k}{0}$ with its expression, and simplifying the sum, we obtain 
$\sum\limits_{k=1}^{\acc_i}\InitialV{i,k}+\sum\limits_{{\ell} = 1}^{\seqlength-p+1}\sum\limits_{k=1}^{\acc_i}(\TrCoeff{\Tr_{\ell}}{i,k}{0}-\Delay{\State'}[i][k])$, where~$\State'$ is the last state visited by~$\Product$ upon consuming~$X$.
Since every~$\Delay{\State'}[i][k]$ is non-negative,
$\TrCoeff{\Tr_{\ell}}{i,k}{0}-\Delay{\State'}[i][k]\leq\TrCoeff{\Tr_{\ell}}{i,k}{0}$.
This implies that~$\gamma\leq\alpha$, and when~$\Coeff{i}<0$,
$\Coeff{i}\cdot\gamma\geq\Coeff{i}\cdot\alpha$. \hspace{\fill} \qed
\end{proof}

Note that in the register automaton~$\DelayedProduct$, all the constants~$\DTrCoeff{\Tr}{i,j}{0}$
introduced in \defref{def:delayed-product} are non\nobreakdash-negative by definition of the delay (see~\defref{def:delay}).
It means that the reasoning used in the proof of \thref{th:linear-invariants-main} requiring the non-negativity
of these constants remains valid for the invariant digraph~$\Digraph{\DelayedProduct}{v}$. 

\begin{example}[generating stronger invariants]
\label{ex:delayed-product-better}
Consider two register automata~$\Aut_1$ and~$\Aut_2$ such that their intersection~$\Product$,
and their delayed intersection~$\DelayedProduct$ are respectively given in Parts~(A) and~(B)
of \figref{fig:bad-intersections1}.
The invariant digraph~$\Digraph{\DelayedProduct}{v}$ is given in
Part~(C) of \figref{fig:bad-intersections1} when~$\Coeff{0}>0$,
$\Coeff{1}>0$, and~$\Coeff{2}<0$.
By stating the minimisation problem from \secref{sec:relative-terms},
we obtain the following coefficients: $\Coeff{0}=0$, $\Coeff{1}=-2$,
and~$\Coeff{2}=1$.
The constant~$\Coeff{}$ is found to be~$0$, and we obtain the
invariant $2\cdot\FinalV_1\geq\FinalV_2$, which could not be found
with the invariant digraph~$\Digraph{\Product}{v}$. 
\qedexample
\end{example}


\subsection[Generating Conditional Linear Invariants with  the Non-Default Value Condition]{Generating Conditional Linear Invariants with the Non-Default Value Condition}

Quite often a register automaton~$\Aut_i$ (with $i\in[1,\NAut]$)
returns the initial value of one of its registers only when the signature
of~$X$ does not contain any occurrence of some regular expression~$\pattern_i$.
This may lead to a convex hull of points of coordinates
$(\FinalV_1,\FinalV_2,\dots,\FinalV_{\NAut})$ returned by~$\Product$ containing
infeasible points, e.g.~see Part~(A) of \figref{fig:polytope-linear-invariants}.
Some of these infeasible points can be eliminated by stronger invariants subject
to a condition, called the \emph{non-default value} condition, that no variable
of the returned vector is assigned to the initial value of the corresponding
register.
We first illustrate the motivation for such conditional linear invariants.

\begin{example}[motivation for conditional invariants]
Consider the $\constraint{nb\_decreasing\_terrace}(X,\FinalV_1)$ and the
$\constraint{sum\_width\_}$ $\constraint{increasing\_terrace}(X,\FinalV_2)$ constraints,
where~$X$ is a time series of length $\seqlength$, $\FinalV_1$ is restricted to be the
number of maximal occurrences of $\DecreasingTerracePatternName=\DecreasingTerracePattern$
in the signature of~$X$, and~$\FinalV_2$ is restricted to be the sum of the number of
elements in subseries of~$X$ whose signatures correspond to words of the language
of~$\IncreasingTerracePatternName=\IncreasingTerracePattern$.
In \figref{fig:polytope-linear-invariants}, for $\seqlength=12$, the squared points
represent feasible pairs $(\FinalV_1,\FinalV_2)$, while the circled points stand for
infeasible pairs $(\FinalV_1,\FinalV_2)$ inside the convex hull.
The linear invariant $2\cdot\FinalV_1+\FinalV_2\leq\seqlength-2$ is a facet of the polytope,
which does not eliminate the points~$(1,8)$, $(2,6)$, $(3,4)$, $(4,2)$.
However, if we assume that both~$\FinalV_1>0$ and~$\FinalV_2>0$,
then we can add a linear invariant eliminating these four infeasible points, 
namely $2\cdot\FinalV_1+\FinalV_2\leq\seqlength-3$, shown in Part~(B)
of \figref{fig:polytope-linear-invariants}.
In addition, the infeasible points on the straight line~$\FinalV_2=1$ will also be eliminated
by the restriction $\FinalV_2=0\lor\FinalV_2\geq 2$ given in \cite[p.~2962]{Catalog18}.
\qedexample

\end{example}

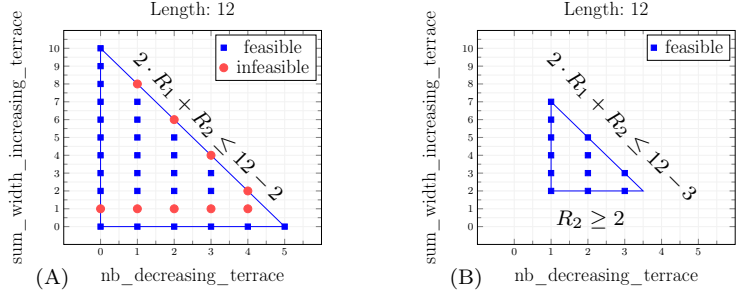
\begin{figure}[!h]
\begin{center}
\newcommand{\InitialText}{\scriptsize $\scriptstyle\left \{\begin{array}{l}\scriptstyle\Peak \leftarrow 0\\ \scriptstyle\Valley \leftarrow 0 \end{array}\right\}$}
\begin{tikzpicture}
\begin{scope}[scale=0.5,font=\footnotesize]
\begin{axis}
[xlabel={\Large nb\_decreasing\_terrace},
    ylabel={\Large sum\_width\_increasing\_terrace},
    title={\Large Length: 12},
    minor tick num=1,
    grid=both,grid style={line width=.1pt, draw=gray!10},
    major grid style={line width=.1pt,draw=gray!10},
    xmin=-1,xmax=6,
    ymin=-1,ymax=11,
    xtick={0,1,2,3,4,5},
    ytick={0,1,2,3,4,5,6,7,8,9,10},
    scatter/classes={
      f={mark=square*,blue},%
      i={mark=*,red!70,mark size=3pt}}] %
\addplot[
    scatter,
    only marks,
    point meta=explicit symbolic
]
table[meta=label] {
x y label
0 0 f
1 0 f
0 2 f
0 3 f
0 4 f
1 2 f
0 5 f
1 3 f
2 0 f
0 6 f
1 4 f
0 7 f
2 2 f
1 5 f
0 8 f
1 6 f
2 3 f
1 7 f
0 9 f
0 10 f
2 4 f
3 0 f
2 5 f
3 2 f
4 0 f
3 3 f
5 0 f
0 1 i
1 1 i
1 8 i
2 1 i
2 6 i
3 1 i
3 4 i
4 1 i
4 2 i
};
\addplot[blue] coordinates {
(0,0)
(5,0)
(0,10)
(0,0)};
\legend{\Large~feasible,\Large~infeasible}
\end{axis}
\node at (3.5,2.8) [rotate=-45,above] (Cut1) {$2\cdot\FinalV_1+\FinalV_2\leq12-2$};
\node at (-0.3,-0.9) {\small (A)};
\end{scope}
\begin{scope}[xshift=5.5cm,scale=0.5,font=\footnotesize]
\begin{axis}
[xlabel={\Large nb\_decreasing\_terrace},
    ylabel={\Large sum\_width\_increasing\_terrace},
    title={\Large Length: 12},
    minor tick num=1,
    grid=both,grid style={line width=.1pt, draw=gray!10},
    major grid style={line width=.1pt,draw=gray!10},
    xmin=-1,xmax=6,
    ymin=-1,ymax=11,
    xtick={0,1,2,3,4,5},
    ytick={0,1,2,3,4,5,6,7,8,9,10},
    scatter/classes={
      f={mark=square*,blue},%
      i={mark=*,red!70}}] %
\addplot[
    scatter,
    only marks,
    point meta=explicit symbolic
]
table[meta=label] {
x y label
1 2 f
1 3 f
1 4 f
2 2 f
1 5 f
1 6 f
2 3 f
1 7 f
2 4 f
2 5 f
3 2 f
3 3 f
};
\addplot[blue] coordinates {
(1,2)
(3.5,2)
(1,7)
(1,2)};
\legend{\Large~feasible}
\end{axis}
\node at (3.5,2.8) [rotate=-45,above] (Cut1) {$2\cdot\FinalV_1+\FinalV_2\leq12-3$};
\node at (3,0.2) [above] (Cut2) {$\FinalV_2\geq2$};
\node at (-0.3,-0.9) {\small (B)};
\end{scope}
\end{tikzpicture}
\end{center}

\caption{\label{fig:polytope-linear-invariants} Invariants on the result values~$\FinalV_1$
and~$\FinalV_2$ of $\constraint{nb\_decreasing\_terrace}$ and $\constraint{sum\_width\_increasing\_terrace}$
for a sequence length of~$12$ (A)~with the general linear invariants, and (B)~with the Non-Default Value condition.}
\end{figure}

Consider that each register automaton~$\Aut_i$ (with~$i\in[1,\NAut]$) returns its initial value
after consuming the signature of an accepting sequence~$X$ wrt~$\Aut_i$ iff the signature of~$X$
does not contain any occurrence of some regular expression~$\pattern_i$ over the alphabet~$\Sigma$.
Let~$\Aut'_i$ denote the register automaton which accepts the words of the language $\Sigma^*\sigma_i\Sigma^*$,
where~$\Sigma^*$ denotes any word over~$\Sigma$.
Then, using the method described in Sections~\ref{sec:constructing-automaton}, \ref{sec:relative-terms}
and~\ref{sec:absolute-term} we generate the linear invariants for $\Aut'_1\cap\Aut'_2\cap\dots\cap\Aut'_{\NAut}$.
These linear invariants hold when the non-default value condition is satisfied.


\subsection{Facet Analysis of Linear Invariants}
\label{sec:sharpness}
Consider two time-series constraints~$\CtrGamma_1(X,\Result_1)$ and~$\CtrGamma_2(X,\Result_2)$
imposed on the same sequence~$X$ of length~$\seqlength$.
After having generated linear and conditional linear invariants linking~$\Result_1$, $\Result_2$ and~$\seqlength$,
an essential question is whether these invariants are facets of the convex hull of feasible combinations~$\Result_1$
and~$\Result_2$, or not.
Given a linear invariant
$\Function=\Coeff{}+\Coeff{0}\cdot\seqlength+\Coeff{1}\cdot\Result_1+\Coeff{2}\cdot\Result_2\geq 0$,
this section presents a three-step method for answering this question:

\begin{enumerate}
\item
Assume an infinite set~$\NValues$ of values of~$\seqlength$ such that the set of sequences whose length
is in~$\NValues$ can be represented by a constant-size automaton, e.g.~$\seqlength\geq 5$,
$\seqlength\bmod 2=1$, $\seqlength\in\Naturals$.
\item
Find two distinct points~$P_1$ and~$P_2$, possibly parameterised by~$\seqlength\in\NValues$,
laying on the straight line $\Function=0$.
\item
Prove that~$P_1$ and~$P_2$ are feasible for any~$\seqlength\in\NValues$.
\end{enumerate}

The challenge here is the third step, which requires to prove the feasibility of~$P_1$ and~$P_2$
for an infinite set of values of~$\seqlength$.
Let~$\Upp{\Result_i}(\seqlength)$ denote the maximum value of~$\Result_i$ among all time series
of length~$\seqlength$, let~$a_x,a_y$ be in~$\Curly{0,1}$ and let~$b_x$ and~$b_y$ be natural numbers.
It turns out that for points of the form
$\left(\begin{array}{c}h_x,\\h_y\hspace*{2.7pt}
 \end{array}\right)=
 \left(\begin{array}{c}a_x\cdot\Upp{\Result_1}(\seqlength)+(1-2\cdot a_x)\cdot b_x,\\
                       a_y\cdot\Upp{\Result_2}(\seqlength)+(1-2\cdot a_y)\cdot b_y\hspace*{2.7pt}
 \end{array}\right)$
we can represent the set of time series corresponding to such a point as the intersection of three
constant-size automata, namely
(i)~the automaton representing the assumed condition on~$\seqlength$,
(ii)~the automaton that accepts only and only all time series yielding~$h_x$ as the value of~$\Result_1$, and
(iii)~the automaton that accepts only and only all time series yielding~$h_y$ as the value of~$\Result_2$.
The constant-size automata representing a condition on~$\Result_1$ and~$\Result_2$ can be synthesised
from the seed transducers for the regular expressions associated with~$\CtrGamma_1$ and~$\CtrGamma_2$,
as shown in \secref{sec:conditional-automata}.
We now give in Sections~\ref{sec:assume-condition}, \ref{sec:find-two-points} and~\ref{sec:prove-feasibility}
more details for each of the three steps.

\subsubsection{Step One: Assuming a Condition on the Sequence Length} 
\label{sec:assume-condition}

Some of the invariants we generate are facets of the convex hull only for a subset of values of~$\seqlength$,
e.g.~only even-length sequences.
This requires to assume a condition on $\seqlength$ that can be represented by a constant\nobreakdash-size automaton.
We start with the less restrictive condition and try to prove that an invariant is a facet,
and then gradually restrict the condition if we cannot prove it in full generality.

\subsubsection{Step Two: Finding Two Integer Points on a Straight Line} 
\label{sec:find-two-points}

To find two distinct points on the straight line~$\Function=0$, we assume a value of~$\Result_1$
as $a_x\cdot\Upp{\Result_1}(\seqlength)+(1-2\cdot a_x)\cdot b_x$, which by~\cite{BoundsConstraints} is equal
to $a_x\cdot\frac{\seqlength-c_1-(\seqlength-c_1)\bmod d_1}{d_1}+(1-2\cdot a_x)\cdot b_x$,
with~$c_1$ and~$d_1$ being integer constants depending on the regular expression associated with~$\CtrGamma_1$.
If the coefficient of~$\Result_2$ in~$\Function$ is~$0$, then the value of~$\Result_2$ is not relevant and we can take,
for example, $0$ or~$1$ as the value of~$\Result_2$.
Otherwise, by isolating~$\Result_2$ from the equation~$\Function=0$ we obtain:

\begin{equation}
\label{eq:y-expression}
\small
\Result_2 = \frac{(-\Coeff{0} \cdot d_1 - \Coeff{1} \cdot a_x) \cdot \seqlength
+ (-\Coeff{} \cdot d_1 + \Coeff{1} \cdot a_x \cdot c_1 - \Coeff{1}
\cdot (1-2\cdot a_x) \cdot b_x \cdot d_1) + \Coeff{1} \cdot a_x \cdot (\seqlength - c_1)
\bmod d_1}{d_1 \cdot \Coeff{2}}
\end{equation}

Then we verify that the right-hand side of~(\ref{eq:y-expression}) is of the form
$a_y\cdot\frac{\seqlength-c_2-(\seqlength-c_2)\bmod d_2}{d_2}+(1-2\cdot a_y)\cdot b_y$,
with~$c_2$ and~$d_2$ being integer constants depending on the regular expression associated
with~$\CtrGamma_2$, with~$a_y$ being in~$\Curly{0,1}$, and with~$b_y$ being a natural number.
This is done by solving a system of constraints assuming that~$\seqlength$ belongs to~$\NValues$.
The solutions of such system are the candidate points of the next step.

\subsubsection{Step Three: Proving Feasibility of an Integer Point}
\label{sec:prove-feasibility}

Once we found two distinct integer points laying on the straight line~$\Function=0$,
we show that both points are feasible for any~$\seqlength$ in~$\NValues$.

For a point of coordinates~$(h_x,h_y)$ we construct two constant-size automata~$\Aut_1$
and~$\Aut_2$, where~$\Aut_1$ (resp.\ $\Aut_2$) is an automaton recognising the signatures
of all and only time series yielding~$h_x$ (resp.\ $h_y$) as the value of~$\Result_1$
(resp.\ $\Result_2$).
Let~$\Aut_{\seqlength}$ be a constant-size automaton representing the $\seqlength\in\NValues$
condition, and~$d$ denote the smallest difference between two values in~$\NValues$.
If, in the intersection~$\Aut$ of~$\Aut_1$, $\Aut_2$, \dots, $\Aut_{\seqlength}$ there are cycles
of length~$d$, then the point~$(h_x,h_y)$ is feasible for any sequence whose length is in~$\NValues$.
From this intersection we also compute the smallest value of~$\seqlength$,
for which these two points are feasible.
This is the length of the shortest path from the initial state of~$\Aut$ to an accepting state of~$\Aut$
that goes through a state belonging to a cycle of length~$d$.

If we cannot prove the feasibility of our two current points,
then we try a different combination of~$a_x$ and~$b_x$, and obtain two other distinct points.
Since the set of values of~$b_x$ is, potentially, unbounded we limit ourselves only
to the values of~$b_x$ belonging to the set~$\Curly{0,1,2,3}$.

\begin{example}
Consider the conjunction of the $\constraint{nb\_peak}(X,P)$ and the $\constraint{nb\_valley}(X,V)$
time-series constraints imposed on the same time series~$X=\XSeq$,
and the linear invariant $P+V\leq\seqlength-2$.
Let us now analyse whether this invariant is facet defining or not.
By~\cite{BoundsConstraints}, both $\Upp{P}(\seqlength)$ and $\Upp{V}(\seqlength)$ are equal to
$\frac{\seqlength-1-(\seqlength-1)\bmod 2}{2}$.
\begin{itemize}
\item
When~$P$ is equal to~$\Upp{P}(\seqlength)$, then by~(\ref{eq:y-expression}),
$V$ is equal to $\frac{\seqlength-3+(\seqlength-1)\bmod 2}{2}$;
we consider two cases:
\begin{enumerate}[i.]
\item
If $(\seqlength-1)\bmod 2=0$, then $\frac{\seqlength-3+(\seqlength-1)\bmod 2}{2} =
                                    \frac{(\seqlength-1)-2}{2} = \Upp{V}(\seqlength)-1$.
\item
If $(\seqlength-1)\bmod 2=1$, then $\frac{\seqlength-3+(\seqlength-1)\bmod 2}{2} =
                                    \frac{(\seqlength-2)-2}{2} = \Upp{V}(\seqlength)-1$.
\end{enumerate}
In both cases, we obtain the candidate point $P_1=(\Upp{P}(\seqlength),\Upp{V}(\seqlength)-1)$.
\item
When~$P$ is equal to $\Upp{P}(\seqlength)-1$, then by~(\ref{eq:y-expression}),
     $V$ is $\frac{\seqlength-1+(\seqlength-1)\bmod 2}{2}$;
we consider two cases:
\begin{enumerate}[i.]
\item
If $(\seqlength-1)\bmod 2=0$, then $\frac{\seqlength-1+(\seqlength-1)\bmod 2}{2} =
                                    \frac{\seqlength-1}{2} = \Upp{V}(\seqlength)$
and we obtain the candidate point $P_2=(\Upp{P}(\seqlength)-1,\Upp{V}(\seqlength))$.
\item
If $(\seqlength-1)\bmod 2=1$, then $\frac{\seqlength-1+(\seqlength-1)\bmod 2}{2} =
                                    \frac{(\seqlength-2)+2}{2} = \Upp{V}(\seqlength)+1$
and we obtain the candidate $(\Upp{P}(\seqlength)-1,\Upp{V}(\seqlength)+1)$.
This candidate is not feasible since its second coordinate is strictly greater than
the maximum value of the second coordinate of any feasible point.
\end{enumerate}
\end{itemize}

Hence, for the case~$(\seqlength-1)\bmod 2=0$, we obtain two distinct candidate
points~$P_1$ and~$P_2$ located on the straight line $P+V=\seqlength-2$.
To prove that $P_2=(\Upp{P}(\seqlength)-1,\Upp{V}(\seqlength))$ is feasible,
we construct and intersect the automata for the
$\Result_1=\Upp{P}(\seqlength)$, 
$\Result_2=\Upp{V}(\seqlength)-1$, and
$(\seqlength-1)\bmod 2=0$ conditions,
and observe that the intersection has a cycle of length~$2$,
which implied the feasibility of~$P_2$ for any odd sequence size.
The same procedure is used for proving the feasibility of~$P_1$ for any odd sequence size.

Since both~$P_1$ and~$P_2$ lay on the straight line $\Result_1+\Result_2=\seqlength-2$,
and are feasible for any odd length, then the straight line $\Result_1+\Result_2=\seqlength-2$
is a facet of the convex hull of feasible points, when $\seqlength$ is odd.
\qedexample
\end{example}



\section{Synthesising Parameterised Non-Linear Invariants}
\label{sec:non-linear-invariants}

 The contribution of this section is a methodology for two families of time-series constraints,
namely the~$\constraint{nb}\_\pattern$ and the~$\constraint{sum\_width}\_\pattern$ families,
which both proposes conjectures and proves them automatically by using \emph{constant-size automata},
i.e.~automata whose number of states, and whose input alphabet size are independent both from
an input time-series length and from the values in an input time series.
For a conjunction of two time-series constraints~$\CtrGamma_1(\X,\Result_1)$ and~$\CtrGamma_2(\X,\Result_2)$
imposed on the same time series~$\X=\XSeq$, our method describes sets of infeasible result-value pairs
for~$(\Result_1,\Result_2)$.
We assume that every time-series constraint mentioned in this section belongs either to
the~$\constraint{nb}\_\pattern$ or to the~$\constraint{sum\_width}\_\pattern$ family.
Each set of infeasible pairs is described by a formula $\BoolFun_i(\Result_1,\Result_2,\seqlength)$
expressed as a conjunction
$\Predicate_i^1~\land~\Predicate_i^2~\land~\dots~\land~\Predicate_i^{k_i}$ of elementary
conditions~$\Predicate_i^j$ between~$\Result_1$, $\Result_2$ and~$\seqlength$.
The learned Boolean function $\BoolFun_1\lor\BoolFun_2\lor\dots\lor\BoolFun_m$ represents
the union of sets of infeasible pairs~$(\Result_1,\Result_2)$, while its negation
$\neg\BoolFun_1\land\neg\BoolFun_2\land\dots\land \neg \BoolFun_m$ corresponds to
an implied constraint, which is a \emph{universally true Boolean formula}, namely

\begin{equation}
\label{eq:introduction}
\forall\hspace*{1pt}\X,~\CtrGamma_1(\X,\Result_1)\land\CtrGamma_2(\X,\Result_2)\Rightarrow\bigwedge\limits_{i=1}^m\neg\BoolFun_i(\Result_1,\Result_2,\seqlength)
\end{equation}

In order to prove that~(\ref{eq:introduction}) is universally true we need to show that for every
$\BoolFun_i(\Result_1,\Result_2,\seqlength)$, there does not exist a time series of length~$\seqlength$
yielding~$\Result_1$ (resp.\ $\Result_2$) as the result value of~$\CtrGamma_1$ (resp.\ $\CtrGamma_2$)
and satisfying $\BoolFun_i(\Result_1,\Result_2,\seqlength)$.
The key idea of our proof scheme is to represent the infinite set of time series satisfying each
elementary condition~$\Predicate_i^j$ of $\BoolFun_i(\Result_1,\Result_2,\seqlength)$
as a constant-size automaton~$\Aut_{i,j}$.
Then checking that the intersection of all automata $\Aut_{i,1},\Aut_{i,2},\dots,\Aut_{i,k_i}$
is empty implies that $\BoolFun_i(\Result_1,\Result_2,\seqlength)$ is indeed infeasible.
Note that such proof scheme is independent of the time-series length~$\seqlength$;
moreover, it does not explore any search space.

As for the linear invariants, the generation process of non-linear invariants is offline:
it is done once and for all to build a reusable database of generic invariants.
This section is organised as follows:

\begin{itemize}
\item
\secref{sec:example-mining} motivates this work with a running example, which illustrates
the need for deriving non-linear invariants.
\item
\secref{sec:overview-mining} presents our method for deriving non-linear invariants for a
conjunction of time-series constraints.
It starts with an overview of the three phases of our method, and then details each phase:
  \begin{enumerate}
  \item
  A \emph{generating data phase} is detailed in the introduction of \secref{sec:overview-mining}.
  Its goal is to generate a dataset, from which we will extract non-linear invariants. 
  \item
  A \emph{mining phase} is detailed in \secref{sec:mining}.
  It extracts, from the data generated in the mining phase, a hypothesis~$H$ consisting of Boolean
  functions of the form $f_1\lor f_2\lor\dots\lor f_m$.
  \item
  A \emph{proof phase} is detailed in \secref{sec:proof}.
  For every Boolean function $f_i$ (with~$i\in[1,m]$) in the extracted hypothesis~$H$,
  the proof phase either proves its validity for \emph{every} time-series length, 
  or refute it by generating a counter example. The counter example is used to modify
  the current hypothesis and the process is repeated.
  \end{enumerate}
Note that our generated data is noise-free, and that our goal is not to discover statistical properties
of time-series constraints, but rather to extract non-linear invariants, which are always true.
\end{itemize}


 \subsection{Motivation and Running Example}
\label{sec:example-mining}

Consider a conjunction of time-series constraints $\CtrGamma_1(X,\Result_1)\land\CtrGamma_2(X,\Result_2)$
imposed on the same time series~$X$.
In~\secref{sec:linear-invariants}, using the representation of~$\gamma_1$ and~$\gamma_2$ as
\emph{register automata}, we presented a method for deriving parameterised linear invariants
linking the values of~$\Result_1$, $\Result_2$.
Although, in most cases the derived inequalities were proven to be facet-defining,
we observe that in some cases, even when using these invariants, the solver could
still take a lot of time to find a feasible solution or to prove infeasibility.
This happens because of some infeasible combinations of values of the result variables
that were located \emph{inside} the convex hull of all feasible combinations.
The following example illustrates such a situation.

\begin{example}[running example]
\label{ex:running}
Consider the conjunction of $\constraint{sum\_width\_de\-crea\-sing\_sequen\-ce}($ $X,\Result_1)$
and $\constraint{sum\_width\_zigzag}(X,\Result_2)$ time-series constraints imposed on the same
time series~$X$ of length~$\seqlength$, where a decreasing sequence and a zigzag respectively
correspond to~$\DecreasingSequencePattern$ and\\ $\ZigzagPattern$.
For the values of~$\seqlength$ in the interval~$[9,12]$, \figref{fig:polytope_with_infeasible_values}
represents feasible pairs of~$(\Result_1,\Result_2)$ as blue squares, and infeasible pairs lying inside
the convex hull of feasible (blue) points as red circles.
The convex hull contains a significant number of infeasible (red) points,
which we want to characterise automatically.
\qedexample
\end{example}
\begin{figure}
\begin{center}
\begin{tikzpicture}[scale=0.4]
\begin{scope}[xshift=0cm,yshift=-20cm]
\begin{axis}[xlabel={\Large \bf sum\_width\_decreasing\_sequence},
    ylabel={\Large \bf sum\_width\_zigzag},
    title={\Large \bf  Length: 9},
    minor tick num=1,
    grid=both,grid style={line width=.1pt, draw=gray!10},
    major grid style={line width=.1pt,draw=gray!10},
    xmin=-1,xmax=10,
    ymin=-1,ymax=10,
    scatter/classes={
    c1={mark=square*,blue!100},
    c2={mark=square*,blue!100},
    c3={mark=square*,blue!100},
    c4={mark=square*,blue!100},
    c5={mark=square*,blue!100},
    c6={mark=square*,blue!100},
    c7={mark=square*,blue!100},
    c8={mark=square*,blue!100},
    c9={mark=square*,blue!100},
    c10={mark=square*,green!30},
    c11={mark=square*,green!20},
    c12={mark=square*,green!10}}]

    \addplot[scatter,only marks,
        scatter src=explicit symbolic]
        coordinates {
(0, 0) [c1]
(2, 0) [c1]
(4, 2) [c1]
(4, 0) [c1]
(4, 3) [c1]
(6, 4) [c1]
(6, 2) [c1]
(6, 3) [c1]
(6, 5) [c1]
(2, 2) [c1]
(4, 4) [c1]
(6, 6) [c1]
(8, 7) [c1]
(6, 0) [c1]
(8, 6) [c1]
(8, 4) [c1]
(8, 5) [c1]
(3, 0) [c2]
(5, 2) [c2]
(5, 0) [c2]
(5, 3) [c2]
(7, 5) [c2]
(7, 3) [c2]
(7, 0) [c2]
(7, 2) [c2]
(7, 4) [c2]
(8, 3) [c2]
(7, 6) [c2]
(8, 0) [c2]
(8, 2) [c2]
(9, 6) [c2]
(9, 4) [c2]
(9, 2) [c2]
(9, 0) [c2]
            };
    \addplot[red]
        coordinates {
(0,0)
(9,0)
(9,6)
(8,7)
(6,6)
(0,0)
            };
    \addplot[scatter,only marks,mark=*,red]
        coordinates {
(1,0)
(1,1)
(2,1)
(3,1)
(3,2)
(3,3)
(4,1)
(5,1)
(5,4)
(5,5)
(6,1)
(7,1)
(8,1)
(9,1)
(9,3)
(9,5)
            };
\end{axis}
\end{scope}
\begin{scope}[xshift=9cm,yshift=-20cm]
\begin{axis}[xlabel={\Large \bf  sum\_width\_decreasing\_sequence},
    ylabel={\Large \bf sum\_width\_zigzag},
    title={\Large \bf  Length: 10},
    minor tick num=1,
    grid=both,grid style={line width=.1pt, draw=gray!10},
    major grid style={line width=.1pt,draw=gray!10},
    xmin=-1,xmax=11,
    ymin=-1,ymax=11,
    scatter/classes={
    c1={mark=square*,blue!100},
    c2={mark=square*,blue!100},
    c3={mark=square*,blue!100},
    c4={mark=square*,blue!100},
    c5={mark=square*,blue!100},
    c6={mark=square*,blue!100},
    c7={mark=square*,blue!100},
    c8={mark=square*,blue!100},
    c9={mark=square*,blue!100},
    c10={mark=square*,green!30},
    c11={mark=square*,green!20},
    c12={mark=square*,green!10}}]

    \addplot[scatter,only marks,
        scatter src=explicit symbolic]
        coordinates {
(0, 0) [c1]
(2, 0) [c1]
(4, 2) [c1]
(4, 0) [c1]
(4, 3) [c1]
(6, 4) [c1]
(6, 2) [c1]
(6, 3) [c1]
(6, 5) [c1]
(6, 0) [c1]
(8, 6) [c1]
(2, 2) [c1]
(4, 4) [c1]
(6, 6) [c1]
(8, 8) [c1]
(8, 7) [c1]
(8, 5) [c1]
(8, 4) [c1]
(8, 2) [c1]
(8, 3) [c1]
(10, 8) [c1]
(3, 0) [c2]
(5, 2) [c2]
(5, 0) [c2]
(5, 3) [c2]
(7, 4) [c2]
(7, 2) [c2]
(7, 0) [c2]
(7, 5) [c2]
(7, 3) [c2]
(7, 6) [c2]
(8, 0) [c2]
(9, 7) [c2]
(9, 5) [c2]
(9, 3) [c2]
(9, 0) [c2]
(9, 2) [c2]
(9, 4) [c2]
(9, 6) [c2]
(10, 6) [c2]
(10, 4) [c2]
(10, 2) [c2]
(10, 0) [c2]
            };
    \addplot[red]
        coordinates {
(0,0)
(10,0)
(10,8)
(8,8)
(0,0)
            };
    \addplot[scatter,only marks,mark=*,red]
        coordinates {
(1,0)
(1,1)
(2,1)
(3,1)
(3,2)
(3,3)
(4,1)
(5,1)
(5,4)
(5,5)
(6,1)
(7,1)
(7,7)
(8,1)
(9,1)
(9,8)
(10,1)
(10,3)
(10,5)
(10,7)
            };
\end{axis}
\end{scope}
\begin{scope}[xshift=18cm,yshift=-20cm]
\begin{axis}[xlabel={\Large \bf  sum\_width\_decreasing\_sequence},
    ylabel={\Large \bf  sum\_width\_zigzag},
    title={\Large \bf  Length: 11},
    minor tick num=1,
    grid=both,grid style={line width=.1pt, draw=gray!10},
    major grid style={line width=.1pt,draw=gray!10},
    xmin=-1,xmax=12,
    ymin=-1,ymax=12,
    scatter/classes={
    c1={mark=square*,blue!100},
    c2={mark=square*,blue!100},
    c3={mark=square*,blue!100},
    c4={mark=square*,blue!100},
    c5={mark=square*,blue!100},
    c6={mark=square*,blue!100},
    c7={mark=square*,blue!100},
    c8={mark=square*,blue!100},
    c9={mark=square*,blue!100},
    c10={mark=square*,green!30},
    c11={mark=square*,green!20},
    c12={mark=square*,green!10}}]

    \addplot[scatter,only marks,
        scatter src=explicit symbolic]
        coordinates {
(0, 0) [c1]
(2, 0) [c1]
(4, 2) [c1]
(4, 0) [c1]
(4, 3) [c1]
(6, 4) [c1]
(6, 2) [c1]
(6, 3) [c1]
(6, 5) [c1]
(6, 0) [c1]
(8, 6) [c1]
(8, 4) [c1]
(8, 5) [c1]
(8, 7) [c1]
(2, 2) [c1]
(4, 4) [c1]
(6, 6) [c1]
(8, 8) [c1]
(10, 9) [c1]
(8, 3) [c1]
(8, 2) [c1]
(10, 8) [c1]
(8, 0) [c1]
(10, 6) [c1]
(10, 7) [c1]
(3, 0) [c2]
(5, 2) [c2]
(5, 0) [c2]
(5, 3) [c2]
(7, 4) [c2]
(7, 2) [c2]
(7, 0) [c2]
(7, 3) [c2]
(7, 5) [c2]
(9, 7) [c2]
(9, 5) [c2]
(9, 3) [c2]
(7, 6) [c2]
(9, 0) [c2]
(9, 2) [c2]
(9, 4) [c2]
(9, 6) [c2]
(10, 5) [c2]
(9, 8) [c2]
(10, 3) [c2]
(10, 0) [c2]
(10, 2) [c2]
(10, 4) [c2]
(11, 8) [c2]
(11, 6) [c2]
(11, 4) [c2]
(11, 2) [c2]
(11, 0) [c2]
            };
    \addplot[red]
        coordinates {
(0,0)
(11,0)
(11,8)
(10,9)
(8,8)
(0,0)
            };
    \addplot[scatter,only marks,mark=*,red]
        coordinates {
(1,0)
(1,1)
(2,1)
(3,1)
(3,2)
(3,3)
(4,1)
(5,1)
(5,4)
(5,5)
(6,1)
(7,1)
(7,7)
(8,1)
(9,1)
(10,1)
(11,1)
(11,3)
(11,5)
(11,7)
            };
\end{axis}
\end{scope}
\begin{scope}[xshift=27cm,yshift=-20cm]
\begin{axis}[xlabel={\Large \bf  sum\_width\_decreasing\_sequence},
    ylabel={ \Large \bf sum\_width\_zigzag},
    title={ \Large \bf Length: 12},
    minor tick num=1,
    grid=both,grid style={line width=.1pt, draw=gray!10},
    major grid style={line width=.1pt,draw=gray!10},
    xmin=-1,xmax=13,
    ymin=-1,ymax=13,
    scatter/classes={
    c1={mark=square*,blue!100},
    c2={mark=square*,blue!100},
    c3={mark=square*,blue!100},
    c4={mark=square*,blue!100},
    c5={mark=square*,blue!100},
    c6={mark=square*,blue!100},
    c7={mark=square*,blue!100},
    c8={mark=square*,blue!100},
    c9={mark=square*,blue!100},
    c10={mark=square*,green!30},
    c11={mark=square*,green!20},
    c12={mark=square*,green!10}}]

    \addplot[scatter,only marks,
        scatter src=explicit symbolic]
        coordinates {
(0, 0) [c1]
(2, 0) [c1]
(4, 2) [c1]
(4, 0) [c1]
(4, 3) [c1]
(6, 4) [c1]
(6, 2) [c1]
(6, 3) [c1]
(6, 5) [c1]
(6, 0) [c1]
(8, 6) [c1]
(8, 4) [c1]
(8, 5) [c1]
(8, 7) [c1]
(8, 2) [c1]
(8, 3) [c1]
(10, 8) [c1]
(2, 2) [c1]
(4, 4) [c1]
(6, 6) [c1]
(8, 8) [c1]
(10, 10) [c1]
(10, 9) [c1]
(10, 7) [c1]
(8, 0) [c1]
(10, 6) [c1]
(10, 4) [c1]
(10, 5) [c1]
(12, 10) [c1]
(3, 0) [c2]
(5, 2) [c2]
(5, 0) [c2]
(5, 3) [c2]
(7, 4) [c2]
(7, 2) [c2]
(7, 0) [c2]
(7, 3) [c2]
(7, 5) [c2]
(9, 6) [c2]
(9, 4) [c2]
(9, 2) [c2]
(9, 0) [c2]
(9, 7) [c2]
(9, 5) [c2]
(9, 8) [c2]
(9, 3) [c2]
(10, 3) [c2]
(7, 6) [c2]
(10, 0) [c2]
(10, 2) [c2]
(11, 9) [c2]
(11, 7) [c2]
(11, 5) [c2]
(11, 3) [c2]
(11, 0) [c2]
(11, 2) [c2]
(11, 4) [c2]
(11, 6) [c2]
(11, 8) [c2]
(12, 8) [c2]
(12, 6) [c2]
(12, 4) [c2]
(12, 2) [c2]
(12, 0) [c2]
            };
    \addplot[red]
        coordinates {
(0,0)
(12,0)
(12,10)
(10,10)
(0,0)
            };
    \addplot[scatter,only marks,mark=*,red]
        coordinates {
(1,0)
(1,1)
(2,1)
(3,1)
(3,2)
(3,3)
(4,1)
(5,1)
(5,4)
(5,5)
(6,1)
(7,1)
(7,7)
(8,1)
(9,1)
(9,9)
(10,1)
(11,1)
(11,10)
(12,1)
(12,3)
(12,5)
(12,7)
(12,9)
};
\end{axis}
\end{scope}
\end{tikzpicture}
\end{center}
\caption[Feasible and infeasible combinations of the result values of
two time-series constraints imposed on the same sequence whose length
is in $\Curly{9,10,11,12}$]{\label{fig:polytope_with_infeasible_values} 
Feasible points, shown as blue squares,
for the result variables $\Result_1,\Result_2$
of the conjunction of
$\constraint{sum\_width\_decreasing\_sequence}(X,\Result_1)$ and
$\constraint{sum\_width\_zigzag}(X,\Result_2)$ on the same time series $X = \XSeq$
for the values of $\seqlength$ in $\Curly{9,10,11,12}$;
red circles represent infeasible points inside the convex hull of feasible points,
while red straight lines depict the facets of the convex hull of feasible points.
}
\end{figure}


Next section develops a systematic approach for generating non-linear invariants characterising infeasible
combinations of~$\Result_1$ and~$\Result_2$ located within the convex hull of feasible combinations.


 \subsection{Discovering and Proving Invariants}
\label{sec:overview-mining}

Consider a conjunction of time-series constraints~$\gamma_1(X,\Result_1)$ and~$\gamma_2(X,\Result_2)$
imposed on the same time series $X$.
This work focuses on \emph{automatically extracting and proving} invariants that characterise
some subsets of infeasible combinations of~$\Result_1$ and~$\Result_2$ that are all located
inside the convex hull of feasible combinations of~$\Result_1$ and~$\Result_2$.
Our approach uses three sequential phases.

\noindent\hspace*{2pt}\textbullet\hspace*{2pt}\textsc{[generating data phase]}
The first phase is a preparatory work, namely \emph{generating data}.
For each time-series length~$\seqlength$ in~$[7,12]$, we generate \emph{all} feasible combinations
of the values of~$\Result_1$ and~$\Result_2$.
For each of the~$6$ lengths,
(\emph{i})~we compute the convex hull of feasible points of~$\Result_1$ and~$\Result_2$ using
Graham's scan~\cite{Graham72}, and
(\emph{ii})~we detect the set~$\InfeasibleSet$ of infeasible combinations of~$\Result_1$ and~$\Result_2$
in this convex hull.
  
\noindent\hspace*{2pt}\textbullet\hspace*{2pt}\textsc{[mining phase]}
The second phase, called the \emph{mining phase}, consists of extracting a hypothesis~$H$ describing
the set~$\InfeasibleSet$ of infeasible combinations of~$\Result_1$ and~$\Result_2$ from the generated data.
We represent this hypothesis as a disjunction of Boolean functions $f_i(\Result_1,\Result_2,\seqlength)$.

\noindent\hspace*{2pt}\textbullet\hspace*{2pt}\textsc{[proof phase]}
The third phase, called the \emph{proof phase}, consists in refining the discovered hypothesis~$\Hyp$
by validating some Boolean functions~$f_i$ and by refuting and eliminating others using
\emph{constant-size automata}.
A refined hypothesis, which is \emph{proved} to be correct in the general case,
i.e.~for \emph{any time-series length}, is called a \emph{description} of the set~$\InfeasibleSet$.


 \subsubsection{Mining Phase}
\label{sec:mining}
Consider a conjunction of two time-series constraints~$\gamma_1(X,\Result_1)$ and~$\gamma_2(X,\Result_2)$,
imposed on the same time series~$X$.
This section shows how to extract a hypothesis in the form of a \emph{disjunction of Boolean functions},
describing the infeasible combinations of values of~$\Result_1$ and~$\Result_2$ that are located within
the convex hull of feasible combinations.

There exist a number of works on learning a disjunction of predicates~\cite{bshouty17a}, and some special case,
where disjunction corresponds to a geometric concept~\cite{BshoutyGeomConcepts,CHEN199970}. 
Usually, the learner interacts with an oracle through various types of queries or with the user by receiving
positive and negative examples; the learner tries to minimise the number of such interactions to speed up convergence. 

In our case, the input data consists of the set of positive, called \emph{infeasible}, and negative, called \emph{feasible},
examples, which is finite and which is completely produced by our generating phase.
This allows exploring all possible inputs without any interaction.

We now present the components of our mining phase:
\begin{itemize}
\item
First, we describe our dataset, which consists of \emph{feasible} and \emph{infeasible} pairs
of the result values~$\Result_1,\Result_2$.
\item
Second, we define the space of concepts, \emph{hypotheses}, we can potentially extract from our dataset. 
\item
Third, we outline the \emph{target hypothesis} for time-series constraints, i.e.~what we are searching for.
\item
Finally, we briefly describe the algorithm used for finding the target hypothesis.
\end{itemize}

\paragraph{Input Dataset}
\label{sec:input-dataset}
We represent our generated data as the union of two sets of triples~$\DataPos$ (resp.\ $\DataNeg$) called
the set of feasible (resp.\ infeasible) examples, such that:
\begin{itemize}
\item
For every~$(k,p_1,p_2)$ (with $k\in[7,12]$) in~$\DataPos$, there exists at least one time series of length~$k$
that yields~$p_1$ and~$p_2$ as the values of~$\Result_1$ and~$\Result_2$, respectively.
\item
For every~$(k,p_1,p_2)$ (with $k\in[7,12]$) in~$\DataNeg$,
\begin{enumerate}
\item
there does not exist any time series of length~$k$ that would yield~$p_1$ and~$p_2$ as the values
of~$\Result_1$ and~$\Result_2$, respectively.
\item
$(p_1,p_2)$ is located within the convex hull of feasible combinations of~$\Result_1$ and~$\Result_2$.
\end{enumerate}
\end{itemize}

\paragraph{Space of Hypotheses}
\label{sec:hypothesis-space}
Every element of our hypothesis space is a disjunction of Boolean functions from a finite predefined set~$\HypSet$.
Each element of~$\HypSet$ is a conjunction $C_1\land C_2\land\dots\land C_{\NbPredicates}$ with every~$C_i$
being a predicate, called an \emph{atomic relation}, where the main atomic relations are:
\bigbreak

\begin{enumerate}[(i)]
\begin{minipage}{0.25\linewidth}
\item \hspace*{3pt}$\seqlength\geq c$\hspace*{0.5pt},
\item \hspace*{3pt}$\seqlength\bmod c=d$\hspace*{0.5pt},
\end{minipage}
\begin{minipage}{0.25\linewidth}
\item \hspace*{3pt}$\Result_j\bmod c=d$\hspace*{0.5pt},
\item \hspace*{3pt}$\Result_j\geq d$\hspace*{0.5pt},
\end{minipage}
\begin{minipage}{0.25\linewidth}
\item \hspace*{3pt}$\Result_j\leq d $\hspace*{0.5pt},
\item \hspace*{3pt}$\Result_j=c$\hspace*{0.5pt},
\end{minipage}
\begin{minipage}{0.25\linewidth}
\item \hspace*{3pt}$\Result_j=\Upp{\Result_j}(\seqlength)-c$\hspace*{0.5pt},
\item \hspace*{3pt}$\Result_j=c\cdot\Result_k+d$\hspace*{0.5pt},
\end{minipage}
\end{enumerate}
with~$c$ and~$d$ being natural numbers, and~$\Upp{\Result_k}(\seqlength)$ being the maximum possible value
of~$\Result_k$ given the constraint~$\gamma_k(\XSeq,\Result_k)$.
The intuition of these atomic relations is now explained:
\bigbreak
\begin{itemize} 
\item (i)~stands from the fact that many invariants are only valid for long enough time series.
\item (ii)~is motivated by the fact that the parity of the length of a time series is sometimes relevant.
\item (iii)~is justified by the fact that the parity of~$\Result_1$ or~$\Result_2$ can come into play.
\item (iv)~and~(v) are related to the fact that infeasible combinations of~$\Result_1$ and~$\Result_2$ can be located on a ray or an interval.
\item (vi)~and~(vii) are respectively linked to the fact that quite often infeasible combinations of~$\Result_1$ and~$\Result_2$ within
      the convex hull are very close to the minimum or the maximum values~\cite{BoundsConstraints} of~$\Result_k$ (with $k\in[1,2]$),
      i.e.~$c$ is a very small constant, typically~$0$ or~$1$.
\item (viii)~denotes the fact that some invariants correspond to a linear combination of~$\Result_1$ and~$\Result_2$.
\end{itemize}

\paragraph{Target Hypothesis}
\label{sec:target-hypothesis}
\begin{definition}[Boolean function consistent wrt a dataset]
A Boolean function of~$\HypSet$ is \emph{consistent} wrt a dataset~$\DataSet$ iff it is true for at least one infeasible example of~$\DataSet$,
and false for every feasible example of~$\DataSet$.
\end{definition}
For example, $\Result_1=\Result_2~\land~\Result_1\bmod 2=1$ is consistent with the dataset of \figref{fig:polytope_with_infeasible_values},
but the two Boolean functions~$\Result_1=13$ and~$\Result_1=\Result_2$ are not.
\begin{definition}[universally true Boolean function]
A Boolean function of~$\HypSet$ is \emph{universally true} if it is true for any time series of any length.
\end{definition}

\begin{definition}[target hypothesis]
The \emph{target hypothesis}~$H$ is the disjunction of all Boolean functions of~$\HypSet$ consistent with~$\DataSet$.
\end{definition}
Note that in the target hypothesis some Boolean functions can be subsumed by other Boolean functions.
We cannot do the subsumption analysis at this point since we do not yet know which Boolean functions are true or not.

\paragraph{Mining Algorithm}
\label{sec:mining-algorithm}
Our mining algorithm filters out all the Boolean functions not consistent with our dataset and
returns the disjunction of the remaining Boolean functions.
Note that the mining algorithm ignores Boolean functions involving the atomic relation
(i)~$\seqlength>c$, which is handled in the proof phase.
Remember that we run the algorithm only on the limited dataset $\DataSet_{[7,12]}$,
i.e.~the dataset generated from time series of length in~$[7,12]$. This is because sizes
that are too small lead to degenerate polytopes, while sizes that are too large are too expensive in terms of computation.


 \subsubsection{Proof Phase}
\label{sec:proof}
After extracting from~$\DataSet_{[7,12]}$ the target hypothesis
$\Hyp=\BoolFun_1\lor\BoolFun_2\lor\dots\lor\BoolFun_{\NbFunctions}$
characterising subsets of infeasible combinations of~$\Result_1$ and~$\Result_2$
that are all located within the convex hull of feasible combinations
of~$\Result_1$ and~$\Result_2$, we refine this hypothesis,
by keeping only universally true Boolean functions~$\BoolFun_i$.

Before presenting our proof technique, we look at the structure of the hypothesis~$\Hyp$.
Every Boolean function~$\BoolFun$ in~$\Hyp$ is of the form
$\BoolFun=\Predicate_1\land\Predicate_2\land\dots\land\Predicate_{\NbPredicates}$
and can be classified into one of the two following categories:

\begin{itemize}
\item
\textsc{Independent Boolean Function} means that every~$\Predicate_i$ is
an \emph{independent atomic relation}, i.e.~depends either on~$\Result_1$ or~$\Result_2$, but not on both.
For instance, $\Result_1=\Upp{\Result_1}(\seqlength)\land\Result_2\bmod 2=1$ is an independent Boolean function.

\bigbreak
\item
\textsc{Dependent Boolean Function} means that there exists at least one~$\Predicate_i$ that is
a \emph{dependent atomic relation}, i.e.~mentions both~$\Result_1$ and~$\Result_2$.
For instance, $\Result_1\bmod 2=1\land\Result_1=\Result_2+1$ is a dependent Boolean function.
\end{itemize}

\bigbreak
The proof of an invariant depends on its category.
We now show how to prove that an independent (resp.\ dependent) Boolean function is universally true.
\bigbreak

\paragraph{Proof of Independent Boolean Functions}
\label{sec:proof-independent-functions}

Since most atomic relations are independent, i.e.~cases~(i) to~(vii), we first focus on a necessary and sufficient
condition for proving that an independent Boolean function is universally true.
Such necessary and sufficient condition is given in the main result of this section,
namely \thref{th:non-linear-invariants-main}, provided that there exists constant-size automata
associated with the atomic relations in~$\BoolFun$.

\begin{definition}[set of supporting signatures for an atomic relation]
\label{def:atomic-supporting-signatures}
For an atomic relation~$\Predicate$, the \emph{set of supporting signatures}~$\HypTimeSeries{\Predicate}$
is the set of words in~$\Sigma^*$ such that, for every word in~$\HypTimeSeries{\Predicate}$ there exists
a time series satisfying~$\Predicate$, whose signature is this word.
\end{definition}
\begin{definition}[set of supporting signatures for a Boolean function]
\label{def:sup_sign_bool_func}
For an independent Boolean function $\BoolFun=\Predicate_1\land\Predicate_2\land\dots\land\Predicate_{\NbPredicates}$,
we define the \emph{set of supporting signatures}~$\HypTimeSeries{\BoolFun}$ as
$\bigcap\limits_{i=1}^{\NbPredicates}\HypTimeSeries{C_i}$.
\end{definition}

A Boolean function~$\BoolFun$ is universally true iff it describes infeasible combinations of~$\Result_1$ and~$\Result_2$
for any time-series length, and thus the set~$\HypTimeSeries{\BoolFun}$ is empty.

For any atomic relation~$\Predicate$ from~(i) to~(vii), i.e.~an independent atomic relation,
the corresponding set of supporting signatures is represented as the language of a constant-size automaton~$\Aut_{\Predicate}$.
Constant size means that the number of states of this automaton does not depend on the length of the input time series.
For a Boolean function $\BoolFun=\Predicate_1\land\Predicate_2\land\dots\land\Predicate_{\NbPredicates}$,
$\HypTimeSeries{\BoolFun}$ is simply the set of signatures recognised by the automaton obtained after intersecting
all~$\Aut_{\Predicate_i}$ (with $i\in[1,\NbPredicates]$).
This provides a necessary and sufficient condition for proving that a Boolean function~$\BoolFun$ is universally true.

\begin{theorem}[necessary and sufficient condition for an independent Boolean function to be universally true]
\label{th:non-linear-invariants-main}
Consider two time-series constraints~$\gamma_1(X,\Result_1)$ and~$\gamma_2(X,\Result_2)$ on the same time series~$X$,
and a Boolean function $\BoolFun(\Result_1,\Result_2,\seqlength)=\Predicate_1\land\Predicate_2\land\dots\land\Predicate_{\NbPredicates}$
such that, for every~$\Predicate_i$ there exists a constant-size automaton~$\Aut_{\Predicate_i}$.
The function~$\BoolFun$ is universally true iff the intersection of all automata
for~$\Aut_{\Predicate_i}$ (with $i\in[1,\NbPredicates]$) is empty.
\end{theorem}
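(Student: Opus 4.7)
The plan is to prove both directions by reducing universal truth of $\BoolFun$ to emptiness of the set of supporting signatures $\HypTimeSeries{\BoolFun}$, and then identifying that set with the language of the intersection automaton. The chain of reasoning I would set up is: $\BoolFun$ universally true $\iff$ $\HypTimeSeries{\BoolFun} = \emptyset$ $\iff$ $\bigcap_{i=1}^{\NbPredicates} \HypTimeSeries{\Predicate_i} = \emptyset$ $\iff$ $\bigcap_{i=1}^{\NbPredicates} \Language{\Aut_{\Predicate_i}} = \emptyset$.

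First, I would justify the passage from $\BoolFun$ being universally true to $\HypTimeSeries{\BoolFun} = \emptyset$. Since $\gamma_1$ and $\gamma_2$ belong to the class $\QreClass$ of value-independent time-series constraints (\defref{def:qre-class}), the values $\Result_1$ and $\Result_2$ produced by a time series depend solely on its signature and on~$\seqlength$; hence every atomic relation $\Predicate_i$ evaluates to the same truth value on any two time series sharing the same signature and length. It follows that the existence of a time series witnessing $\BoolFun$ together with $\gamma_1(X,\Result_1) \land \gamma_2(X,\Result_2)$ is equivalent to the existence of a signature in $\HypTimeSeries{\BoolFun}$; the non-existence of such a witness for every length is, by definition, the universal truth of $\BoolFun$.

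Second, I would establish the decomposition $\HypTimeSeries{\BoolFun} = \bigcap_{i=1}^{\NbPredicates} \HypTimeSeries{\Predicate_i}$. The inclusion $\subseteq$ is immediate from \defref{def:sup_sign_bool_func}. For the converse inclusion, the independence of the atomic relations is essential: since each $\Predicate_i$ depends on at most one of $\Result_1$, $\Result_2$, and $\seqlength$, and since, as noted above, the value of each $\Result_j$ is determined by the signature, any single time series whose signature lies in every $\HypTimeSeries{\Predicate_i}$ simultaneously satisfies all the $\Predicate_i$. Combining this with the assumption that $\HypTimeSeries{\Predicate_i} = \Language{\Aut_{\Predicate_i}}$ for every $i$, and with the fact that the intersection of the $\Aut_{\Predicate_i}$ recognises $\bigcap_i \Language{\Aut_{\Predicate_i}}$, closes the chain of equivalences.

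The step I expect to be the most delicate is the $\supseteq$ inclusion in the decomposition of $\HypTimeSeries{\BoolFun}$: a signature in $\HypTimeSeries{\Predicate_i}$ only asserts the existence of \emph{some} time series with that signature satisfying $\Predicate_i$, and a priori the witnesses required for different $\Predicate_i$ could be distinct time series. The value-independence property of $\QreClass$ is precisely what removes this obstacle, by collapsing all time series sharing a given signature into a single class on which each $\Predicate_i$ is either uniformly true or uniformly false. This is also the point at which the independence hypothesis on the atomic relations is genuinely used; the extension to dependent Boolean functions, handled separately in the paper, will require a different argument because a dependent atomic relation ties together $\Result_1$ and $\Result_2$ through the same underlying time series.
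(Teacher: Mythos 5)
Your proof is correct and follows essentially the same route as the paper, whose own ``proof'' is a single sentence deferring to Definitions~\ref{def:atomic-supporting-signatures} and~\ref{def:sup_sign_bool_func}; you simply unfold those definitions into the chain of equivalences they implicitly rely on. Your identification of the delicate point --- that a signature in $\bigcap_i\HypTimeSeries{\Predicate_i}$ a priori only guarantees \emph{separate} witnesses for each $\Predicate_i$, and that value-independence of the constraints in $\QreClass$ is what collapses these into a single witness --- is exactly the detail the paper leaves implicit, and you resolve it correctly.
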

The proof of \thref{th:non-linear-invariants-main} follows from
Definitions~\ref{def:atomic-supporting-signatures} and~\ref{def:sup_sign_bool_func}.

\vspace{5pt}
For some Boolean function $\BoolFun=C_1\land C_2\land\dots\land C_{\NbPredicates}$,
the set $\HypTimeSeries{\BoolFun}=\bigcap\limits_{i=1}^{\NbPredicates}\HypTimeSeries{C_i}$ may not be empty, but finite.
In this case, we compute the length~$c$ of the longest signature in~$\HypTimeSeries{\BoolFun}$,
and obtain a new Boolean function $\BoolFun'=\BoolFun\land\seqlength\geq c+1$.
By construction, the set~$\HypTimeSeries{\BoolFun'}$ is empty, thus~$\BoolFun'$ is universally true.

\bigbreak

\secref{sec:conditional-automata} will further show how to generate automata for independent atomic relations.
Every such automaton is called a \emph{conditional automaton}.

\paragraph{Proof of Dependent Boolean Functions}
\label{sec:proof-dependent-functions}

Some dependent Boolean functions, i.e.~case~(viii), can be handled by adapting the technique for generating linear
invariants described in~\secref{sec:linear-invariants}.

Consider two time-series constraints~$\gamma_1(\X,\Result_1)$ and~$\gamma_2(\X,\Result_2)$ on the same time series~$\X$.
We present here a method for verifying that the dependent Boolean function
$\Result_1-\Const\cdot\Result_2=1$, with~$\Const$ being either~$1$ or~$2$, is universally true.
Note that such Boolean function was extracted during the mining phase for~$17$ pairs of time-series constraints.

We prove by contradiction that the corresponding Boolean function is universally true.
Our proof consists of the following steps:
\begin{enumerate}

\item 
{\bf Assumption.}~~Assume that there exists a time series~$X$ such that $\Result_1-\Const\cdot\Result_2=1$.

\item 
{\bf Implication for the parity of~$\Result_1$ and $\Const\cdot\Result_2$.}~~When
$\Result_1-\Const\cdot\Result_2=1$, then~$\Result_1$ and~$\Const\cdot\Result_2$ have different parity.

\item 
{\bf Obtaining  a contradiction.}~~Since~$\Result_1 $ and $\Const\cdot\Result_2$ must have different parity,
there exists a value of~$b$ that is either~$0$ or~$1$ such that the conjunction
$\Result_1-\Const\cdot\Result_2= 1~\land~\Result_1\bmod 2=b~\land~\Const\cdot\Result_2\bmod 2=1-b$ holds.
In order to prove that $\Result_1-\Const\cdot\Result_2=1$ is infeasible, for either value of parameter~$b$,
we need to show that, either the obtained conjunction is infeasible, e.g.~when $\Const=2$ and~$b$ is~$0$,
or the method of \secref{sec:linear-invariants} produces a linear invariant
$\Result_1-\Const\cdot\Result_2\geq c$, with~$c$ being strictly greater than~$1$.
\end{enumerate}

If at this third step of our proof method the considered conjunction is feasible, and the desired invariant
$\Result_1-\Const\cdot\Result_2\geq c$~was not obtained, then we cannot draw any conclusion about the infeasibility
of $\Result_1-\Const\cdot\Result_2=1$.

In practice, for the $17$ pairs of time-series constraints, for which we extracted the Boolean function
$\Result_1-\Const\cdot\Result_2=1$, the method of~\secref{sec:linear-invariants} did indeed generate
the desired linear invariant, which proved that the considered Boolean function is universally true.

\input{figures/groups.tex}

\begin{example}[mining, proving and filtering non-linear invariants for the running example]
Consider the conjunction of the $\constraint{sum\_width\_decreasing\_sequence}(\X,\Result_1)$ and
the $\constraint{sum\_width\_zigzag}(\X,\Result_2)$ time-series constraints on the same time series~$\X$,
introduced in \exref{ex:running}.
For this conjunction, we now describe the result of the mining and the proving phases of our method,
as well as the dominance filtering, i.e.~discarding Boolean functions subsumed by some other Boolean function.
\begin{itemize}
\item
During the mining phase we extracted a disjunction of~$156$ Boolean functions.
Most Boolean functions, even if they are true, are redundant.
For example, the Boolean function $\Result_1=1~\land~\Result_2=1$ is
subsumed by~$\Result_1=1$, and thus can be discarded.
However, at this point we cannot do the dominance filtering since we do not yet know
which Boolean functions are universally true.
\item
During the proof phase we proved that~$95$ out of the extracted~$156$ Boolean functions are universally true.
\item
Finally, after the dominance filtering of the~$95$ proved Boolean functions we obtain the disjunction of
the following seven Boolean functions:

\begin{enumerate}
\begin{minipage}{0.26\linewidth}
\item[\ding{172}] $\Result_1=1$,
\item[\ding{175}] $\Result_1=3~\land~\Result_2\geq 1$,
\end{minipage}
\begin{minipage}{0.415\linewidth}
\item[\ding{173}] $\Result_2=1$,
\item[\ding{176}] $\Result_1=\Upp{\Result_1}(\seqlength)~\land~\Result_2\bmod 2=1$,
\end{minipage}
\begin{minipage}{0.4\linewidth}
\item[\ding{174}] $\Result_1=5~\land~\Result_2\geq 4$,
\item[\ding{177}] $\Result_1\bmod 2=1~\land~\Result_1=\Result_2$,
\end{minipage}
\end{enumerate}
\begin{enumerate}
\begin{minipage}{\linewidth}
\item[\ding{178}] $\seqlength\bmod 2=0~\land~~\Result_1=\Upp{\Result_1}(\seqlength)-1~\land~\Result_2=\Upp{\Result_2}(\seqlength)$.
\end{minipage}
\end{enumerate}

\end{itemize}

All four upper plots and the two lower plots on the left of \figref{fig:groups} contain the groups of infeasible combinations
of~$\Result_1$ and~$\Result_2$ corresponding to the Boolean functions from~\ding{172} to~\ding{177} for~$\seqlength$ being~$9$.
The two lower plots on the right of \figref{fig:groups} contain the infeasible combinations of~$\Result_1$ and~$\Result_2$
corresponding to the~\ding{178} Boolean function for~$\seqlength$ being~$10$ and~$12$, respectively.

The Boolean functions from~\ding{172} to~\ding{176} and~\ding{178} were proved by intersecting the automata for the atomic
relations in these Boolean functions, and check that it was empty.

In order to prove the dependent Boolean function~\ding{177}, we consider the conjunction of three constraints,
namely $\Result_1\bmod 2=1$, $\constraint{sum\_width\_decreasing\_sequence}$, and $\constraint{sum\_width\_zigzag}$.
Each of the three constraints can be represented by an automaton or by a register automaton satisfying the required properties
of the method of \secref{sec:linear-invariants}, which generates for this conjunction the invariant $\Result_1\geq\Result_2+2$.
This proves that~\ding{177} is a universally true Boolean function.

We now give an interpretation of five of those Boolean functions:

\begin{itemize}
\item
\ding{172} and~\ding{173} means that, in the languages of $\constraint{decreasing\_sequence}$ and $\constraint{zigzag}$,
respectively, there is no word consisting of one letter.

\item
\ding{176} means that, when a time series yields $\Upp{\Result_1}(\seqlength)$ as the value of~$\Result_1$,
every occurrence of~$\constraint{zigzag}$ in its signature must start and end  with~$\reg{>}$, and the length
of every word in the language of~$\constraint{zigzag}$ starting and ending with the same letter is even.

\item
\ding{177} is related to the fact that every word in the language of~$\constraint{zigzag}$ contains at least
one word of the language of $\constraint{decreasing\_sequence}$ as a factor, and every such factor is of even length.

\item
\ding{178} means that, when a time series yields $\Upp{\Result_2}(\seqlength)$ as the value of~$\Result_2$,
then its signature is a word in the language of~$\constraint{zigzag}$, and every occurrence of $\constraint{decreasing\_sequence}$
is of even length, and thus~$\Result_1$ must be even.
At the same time, $\Upp{\Result_1}(\seqlength)-1=\seqlength-1$ is odd, when~$\seqlength$ is even. 
\qedexample
\end{itemize}

\end{example}



\section{Synthesising Conditional Automata}
\label{sec:conditional-automata}
For the time-series constraints considered in this work we need to generate constant-size finite
automata representing a certain condition, e.g.~an automaton recognising the signatures of all and only all
time series with the maximum number of peaks.
Such automata are required for proving non-linear invariants parameterised by the time-series length,
described in \secref{sec:non-linear-invariants}, and also for the facet analysis of linear invariants,
described in \secref{sec:sharpness}.
This section shows how to synthesise a constant\nobreakdash-size automaton, i.e.~an automaton whose number
of states is independent, both from the input time-series length and from the values in an input time series,
accepting the signatures of all, and only all, time series satisfying atomic relations of \secref{sec:hypothesis-space}.
For brevity, we only consider the atomic relation (vii)~$\Result=\Upp{\Result}(\seqlength)-\Const$,
where~$\Result$ is constrained by some time-series constraint~$\CtrGamma(\XSeq, \Result)$,
with~$\CtrGamma$ being $\constraint{nb}\_\pattern$ or $\constraint{sum\_width}\_\pattern$,
and where~$\Upp{\Result}(\seqlength)$ is the maximum possible value of~$\Result$ yielded
by a time series of length~$\seqlength$.
This atomic relation is indeed the most difficult case for generating a constant-size automaton.
The construction associated with other atomic relations are described in~\cite{ekaterina_thesis}.
We start with an illustrative example.

\begin{example}[automaton for a gap atomic relation]
Consider the $\constraint{nb\_peak}(\XSeq,\Result)$ time-series constraint and a gap atomic relation~$\Predicate$ defined
by~$\Result=\Upp{\Result}(\seqlength)$.
We showed in~\cite{BoundsConstraints} that the maximum value of~$\Result$ for
a given time-series length~$\seqlength$ is $\Max{0,\Frac{\seqlength-1}{2}}$.
Hence, the automaton for~$\Predicate$ must recognise the signatures of all and only time series
yielding $\Max{0,\Frac{\seqlength-1}{2}}$ as the value of~$\Result$.

Part~(A) of \figref{fig:question_illustration} gives the minimal automaton accepting the set of signatures
reaching this upper bound, while Part~(B) lists all words of length~$4$ and~$5$ over the alphabet $\{\reg{<},\reg{=},\reg{>}\}$
having the maximum number of peaks, $2$ in this case, that can be obtained from the corresponding automaton.
\qedexample
\end{example}

\input{figures/question_illustration}

\noindent The rest of this section is organised as follows:
\begin{itemize}

\item\hspace*{0.1pt}[{\bf Gap Automaton}]
In the context of time-series constraints of the form $\constraint{nb}\_\pattern$ or $\constraint{sum\_width}\_\pattern$,
\secref{sec:gap-automaton} first introduces the notion of \emph{gap of a time series~$\x$}, which indicates how far apart
the result value of a time-series constraint yielded by~$\x$ is from the given upper bound; it then presents
the \emph{main contribution} of this section, namely, the notion of \emph{$\Shift$-gap automaton} for a time-series constraint,
i.e.~a constant-size automaton that only accepts integer sequences whose gap is~$\Shift$. 
Second, it gives a sufficient condition on the time-series constraint for the existence of such an automaton.
Third, it describes how to synthesise such~$\Shift$-gap automaton.
\begin{enumerate}

\item
\secref{sec:crucial-notions} introduces an intermediate notion, the \emph{loss of a time series} wrt a time-series constraint,
which is the maximum difference between the length of this time series and the length of the shortest time series yielding the
same result value of a time-series constraint.
For example, all words of length~$4$ (resp.\ $5$) in Part~(B) of \figref{fig:question_illustration} are the signatures
of time series whose gap is~$0$ and whose loss is~$0$ (resp.\ $1$). Part~(C) of \figref{fig:question_illustration} gives
two signatures of time series with gap (resp.\ loss)~$1$ and~$2$ (resp.\ $3$ and~$5$).

Finally, it introduces the notion of \emph{loss automaton}, i.e.~a register automaton used to compute the loss.
How to synthesise a loss automaton will be explained in \secref{sec:principal-conditions-nb}.

\item
\secref{sec:principal-conditions} introduces a sufficient condition in the form of a conjunction of four conditions
on a time-series constraint, called \emph{principal conditions} that, when satisfied, guarantee the existence of
the~$\Shift$-gap automaton.

  \begin{itemize}
  \item
  When the first three principal conditions hold, describing the set of time series whose gap is~$\Shift$ is equivalent
  to describing the set of time series whose loss belongs to a certain interval, depending on~$\Shift$.    
  \item
  When the fourth principal condition holds, there exists a loss automaton whose registers can either be monotonously
  increased or reset to a natural number.
  \end{itemize}

\item
For a given time-series constraint satisfying the four principal conditions and for any non-negative integer~$\Shift$,
\secref{sec:deriving-scheme} constructively proves the existence of the $\Shift$-gap automaton, i.e.~assuming the loss
automaton is known it shows how to construct the~$\Shift$-gap automaton.

\end{enumerate}
\item\hspace*{0.1pt}[{\bf Loss Automaton}]
For space reason \secref{sec:principal-conditions-nb} focuses only on the construction of the loss automaton for
the $\constraint{nb}\_\pattern$ family, the construction for the $\constraint{sum\_width}\_\pattern$ family being
described in~\cite{ekaterina_thesis}.

It introduces a sufficient condition on a regular expression~$\pattern$ such that, when~$\pattern$ satisfies this condition,
the $\constraint{nb}\_\pattern$ family satisfies the principal conditions of \secref{sec:principal-conditions}.
It also shows how to obtain a loss automaton for a $\constraint{nb}\_\pattern$ time-series constraint from
the seed transducer~\cite{Beldiceanu:synthesis} for~$\pattern$.
The main idea is to compute the \emph{regret} of every transition of the seed transducer as a special case of
\emph{minimax regret}~\cite{FrenchDecisionTheory,Savage} from decision theory, which gives the minimum additional cost
to pay when one action is chosen instead of another.
In~CP, the minimax regret has been used for assessing an extra cost when a  variable is assigned to a given value~\cite{CraigReluPascalDale}.
\end{itemize}

\subsection{Synthesising a $\boldsymbol{\Shift}$-gap Automaton for a Time-Series Constraint \label{sec:gap-automaton}}

We present the main contribution of this section namely a systematic method for deriving a~$\Shift$-gap automaton
for a time-series constraint, see \defref{def:gap-automaton}, satisfying certain conditions that will be given
in \defref{def:principal-conditions}.
We first introduce the \emph{gap of a ground time series} in \defref{def:gap}, and
the \emph{$\Shift$-gap automaton for a time-series constraint} in \defref{def:gap-automaton}.
Let~$\QreClass$ denote the set of time-series constraints of the $\constraint{nb}\_\pattern$
and $\constraint{sum\_width}\_\pattern$ families.

\begin{definition}[gap of a ground time series]
\label{def:gap}
Consider a time-series constraint~$\CtrGamma$ and a ground time series~$\x$ of length~$\seqlength$.
\Definition{gap}{$\x$ wrt $\CtrGamma$}{\CharArg{\Gap}{\CtrGamma}{\x}}{\QreClass\times \IntegersStar}{\Naturals}{}
It is the difference between the maximum value of~$\Result$ that could be yielded by a time series
of length~$\seqlength$, and the value of~$\Result$ yielded by~$\x$.
\end{definition}

\exref{ex:gap-loss} will illustrate the notion of gap for different time series.

\begin{definition}[$\Shift$-gap automaton]
\label{def:gap-automaton}
Consider a time-series constraint~$\CtrGamma$ and a natural number~$\Shift$.
The \emph{$\Shift$-gap automaton for~$\CtrGamma$} is a minimal automaton that accepts the signatures of all,
and only all, ground time series whose gap wrt~$\CtrGamma$ is~$\Shift$.
\end{definition}

\defref{def:principal-conditions} will further give a sufficient condition on a time-series
constraint~$\CtrGamma$ for the existence of a~$\Shift$-gap automaton for~$\CtrGamma$.
\begin{example}[\hspace*{1pt}$0$-gap automaton]
\label{ex:gap-automaton-definition}
The~$0$-gap automaton for $\constraint{nb}\_\PeakPatternName$ was given in Part~(A) of \figref{fig:question_illustration}.
It only recognises the signatures of ground time series containing the maximum number of peaks.
\qedexample
\end{example}
To construct the~$\Shift$-gap automaton for a time-series constraint~$\CtrGamma$
we introduce the notion of \emph{loss of a time series}.
For a time series of length~$\seqlength$, its loss is the difference between~$\seqlength$ and the length
of a shortest time series yielding the same result value of~$\CtrGamma$.
The main idea of our method for generating~$\Shift$-gap automata is that by knowing the loss of a time series,
and whether it contains at least one~$\pattern$-pattern or not, we can determine its gap.

We now describe how to derive the~$\Shift$-gap automaton for a time-series constraint~$\CtrGamma$.

\subsubsection{Defining the Loss and the Loss Automaton}
\label{sec:crucial-notions}

Consider a time-series constraint~$\CtrGamma$ and a natural number~$\Shift$.
\defref{def:loss} introduces the \emph{loss of a time series} wrt~$\CtrGamma$,
and \defref{def:loss-automaton} presents the notion of \emph{loss automaton} for~$\CtrGamma$.

\begin{definition}[loss of a time series]
\label{def:loss}
Consider a time-series constraint~$\CtrGamma$ and a ground time series~$\x$ of length~$\seqlength$.
\Definition{loss}{$\x$ wrt $\CtrGamma$}{\CharArg{\Loss}{\CtrGamma}{\x}}{\QreClass\times\IntegersStar}{\Naturals}{}
It is the difference between~$\seqlength$ and the length of a shortest time series that yields
the same result value of~$\CtrGamma$ as~$\x$.
\end{definition}

\begin{example}[gap and loss of a time series]
\label{ex:gap-loss}
Now we illustrate the computation of the gap and the loss.
Consider the $\constraint{nb}\_\PeakPatternName$ time-series constraint.
From~\cite{BoundsConstraints}, the maximum number of peaks in a time series of length~$\seqlength$
is~$\Max{0,\Frac{\seqlength-1}{2}}$.
\begin{itemize}
\item
The time series $\x^1=\Tuple{1,2,1,2,1,2,1}$ has a gap of~$0$ since it contains three peaks,
which is maximum, and a loss of~$0$ since any shorter time series has a smaller number of peaks.
\item
The time series $\x^2=\Tuple{1,2,1,2,1,1,1,1}$ has a gap of~$1$ since it has only two peaks,
when three is the maximum, and a loss of~$3$ since a shortest time series with~$2$ peaks is
of length~$5$.
\item
The time series $\x^3=\Tuple{1,1,1,0,0,1,1,1,1}$ has a gap of~$4$ since it has no peaks,
when the maximum is~$4$, and a loss of~$8$ since a shortest time series without any peaks is
of length~$1$.\qedexample
\end{itemize}
\end{example}

\begin{definition}[loss automaton for a time-series constraint]
\label{def:loss-automaton}
Consider a time-series constraint~$\CtrGamma$.
A \emph{loss automaton} for~$\CtrGamma$ is a register automaton over the alphabet~$\Curly{<,=,>}$
with a constant number of registers such that, for any ground time series~$\x$, it
returns~$\CharArg{\Loss}{\CtrGamma}{\x}$ after having consumed the signature of~$\x$.
\end{definition}

For the $\constraint{nb}\_\pattern$ and $\constraint{sum\_width}\_\pattern$ families,
a loss automaton can be synthesised from the seed transducer of the regular expression~$\pattern$.
For the $\constraint{nb}\_\pattern$ family, this  will be explained in \secref{sec:principal-conditions-nb}.

\subsubsection{Principal Conditions for Deriving a $\boldsymbol{\Shift}$-Gap Automaton}
\label{sec:principal-conditions}

Consider a~$g\_f\_\pattern$ time-series constraint, denoted by~$\CtrGamma$,
and a natural number~$\Shift$.
\defref{def:principal-conditions} formulates a sufficient condition, consisting of a conjunction
of four conditions, named \emph{principal conditions}, for the existence of
the~$\Shift$-gap automaton for~$\CtrGamma$.
The first three principal conditions express the idea that, knowing the loss of a time series and,
whether it has at least one~$\pattern$-pattern or not, fully determines the gap of this time series.
The fourth condition requires the existence of a loss automaton~$\Aut$ for~$\CtrGamma$,
whose registers may either monotonously increase, or be reset to a natural number,
and each accepting state of~$\Aut$ either accepts only signatures with at least one occurrence
of~$\pattern$, or accepts only signatures without any occurrence of~$\pattern$.

Before formulating the principal conditions, \defref{def:before-after-found} introduces the
notions of before-found and after-found state of a loss automaton.

\begin{definition}[before-found and after-found states]
\label{def:before-after-found}
Consider a loss automaton~$\Aut$ for the~$g\_f\_\pattern$ time-series constraint. 
An accepting state~$q$ of~$\Aut$ is a \emph{before-found} (resp.\ \emph{after-found}) state,
if there exists a time series~$\x$ without any $\pattern$-patterns
(resp.\ with at least one $\pattern$-pattern) such that,
after having consumed the signature of~$\x$, $q$~is the final state of~$\Aut$.
\end{definition}

Note that an accepting state of a loss automaton can have both statuses.

\begin{definition}[principal conditions]
\label{def:principal-conditions}
Consider a $\CtrGamma(\x, \Result)$ time-series constraint.
The \emph{four principal conditions on~$\CtrGamma$} are defined as follows:
\begin{enumerate}

\item
\label{cond:principal-1}
{\bf Gap-to-loss condition.}
There exists a function $\GapToLossFun\colon\QreClass\times\Naturals\times\Curly{0,1}\times\Naturals\rightarrow\Naturals$,
called the \emph{gap-to-loss function}, such that for any ground time series~$\x=\xseq$,
we have $\CharArg{\Loss}{\CtrGamma}{\x}$ being equal to
$\GapToLossFun(\CharArg{\Gap}{\CtrGamma}{\x},\Sgn{\Result},\seqlength)$,
where~$\Signum$ is the signum function.
Hence, in order to compute the loss of a ground time series it is enough to know
(i)~its gap,
(ii)~whether it has at least one~$\pattern$-pattern or not, and
(iii)~the length of this time series.

\item
\label{cond:principal-2}
{\bf Boundedness condition.}
For given values of $\CharArg{\Gap}{\CtrGamma}{\x}$ and~$\Sgn{\Result}$, and for any~$\seqlength$ in~$\Naturals$,
the value of the gap-to-loss function $\GapToLossFun(\CharArg{\Gap}{\CtrGamma}{\x},\Sgn{\Result},\seqlength)$
belongs to a bounded integer interval, called the \emph{loss interval wrt $\Tuple{\CharArg{\Gap}{\CtrGamma}{\x},\Sgn{\Result}}$}.

\item
\label{cond:principal-3}
{\bf Disjointedness condition.}
For a given value of~$\Sgn{\Result}$, and two different values of gap, $\Shift_1$ and~$\Shift_2$,
the loss intervals wrt~$\Tuple{\Shift_1,\Sgn{\Result}}$ and wrt~$\Tuple{\Shift_2,\Sgn{\Result}}$ are disjoint.

\item
\label{cond:principal-4}
{\bf Loss-automaton condition.}
There exists a loss automaton~$\Aut$ for~$\CtrGamma$ satisfying all the following conditions:
\begin{enumerate}

\item
\label{cond:principal-4-1}
Every register update of~$\Aut$ has one of the following forms:

  \begin{enumerate}
  \item
  The register is incremented by a natural number, or by the value of another register.
  \item
  The value of the register is reset to a natural number.
  \end{enumerate}

\item
\label{cond:principal-4-1b}
The initial values of the registers of~$\Aut$ are natural numbers.

\item
\label{cond:principal-4-1c}
The acceptance function of~$\Aut$ is a weighted sum with natural number coefficients of the last values
of the registers of~$\Aut$ after having consumed an input signature.

\item
\label{cond:principal-4-2}
The sets of before-found states and after-found states of~$\Aut$ are disjoint.
It means that, by knowing the final state of~$\Aut$ after having consumed the signature of any
ground time series~$\x$, we also know the value of~$\Sgn{\Result}$ yielded by~$\x$.
\end{enumerate}
\end{enumerate}
Conditions~\ref{cond:principal-1}.,~\ref{cond:principal-2}.,~\ref{cond:principal-3}.
are called the \emph{gap-loss-relation conditions},
Conditions~\ref{cond:principal-4-1},~\ref{cond:principal-4-1b},~\ref{cond:principal-4-1c}
are called the \emph{non-negativity conditions},
while Condition~\ref{cond:principal-4-2} is called the \emph{separation condition} on~$\Aut$.

\end{definition}

\begin{example}[principal conditions]
\label{ex:peak-gap-loss}
Consider a~$\CtrGamma(\x,\Result)$ time-series constraint.
For the time series~$\x^1$, $\x^2$, and~$\x^3$ of \exref{ex:gap-loss},
\figref{fig:gap-loss} shows the relation between the gap, the loss, the time-series lengths,
and~$\Result$ when~$\CtrGamma$ is $\constraint{nb}\_\PeakPatternName$.
For any time series~$\x^i$ (with $i\in[1,3]$) of length~$\seqlength_i$ yielding~$\Result_i$
as the value of~$\Result$, its gap (resp.\ loss) is equal to the length of the violet (resp.\ blue)
dotted line segment starting from the point~$\x^i$ of coordinates $(\seqlength_i,\Result_i)$.
Note that the boundedness and the disjointedness conditions are satisfied for $\constraint{nb}\_\PeakPatternName$.
\qedexample
\end{example}

\begin{figure}[!h]
\floatbox[{\capbeside\thisfloatsetup{capbesideposition={right,top}, 
capbesidewidth=0.65\textwidth}}]{figure}[\FBwidth]
{\caption{The horizontal (resp.\ vertical) axis represents the length of the sequence~$\seqlength$
(resp.\ the result value~$\Result$ of~$\CtrGamma=\constraint{nb}\_\PeakPatternName$).
The red curve shows the maximum value of~$\Result$ for a given~$\seqlength$;
any point~$\x^i$ with coordinates $(\seqlength_i,\Result_i)$ denotes all time series
of length~$\seqlength_i$ yielding~$\Result_i$ as the value of~$\Result$.
The length of the blue (resp.\ violet) dotted line-segments starting from~$\x^i$ equals
the loss (resp.\ gap) of~$\x^i$.
\label{fig:gap-loss}}}
{\scalebox{0.8}{\begin{tikzpicture}[x=0.4cm,y=0.4cm]
\begin{scope}[xshift=1cm]
\def\xmin{0}
\def\xmax{10}
\def\ymin{0}
\def\ymax{5}

\draw[style=help lines, ystep=1, xstep=1] (\xmin,\ymin) grid (\xmax,\ymax); 

\node[anchor=west,fill=white] at (0.01,4.2) {\color{red} \footnotesize
  $\Result=\max(0,\lfloor\frac{n-1}{2}\rfloor)$};

\draw (1,0) -- (2,0) [line width=0.4mm, color=red] node {};
\draw (2,0) -- (3,1) [line width=0.4mm, color=red] node {};
\draw (3,1) -- (4,1) [line width=0.4mm, color=red] node {};
\draw (4,1) -- (5,2) [line width=0.4mm, color=red] node {};
\draw (5,2) -- (6,2) [line width=0.4mm, color=red] node {};
\draw (6,2) -- (7,3) [line width=0.4mm, color=red] node {};
\draw (7,3) -- (8,3) [line width=0.4mm, color=red] node {};
\draw (8,3) -- (9,4) [line width=0.4mm, color=red] node {};
\draw (9,4) -- (10,4)[line width=0.4mm, color=red] node {};

\node at (7.5,3.6) { $\x^1$};
\node at (8.5,2.35) {$\x^2$};
\node at (9.5,0.6) {$\x^3$};

\draw (8,2) -- (5,2) [->, line width=0.35mm, dotted, color=blue] node{};
\draw[black,fill=blue] (5,2) circle [radius=1pt]; 

\draw (8,2) -- (8,3) [->, line width=0.35mm, dotted, color=violet] node{};
\draw[black,fill=violet] (8,3) circle [radius=1pt];

\draw (9,0) -- (1,0) [->, line width=0.35mm, dotted, color=blue] node{};
\draw[black,fill=blue] (1,0) circle [radius=1pt]; 

\draw (9,0) -- (9,4) [->, line width=0.35mm, dotted, color=violet] node{};
\draw[black,fill=violet] (9,4) circle [radius=1pt]; 

\draw (1,0) -- (1,-0.2) [line width=0.1mm, color=blue] node{};
\draw (1,-0.5) -- (1,-1) [line width=0.1mm, color=blue] node{};

\draw (9,0) -- (9,-0.2) [line width=0.1mm, color=blue] node{};
\draw (9,-0.5) -- (9,-1) [line width=0.1mm, color=blue] node{};

\draw (1,-0.8) -- (9,-0.8) [<->, line width=0.15mm, color=blue]
node[midway, below]{\textcolor{blue}{ $\CharArg{\Loss}{\CtrGamma}{\x^3}$}}; 

\draw (9,0) -- (11,0) [line width=0.1mm, color=violet] node{};

\draw (9,4) -- (11,4) [line width=0.1mm, color=violet] node{};

\draw (10.8,0) -- (10.8,4) [<->, line width=0.15mm, color=violet]
node[midway, below, sloped]{\textcolor{violet}{ $\CharArg{\Gap}{\CtrGamma}{\x^3}$}}; 

\draw[->] (\xmin,\ymin) -- (\xmax+0.3,\ymin) node[below] { $\seqlength$};
\draw[->] (\xmin,\ymin) -- (\xmin,\ymax+0.3) node[left]  {$\Result$};

\foreach \x in {1,...,9} \node at (\x, \ymin) [below] {\scriptsize $\x$};
\foreach \y in {0,...,4} \node at (\xmin,\y)  [left]  {\scriptsize $\y$};

\draw[black,fill=black] (7,3) circle [radius=1.5pt];
\draw[black,fill=black] (8,2) circle [radius=1.5pt];
\draw[black,fill=black] (9,0) circle [radius=1.5pt];
\end{scope}
\end{tikzpicture}}
}
\end{figure}


\subsubsection{Synthesising the $\boldsymbol{\Shift}$-Gap Automaton\label{sec:deriving-scheme}}

Consider a~$\CtrGamma$ time-series constraint satisfying all four principal conditions
of \secref{sec:principal-conditions}, and a natural number~$\Shift$.
We prove that the~$\Shift$-gap automaton for~$\CtrGamma$ exists.
First, \lemref{lem:boudedness-disjointedness} states a necessary and sufficient condition in terms
of loss for a ground time series to have its gap being a given constant when the gap-loss-relation
condition is satisfied.
This lemma allows one to describe in terms of loss the set of ground time series whose gap is~$\Shift$.
Then using the result of \lemref{lem:boudedness-disjointedness}, \thref{th:existence-gap-automaton}
constructively proves that the~$\Shift$-gap automaton for~$\CtrGamma$ exists.

\begin{lemma}[relation between gap and loss]
\label{lem:boudedness-disjointedness}
Consider a~$\CtrGamma(\x,\Result)$ time-series constraint such that the gap-loss-relation conditions,
see \defref{def:principal-conditions}, are all satisfied, and a natural number~$\Shift$.
Then, for a time series~$\x$, $\CharArg{\Gap}{\CtrGamma}{\x}$ is~$\Shift$
iff $\CharArg{\Loss}{\CtrGamma}{\x}$ belongs to the loss interval wrt $\Tuple{\Shift,\Sgn{\Result}}$.
\end{lemma}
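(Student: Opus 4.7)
The plan is to prove the two directions of the biconditional by direct application of the three gap-loss-relation conditions from \defref{def:principal-conditions}, namely the gap-to-loss, boundedness, and disjointedness conditions. The argument is essentially a uniqueness statement: the gap-to-loss function maps gap values into disjoint intervals, so knowing the loss pins down the gap uniquely.

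First, for the forward direction ($\Rightarrow$), I would assume $\CharArg{\Gap}{\CtrGamma}{\x} = \Shift$ and argue as follows. By the gap-to-loss condition (\ref{cond:principal-1}), we have $\CharArg{\Loss}{\CtrGamma}{\x} = \GapToLossFun(\CharArg{\Gap}{\CtrGamma}{\x}, \Sgn{\Result}, \seqlength) = \GapToLossFun(\Shift, \Sgn{\Result}, \seqlength)$, where $\seqlength$ is the length of $\x$ and $\Result$ is the value returned by $\CtrGamma$ on $\x$. By the boundedness condition (\ref{cond:principal-2}), the value $\GapToLossFun(\Shift, \Sgn{\Result}, \seqlength)$ belongs, by definition, to the loss interval wrt $\Tuple{\Shift, \Sgn{\Result}}$. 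This immediately gives membership of $\CharArg{\Loss}{\CtrGamma}{\x}$ in the desired interval.

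For the backward direction ($\Leftarrow$), I would proceed by contradiction. Suppose $\CharArg{\Loss}{\CtrGamma}{\x}$ lies in the loss interval wrt $\Tuple{\Shift, \Sgn{\Result}}$ but $\CharArg{\Gap}{\CtrGamma}{\x} = \Shift' \neq \Shift$. Applying the forward direction (already proved) to $\Shift'$, we conclude that $\CharArg{\Loss}{\CtrGamma}{\x}$ also lies in the loss interval wrt $\Tuple{\Shift', \Sgn{\Result}}$. Since both intervals share at least the point $\CharArg{\Loss}{\CtrGamma}{\x}$, they are not disjoint. But the disjointedness condition (\ref{cond:principal-3}) asserts that, for fixed $\Sgn{\Result}$ and distinct gap values $\Shift \neq \Shift'$, the corresponding loss intervals must be disjoint, a contradiction. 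Hence $\CharArg{\Gap}{\CtrGamma}{\x} = \Shift$.

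There is no serious obstacle here: each condition in \defref{def:principal-conditions} was designed precisely to enable this argument. The only point worth making explicit in the proof is that the value of $\Sgn{\Result}$ used when applying the disjointedness condition is the same on both sides (the sign associated with $\x$ itself), so that we really are comparing loss intervals indexed by the same second coordinate $\Sgn{\Result}$ but different first coordinates $\Shift$ and $\Shift'$.
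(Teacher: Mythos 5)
Your proposal is correct and follows essentially the same route as the paper, which simply states that the forward direction follows from the boundedness condition and the converse from the disjointedness condition; you have merely spelled out the details (invoking the gap-to-loss condition explicitly and phrasing the converse as a contradiction). No gaps.
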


\begin{proof}
The necessity follows from the boundedness condition, see \condref{cond:principal-2}, and the
sufficiency follows from the disjointedness condition, see \condref{cond:principal-3}
of \defref{def:principal-conditions}.
\end{proof}

\begin{theorem}[existence of the $\Shift$-gap automaton]
\label{th:existence-gap-automaton}
Consider a $g\_f\_\pattern(\x,\Result)$ time-series constraint, denoted by~$\CtrGamma$,
such that all four principal conditions, described in \defref{def:principal-conditions}, are satisfied.
Then the~$\Shift$-gap automaton for~$\CtrGamma$ exists.
\end{theorem}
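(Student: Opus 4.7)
The plan is to construct the $\Shift$-gap automaton by unfolding the loss automaton $\Aut$ guaranteed by principal condition~\ref{cond:principal-4} into a constant-size automaton without registers. The bridge is \lemref{lem:boudedness-disjointedness}: recognising time series whose gap is $\Shift$ is equivalent to recognising those whose loss lies in the interval $[\ell_{\min}(\Shift,s), \ell_{\max}(\Shift,s)]$ for the appropriate sign value $s = \Sgn{\Result} \in \{0,1\}$. Thus it suffices to build a constant-size automaton computing membership of the loss in this (bounded) interval, and then use the separation condition to pair the final state with the correct sign.

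First I would set $M = \max(\ell_{\max}(\Shift,0), \ell_{\max}(\Shift,1))$, which by the boundedness condition is a constant independent of $\seqlength$. I then define the state set of the new automaton as the Cartesian product $Q_{\Aut} \times \{0,1,\dots,M+1\}^{\acc}$, where $\acc$ is the number of registers of $\Aut$ and the value $M+1$ serves as an ``overflow'' marker. Transitions are inherited from $\Aut$, with each register update capped at $M+1$: any increment producing a value strictly greater than $M$ saturates the component to $M+1$, and resets to a natural-number constant $c$ produce the value $\min(c, M+1)$. A state $(q, \vec{v})$ is declared accepting iff (a)~$q$ is accepting in $\Aut$, (b)~evaluating the (weighted-sum) acceptance function on $\vec{v}$ yields a value in $[\ell_{\min}(\Shift,s), \ell_{\max}(\Shift,s)]$, where $s=0$ if $q$ is before-found and $s=1$ if $q$ is after-found (the separation condition makes this choice well-defined).

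Correctness follows in two parts. By construction, if the true register values of $\Aut$ after consuming the signature of $X$ all stay within $[0,M]$, then the capped simulation coincides with $\Aut$ exactly, so by \lemref{lem:boudedness-disjointedness} the new automaton accepts $X$ iff $\Gap_{\CtrGamma}(X)=\Shift$. If some register with positive coefficient in the acceptance function reaches $M+1$, then by the non-negativity conditions the true loss satisfies $\Loss_{\CtrGamma}(X) \geq M+1 > \ell_{\max}(\Shift,s)$, so $X$ does not satisfy the loss condition, and the capped simulation also rejects (its acceptance value computed with the overflow marker is at least $M+1$, hence outside the loss interval). The delicate case where a register overflows and is later reset back below $M$ is handled correctly because resets assign a fixed natural-number constant: after reset, the capped simulation again tracks the exact value, and any subsequent path that produces a final weighted sum in the interval reflects a genuine trajectory of $\Aut$ with that same sum. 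Finally, the state count $|Q_{\Aut}| \cdot (M+2)^{\acc}$ depends only on $\Shift$ and on $\Aut$, not on the input length, so minimising yields the claimed constant-size $\Shift$-gap automaton.

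The main obstacle is justifying the capping: I expect most of the proof text to go into showing that saturating registers at $M+1$ is sound even in the presence of register-to-register increments of the form $A_j \gets A_j + A_i$ and of resets to constants. The argument sketched above hinges on the facts that (i)~all coefficients in the acceptance function and all constants appearing in updates are non-negative, (ii)~resets introduce only constants (independent of history), and (iii)~the interval we test membership of lies entirely below the cap $M$; these are precisely the non-negativity and boundedness conditions of \defref{def:principal-conditions}, which is why all four principal conditions are needed.
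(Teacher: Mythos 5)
Your construction is essentially identical to the paper's: the same product state space $Q_{\Aut}\times[0,\MaxElement+1]^{\nacc}$ with registers saturated at one above the maximum of the two loss intervals, the same acceptance criterion pairing before-found/after-found states with the corresponding loss interval via the separation condition, and the same appeal to \lemref{lem:boudedness-disjointedness} to translate gap~$=\Shift$ into a bounded loss condition, followed by minimisation. Your discussion of why saturation is sound (non-negative coefficients, resets to constants, the tested interval lying below the cap) matches the paper's justification of the size of the unfolded automaton, so the proposal is correct and follows the paper's route.
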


\begin{proof}
Let us denote by~$\Aut$ the loss automaton for~$\CtrGamma$,
satisfying the non-negativity and the separation conditions.
Note that such automaton necessarily exists since the loss-automaton
condition, see \condref{cond:principal-4} of \defref{def:principal-conditions}, is satisfied.
We prove the theorem by explicitly constructing a constant-size automaton~$\FixedAut$ using~$\Aut$;
after minimising~$\FixedAut$ we obtain the sought~$\Shift$-gap automaton.

\bigbreak

\noindent{\bf [Construction of $\boldsymbol{\FixedAut}$]}
By \lemref{lem:boudedness-disjointedness}, there exist a loss interval~$\LossI{\Shift,0}$
wrt~$\Tuple{\Shift,0}$ and a loss interval~$\LossI{\Shift,1}$ wrt~$\Tuple{\Shift, 1}$
such that any ground time series~$\x$, whose gap is~$\Shift$, belongs to one of the following types:
\begin{itemize}
\item
\label{type-1}
{\bf Type~1.} The time series~$\x$ has no $\pattern$-patterns and the value of
$\CharArg{\Loss}{\CtrGamma}{\x}$ is in~$\LossI{\Shift,0}$.

\item
\label{type-2}
{\bf Type~2.} The time series~$\x$  has at least one $\pattern$-pattern and the value of
$\CharArg{\Loss}{\CtrGamma}{\x}$ is in~$\LossI{\Shift,1}$.
\end{itemize}
Hence, our goal is to construct a constant-size automaton~$\FixedAut$ that recognises
the signatures of all, and only all, ground time series that belongs either to Type~1 or to Type~2.

Let $\Tuple{\Acc_1,\Acc_2,\dots,\Acc_{\nacc}}$ denote the~$\nacc$ registers of the loss automaton~$\Aut$,
whose initial values are $\Tuple{v_1,v_2,\dots,v_{\nacc}}$,
let $\alpha(\Acc_1,\Acc_2,\dots,\Acc_{\nacc})$ denote
the acceptance function of~$\Aut$,
let~$\hat{\delta}$ be the transition function of~$\Aut$,
and let~$\MaxElement$ be the maximum element in $\LossI{\Shift,0}\cup\LossI{\Shift,1}$.
Then, the states, the initial state, the accepting states, and the transitions
of~$\FixedAut$ are defined as follows:
\begin{itemize}
\item {\bf States.}
For every state~$q$ of~$\Aut$, there are $(\MaxElement+2)^\nacc$ states in~$\FixedAut$,
each of which is labelled with $q_{i_1,i_2,\dots,i_{\nacc}}$,
with every~$i_j$ (with $j\in[1,\nacc]$) being in $[0,\MaxElement+1]$.

\item {\bf Initial state.}
If~$q^0$ is the initial state of~$\Aut$, then $q^0_{v_1,v_2,\dots,v_{\nacc}}$
is the initial state of~$\FixedAut$.

\item {\bf Accepting states.}
A state $q_{i_1,i_2,\dots, i_{\nacc}}$ of~$\FixedAut$ is accepting iff either
  \begin{enumerate}
  \item
  $q$ is a before-found state of~$\Aut$ and the value of $\alpha(i_1,i_2,\dots,i_{\nacc})$
  is within~$\LossI{\Shift,0}$, or 
  \item
  $q$ is an after-found state of~$\Aut$ and the value of $\alpha(i_1,i_2,\dots,i_{\nacc})$
  is within $\LossI{\Shift,1}$.
  \end{enumerate}
\item {\bf Transitions.}
There is a transition from state $q_{i_1,i_2,\dots, i_{\nacc}}$
(with $i_1,i_2,\dots,i_{\nacc}\in[0,\MaxElement+1]$) to state
$q^*_{k_1,k_2, \dots,k_{\nacc}}$ labelled with~$s$
in $\Curly{\reg{<},\reg{=},\reg{>}}$, if the value of the transition function
$\hat{\delta}(q,\Tuple{i_1,i_2,\dots,i_{\nacc}},$ $s)$ is equal to
$(q^*,\Tuple{i^*_1,i^*_2,\dots,i^*_\nacc})$, where every~$k_j$ is equal
to $\min(\MaxElement+1,i^*_j)$, with $j$ in $[1,\nacc]$.
\end{itemize}

\bigbreak

\noindent{\bf [Interpretation of the states of $\boldsymbol{\FixedAut}$]}
If after consuming the signature of some ground time series, the automaton~$\FixedAut$
arrives in a state $q_{i_1,i_2,\dots,i_{\nacc}}$, then after consuming the same signature,
the loss automaton~$\Aut$ arrives in state~$q$;
for every $j\in[1,\nacc]$, when $i_j\leq\MaxElement$ (resp.\ $i_j=\MaxElement+1$),
the register~$\Acc_j$ has value~$i_j$ (resp.\ is strictly greater than~$\MaxElement$).
Hence, the states of~$\FixedAut$ encode the register values of~$\Aut$ when consuming the same input signature.

\noindent{\bf [Size of $\boldsymbol{\FixedAut}$]}
By construction, the automaton~$\FixedAut$ has a constant size, i.e.~its number of states
is $m\cdot(\MaxElement+2)^{\nacc}$, where~$m$, $\nacc$ and~$\MaxElement$ are parameters,
i.e.~independent from the time-series length, respectively defined as:
\begin{itemize}
\item 
the number of states of~$\Aut$,
\item
the number of registers of~$\Aut$,
\item
the maximum value of $\LossI{\Shift,0}\cup\LossI{\Shift,1}$,
where~$\LossI{\Shift,0}$ and~$\LossI{\Shift,1}$ are bounded intervals
depending only on the constraint~$\CtrGamma$ and the gap~$\Shift$.
\end{itemize}

We explain why~$\FixedAut$ needs only $m\cdot(\MaxElement+2)^{\nacc}$ states to recognise the signatures of all,
and only all, ground time series of either Type~1 or Type~2.
By the boundedness condition (Condition~\ref{cond:principal-2} of \defref{def:principal-conditions})
and by definition of~$\MaxElement$, for any ground time series whose gap is~$\Shift$, its loss cannot exceed~$\MaxElement$.
We show that if, when consuming the signature of some ground time series, the value of some register of~$\Aut$
becomes greater than~$\MaxElement$, then we no longer need to know its exact value.

Recall that the acceptance function~$\alpha$ of~$\Aut$ is a weighted sum with natural coefficients of
the last values of the registers of~$\Aut$. 
If, for a register~$\Acc_j$, the corresponding coefficient in~$\alpha$ is zero, then it does not affect
the value of~$\alpha$, and the exact value of~$\Acc_j$ is irrelevant.
Otherwise, once the value of~$\Acc_j$ exceeds~$\MaxElement$, the value of~$\alpha$ also exceeds~$\MaxElement$,
and the loss of such a time series is greater than~$\MaxElement$.
By the non-negativity conditions, if the value of~$\Acc_j$ exceeds~$\MaxElement$ it can either increase even more,
or it can be reset to a natural constant.
In either case, the exact value of~$\Acc_j$ is irrelevant,
and it is enough to know a lower bound, $\MaxElement+1$ of its value.

\bigbreak

\noindent{\bf [Correctness of $\boldsymbol{\FixedAut}$]}
We now prove that the constructed automaton~$\FixedAut$ is sound,
i.e.~it recognises the signatures of \emph{only} ground time series of either Type~1 or Type~2,
and complete i.e.~it recognises the signatures of \emph{all} ground time series of either Type~1 or Type~2.
\begin{itemize}

\item
{\bf Soundness of $\boldsymbol{\FixedAut}$.}
We prove the soundness of~$\FixedAut$ by contradiction.
Assume there exists a ground time series~$\x$ recognised by~$\FixedAut$ and whose gap is not~$\Shift$.
Let $q_{i_1,i_2,\dots,i_\nacc}$ be the final state of~$\FixedAut$ after consuming the signature~$\SigSeq$ of~$\x$.
Due to the non-negativity conditions, by construction of~$\FixedAut$ this means that, after consuming~$\SigSeq$,
the register automaton~$\Aut$ finishes in the state~$q$ of~$\Aut$, and for every $j\in[1,\nacc]$,
if $i_j\leq\MaxElement$ (resp.\ $i_j =\MaxElement+1$), then the register~$\Acc_j$ has value~$i_j$
(resp.\ is strictly greater than~$\MaxElement$).
By the separation condition on~$\Aut$, the state~$q$ of~$\Aut$ is either a before-found or an after-found state. 
Since $q_{i_1,i_2,\dots,i_\nacc}$ is an accepting state of~$\FixedAut$,
then either~$q$ is a before-found state and $\alpha(i_1,i_2,\dots,i_\nacc)\in\LossI{\Shift,0}$,
or~$q$ is an after-found state and $\alpha(i_1,i_2,\dots,i_\nacc)\in\LossI{\Shift,1}$.
In the former (resp.\ latter) case, $\x$~belongs to Type~1 (resp.\ Type~2), and
by \lemref{lem:boudedness-disjointedness}, the gap of~$\x$ is~$\Shift$, a contradiction.

\item
{\bf Completeness of $\boldsymbol{\FixedAut}$.}
We prove the completeness of~$\FixedAut$ also by contradiction.
Assume there exists a ground time series~$\x$ whose gap is~$\Shift$, 
i.e.\ it belongs either to Type~1 or to Type~2, but its signature~$S$ is not recognised by~$\FixedAut$.
Then, 
 \begin{enumerate}
 \item
 \label{situation-1}
 either the final state $q_{i_1,i_2,\dots,i_\nacc}$ of~$\FixedAut$ after consuming~$S$ is not accepting, 
 \item
 \label{situation-2}
 or the automaton~$\FixedAut$ cannot consume the full signature~$S$.
 \end{enumerate}
We show that both situations are impossible.
\begin{itemize}
\item {\bf Impossibility of Situation \ref{situation-1}.}
Due to the non-negativity conditions, and by construction of~$\FixedAut$,
after having consumed the signature of~$\x$, the automaton~$\Aut$ ends in state~$q$ of~$\Aut$,
and the value of the acceptance function is equal to $\alpha(i_1,i_2,\dots,i_\nacc)$.
Since the gap of~$\x$ is~$\Shift$, by \lemref{lem:boudedness-disjointedness} and by the separation condition,
either~$q$ is a before-found state of~$\Aut$ and $\alpha(i_1,i_2,\dots,i_\nacc)$ belongs
to~$\LossI{\Shift,0}$ or $q$ is an after-found state of $\Aut$ and $\alpha(i_1,i_2,\dots,i_\nacc)$ belongs
to~$\LossI{\Shift,1}$.
In either case, the state $q_{i_1,i_2,\dots,i_\nacc}$ of~$\FixedAut$ must be accepting by construction,
thus Situation~\ref{situation-1} is impossible.

\item {\bf Impossibility of Situation \ref{situation-2}.}
Assume that (1)~at a state $q_{i_1,i_2,\dots,i_\nacc}$ of~$\FixedAut$, there does not exist a transition
labelled with some input symbol~$s$, and that (2)~$\FixedAut$ needs to trigger this transition when
consuming the signature of~$\x$.
Then, at state~$q$ of~$\Aut$, there does not exist a transition labelled with~$s$.
This contradicts the nature of the loss automaton~$\Aut$ since it must compute the loss of any ground time series,
and thus accept any time series.
Hence, Situation~\ref{situation-2} is also impossible.
\end{itemize}

Therefore, both situations are impossible, which implies that the time series~$\x$ does not exist,
and thus the automaton~$\FixedAut$ is complete.
\end{itemize}

Since~$\FixedAut$ is sound and complete, the minimisation of~$\FixedAut$ gives the sought~$\Shift$-gap automaton.
\hspace{\fill} \qed
\end{proof}


\subsection{Synthesising the Loss Automaton for the $\constraint{nb}\_\boldsymbol{\pattern}$ Family \label{sec:principal-conditions-nb}}
First, for the $\constraint{nb}\_\pattern$ family, we show that, when~$\pattern$ has a property,
named the \emph{\HomogeneityProp{,}} the first three principal conditions of \defref{def:principal-conditions} are satisfied.
Second, based on the~\HomogeneityProp{} we show how to satisfy the fourth principal condition by constructing from the seed
transducer for~$\pattern$ a loss automaton satisfying the loss-automaton condition.
Consequently, the constructive proof of \thref{th:existence-gap-automaton} can be used to derive the~$\Shift$-gap automaton.

\begin{sloppypar}
\begin{enumerate}
\item
\secref{sec:assumption-nb} introduces the~\HomogeneityProp{}.
Sections~\ref{sec:lemmas-nb} and~\ref{sec:loss-automata-nb} both assume the~\HomogeneityProp{}.
\item
\secref{sec:lemmas-nb} proves three theorems stating that, the gap-to-loss, the boundedness,
and the disjointedness conditions are satisfied for $\constraint{nb}\_\pattern$.
\item
\secref{sec:loss-automata-nb} gives a systematic method for constructing a loss automaton~$\Aut$
satisfying the non\nobreakdash-negativity and the separation conditions.
\end{enumerate}
\end{sloppypar}

\subsubsection{The \textsc{HOMOGENEITY} Property}
\label{sec:assumption-nb}

\begin{property}[\HomogeneityProp{}]
\label{prop:homogeneity}
A regular expression~$\pattern$ has the~\HomogeneityProp{} if the following conditions are both satisfied:

\begin{enumerate}
\item
The pair $\Tuple{\pattern,\Char{\Before}{\pattern}}$ is a \emph{recognisable pattern}~\cite{ASTRA:ICTAI17:generation}.
This implies that the seed transducer~$\Transducer{\pattern}$ for~$\pattern$ exists and can be constructed by
the method of~\cite{ASTRA:ICTAI17:generation}.
\item
For any state~$q$ of~$\Transducer{\pattern}$ that is the destination state of a~$\Found$-transition,
the number of transitions in the shortest~$\Found$-path starting from~$q$ is a constant that does not depend on~$q$.
\end{enumerate}
\end{property}

For a regular expression~$\pattern$ with the~\HomogeneityProp{}, the following lemma gives the maximum number
of~$\pattern$-patterns in a time series of length~$\seqlength$. 

\begin{lemma}[maximum of the result value]
\label{lem:max-value-nb}
Consider a time-series constraint $\constraint{nb}\_\pattern$ such that~$\pattern$ has the~\HomogeneityProp{},
and~$\Transducer{\pattern}$ denotes the seed transducer for~$\pattern$.
Let~$\D{\pattern}$ denote the length of shortest~$\Found$-path in~$\Transducer{\pattern}$
starting from any state that is the destination of a~$\Found$-transition,
and let~$\C{\pattern}$ denote the difference between~$\D{\pattern}$ and
the length of shortest~$\Found$-path in~$\Transducer{\pattern}$ starting
from the initial state of~$\Transducer{\pattern}$.
Then, the maximum number of~$\pattern$-patterns in a time series of length~$\seqlength$ is computed as 

\begin{equation}
\label{eq:nb-up}
\Frac{\seqlength-\C{\pattern}}{\D{\pattern}}.
\end{equation}

\end{lemma}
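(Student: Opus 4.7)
The plan is to bracket the maximum number of $\pattern$-patterns between matching upper and lower bounds, exploiting the regular structure imposed by the \HomogeneityProp{}. I proceed in three steps, and lean heavily on the determinism of seed transducers.

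First I set up the counting. For any time series $\x$ of length $\seqlength$, consuming its signature $\SigSeq$ (of length $\seqlength - 1$) in $\Transducer{\pattern}$ traces a unique path of $\seqlength-1$ consecutive transitions, and the number of $\pattern$-patterns in $\x$ equals the number $k$ of $\Found$-transitions on this path. I decompose the path at the $\Found$-transitions into a prefix ending at the first $\Found$-transition, followed by $k-1$ segments each connecting one $\Found$-transition to the next. By definition of $\C{\pattern}$, the prefix is a $\Found$-path starting at the initial state, so its length is at least $\D{\pattern} - \C{\pattern}$. By condition~2 of \propref{prop:homogeneity}, each subsequent segment starts at a state that is the destination of a $\Found$-transition, so its length is at least $\D{\pattern}$. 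Summing these lower bounds gives a minimum total prefix length of $(\D{\pattern} - \C{\pattern}) + (k-1)\D{\pattern} = k\D{\pattern} - \C{\pattern}$, which must fit in the $\seqlength-1$ transitions of the path; rearranging yields the claimed upper bound on $k$.

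Second, for the achievability side, I exhibit for each $\seqlength$ a signature of length $\seqlength - 1$ realising exactly $k_{\max} = \Frac{\seqlength - \C{\pattern}}{\D{\pattern}}$ $\Found$-transitions. The construction chains the symbols of a shortest $\Found$-path from the initial state with the symbols of $k_{\max} - 1$ copies of a shortest $\Found$-path from some post-$\Found$ state, and, if this block is strictly shorter than $\seqlength - 1$, pads it with non-$\Found$ transitions available at the state reached after the final $\Found$-transition (e.g.\ a self-loop or a short silent cycle). Any such word over $\Curly{\reg{<},\reg{=},\reg{>}}$ can be realised as the signature of an integer sequence, which provides the required time series of length $\seqlength$ with exactly $k_{\max}$ $\pattern$-patterns.

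The main obstacle is justifying the padding step cleanly: one must verify that the seed transducers produced by the construction of~\cite{ASTRA:ICTAI17:generation} always admit a non-$\Found$-emitting extension of the required length from the state reached after the last $\Found$-transition. This rests on structural features of seed transducers, namely that every reachable state after a $\Found$-transition has access to a non-$\Found$ self-loop or cycle that can absorb any number of extra input symbols without emitting a spurious $\Found$; the edge case where $\seqlength$ is too small even for one $\pattern$-pattern also requires a separate check, verifying that the floor evaluates to $0$ precisely in that regime.
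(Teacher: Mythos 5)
Your overall strategy --- counting $\Found$-transitions, splitting the consumed path at each $\Found$-transition, bounding the prefix by the shortest $\Found$-path from the initial state and each later segment by $\D{\pattern}$ via condition~2 of the \HomogeneityProp{}, then exhibiting a witness by chaining shortest $\Found$-paths and padding --- is exactly the argument the paper sketches in four sentences, and your treatment of achievability is in fact more careful than the paper's. One small remark: the worry about padding emitting a spurious $\Found$ is harmless, since any extra $\Found$ would only increase the count, which is already capped by your upper bound; all that matters is that the transducer, being complete, can consume the remaining input symbols at all.

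The genuine problem is the sentence ``rearranging yields the claimed upper bound.'' Write $\ell$ for the length of the shortest $\Found$-path from the initial state. Your decomposition gives $\ell+(k-1)\cdot\D{\pattern}\le\seqlength-1$, i.e.\ $k\le\Frac{\seqlength-1-\ell+\D{\pattern}}{\D{\pattern}}$, and this coincides with the displayed bound $\Frac{\seqlength-\C{\pattern}}{\D{\pattern}}$ only if $\C{\pattern}=\ell-\D{\pattern}+1$. You instead took $\ell=\D{\pattern}-\C{\pattern}$, reading the lemma's phrase ``the difference between $\D{\pattern}$ and [that length]'' literally; under that reading your inequality becomes $k\le\Frac{\seqlength-1+\C{\pattern}}{\D{\pattern}}$, which is not the stated formula. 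The discrepancy is real and visible on the paper's own running example: for $\PeakPatternName$ one has $\ell=\D{\pattern}=2$, so the literal definition gives $\C{\pattern}=0$ and the formula would read $\Frac{\seqlength}{\D{\pattern}}$, whereas the true maximum is $\Frac{\seqlength-1}{2}$ and the paper's examples take $\C{\pattern}=1=\ell-\D{\pattern}+1$. The lemma's wording (and the paper's own proof, which sets $\C{\pattern}=\ell-\D{\pattern}$) is off by one here, so you are not solely to blame; but a correct proof has to pin the constant down as $\C{\pattern}=\ell-\D{\pattern}+1$ and carry the distinction between signature length $\seqlength-1$ and sequence length $\seqlength$ through the floor, rather than assert that the rearrangement works out when, under your own conventions, it does not.
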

\begin{proof}
For any time series~$X=\XSeq$, there is a bijection between its set of~$\pattern$-patterns and the~$\Found$ symbols
in the output sequence of~$\Transducer{\pattern}$ after consuming the signature of~$X$.
Hence, we need to show that $\Frac{\seqlength-\C{\pattern}}{\D{\pattern}}$ is the maximum number of the~$\Found$ symbols
in the output sequence~$\OutputSeq$ of~$\Transducer{\pattern}$ after having consumed the signature of any time series of
length~$\seqlength$.
The first~$\Found$ symbol in~$\OutputSeq$ cannot occur before the position~$\ell$,
where~$\ell$ is the length of the shortest~$\Found$-path starting from the initial state.
Since~$\Transducer{\pattern}$ has the~\HomogeneityProp{} then every other~$\Found$ symbol
can occur in~$\OutputSeq$ with the interval of~$\D{\pattern}$.
Such an~$\OutputSeq$ output sequence has the number of ~$\Found$ symbols
being equal to $\Frac{\seqlength-(\ell-\D{\pattern})}{\D{\pattern}}$.
We replace $\ell-\D{\pattern}$ with~$\C{\pattern}$ and obtain Formula~(\ref{eq:nb-up}).
\hspace{\fill} \qed
\end{proof}

\subsubsection{Verifying the Gap-Loss-Relation Conditions}
\label{sec:lemmas-nb}

This section shows that the gap-loss-relation conditions, see~\defref{def:principal-conditions},
for a $\constraint{nb}\_\pattern$ time-series constraint are satisfied, assuming~$\pattern$ has the~\HomogeneityProp{.}
\thref{th:gap-to-loss-nb} proves the gap-to-loss condition and derives the formula for the gap-to-loss function;
\thref{th:boundedness-nb} proves the boundedness condition and derives the formula of loss interval for a given gap and
sign of the result value, and, finally, \thref{th:disjointedness-nb} proves the disjointedness condition.
\begin{theorem}[gap-to-loss condition]
\label{th:gap-to-loss-nb}
Consider a~$\CtrGamma(\x,\Result)$ time-series constraint that belongs to the $\constraint{nb}\_\pattern$ family
with~$\pattern$ having the~\HomogeneityProp{.}
First, the gap-to-loss condition is satisfied for~$\CtrGamma$.
Second, for any ground time series~$\x$ of length~$\seqlength$, the gap-to-loss function is defined by:
\begin{equation}
\label{eq:lemma-gap-loss}
\CharArg{\Loss}{\CtrGamma}{\x}=\CharArg{\Gap}{\CtrGamma}{\x}\cdot\D{\pattern}+
(1-\Sgn{\Result})\cdot(\min(\seqlength,\C{\pattern})-1)+\max(0,\seqlength-\C{\pattern})\bmod\D{\pattern}, 
\end{equation}
where~$\Signum$ is the signum function, and~$\C{\pattern}$ and~$\D{\pattern}$ are the constants from
the maximum value of~$\Result$ given in \lemref{lem:max-value-nb}.
\end{theorem}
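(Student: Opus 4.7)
The plan is to establish Equation~(\ref{eq:lemma-gap-loss}) by returning to the definitions of gap and loss, and directly computing, for each ground time series $\x$ yielding $k = \Result(\x)$ $\pattern$\nobreakdash-patterns, the length $m(k)$ of a shortest time series also yielding $k$ patterns, so that $\CharArg{\Loss}{\CtrGamma}{\x} = \seqlength - m(k)$. Lemma~\ref{lem:max-value-nb} gives $\Upp{\Result}(\seqlength) = \Frac{\seqlength - \C{\pattern}}{\D{\pattern}}$, hence $\CharArg{\Gap}{\CtrGamma}{\x} = \Upp{\Result}(\seqlength) - k$, and combining these two expressions will yield the target formula after elementary manipulation. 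Since $m(k)$ depends only on $k$ (given $\CtrGamma$), writing $k$ in terms of the gap, $\seqlength$, and $\Sgn{\Result}$ already shows that the gap\nobreakdash-to\nobreakdash-loss condition itself holds; the remainder of the argument concerns the explicit shape of $\GapToLossFun$.

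The core step is a case split driven by $\Sgn{\Result}$. If $k = 0$, the empty signature is the shortest signature without any $\pattern$\nobreakdash-pattern, so $m(0) = 1$ and the loss is $\seqlength - 1$. If $k \geq 1$, I would invert Lemma~\ref{lem:max-value-nb}: because the homogeneity property makes $\Upp{\Result}(\cdot)$ a non\nobreakdash-decreasing step function whose value jumps by exactly one each time its argument crosses $\C{\pattern} + j \cdot \D{\pattern}$ for $j \geq 1$, the smallest $m$ with $\Upp{\Result}(m) \geq k$ is $m(k) = \C{\pattern} + k \cdot \D{\pattern}$. The lower bound is an immediate consequence of Lemma~\ref{lem:max-value-nb}; the matching upper bound is realised by a time series whose signature is obtained by following the shortest $\Found$\nobreakdash-path of length $\C{\pattern} + \D{\pattern}$ from the initial state of $\Transducer{\pattern}$ and then chaining $k-1$ copies of a minimum\nobreakdash-length $\Found$\nobreakdash-cycle of length $\D{\pattern}$, both lengths being constants guaranteed by the homogeneity property.

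Verifying~(\ref{eq:lemma-gap-loss}) then reduces to an algebraic substitution. Using the identity $\D{\pattern} \cdot \Frac{\seqlength - \C{\pattern}}{\D{\pattern}} = (\seqlength - \C{\pattern}) - (\seqlength - \C{\pattern}) \bmod \D{\pattern}$, valid whenever $\seqlength \geq \C{\pattern}$, the right\nobreakdash-hand side of~(\ref{eq:lemma-gap-loss}) collapses to $\seqlength - \C{\pattern} - k \cdot \D{\pattern} = \seqlength - m(k)$ in the case $\Sgn{\Result} = 1$ (where necessarily $\seqlength \geq \C{\pattern}$), and to $\seqlength - 1$ in the case $\Sgn{\Result} = 0$. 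The $\min$ and $\max$ in the formula precisely absorb the degenerate subcase $\seqlength < \C{\pattern}$, in which no $\pattern$\nobreakdash-pattern fits, the gap is zero, and only the middle term contributes.

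The main obstacle I expect is not the algebra but the identification $m(k) = \C{\pattern} + k \cdot \D{\pattern}$ for $k \geq 1$, since it requires pairing the lower bound from Lemma~\ref{lem:max-value-nb} with the explicit iterative construction described above. The homogeneity property is used precisely here: it guarantees that the $\Found$\nobreakdash-cycle of length $\D{\pattern}$ can be iterated uniformly regardless of which destination state of a $\Found$\nobreakdash-transition is reached in $\Transducer{\pattern}$. Once $m(k)$ is pinned down, the remaining verification is a mechanical substitution by cases on $\Sgn{\Result}$ and on the sign of $\seqlength - \C{\pattern}$.
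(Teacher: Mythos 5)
Your proof is correct and follows essentially the same route as the paper's: a case split on $\Sgn{\Result}$, the observation that a shortest time series with no $\pattern$-pattern has length~$1$, and the floor/mod identity $\D{\pattern}\cdot\Frac{\seqlength-\C{\pattern}}{\D{\pattern}}=(\seqlength-\C{\pattern})-(\seqlength-\C{\pattern})\bmod\D{\pattern}$ to collapse the right-hand side. The only difference is that you explicitly establish $m(k)=\C{\pattern}+k\cdot\D{\pattern}$ for $k\geq 1$ via the lower bound from \lemref{lem:max-value-nb} plus the chained-$\Found$-path construction, a fact the paper absorbs implicitly into the identity $\Result=\bigl((\seqlength-\CharArg{\Loss}{\CtrGamma}{\x})-\C{\pattern}\bigr)/\D{\pattern}$ in its second case.
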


\begin{proof}
We successively consider two disjoint cases wrt~$\Sgn{\Result}$.

\bigbreak

\noindent {\bf [$\boldsymbol{\Sgn{\Result}}$ is zero]}
We need to prove that $\CharArg{\Loss}{\CtrGamma}{\x}$ is equal to
$\CharArg{\Gap}{\CtrGamma}{\x}\cdot\D{\pattern}+\min(\seqlength,\C{\pattern})-1+\max(0,\seqlength-\C{\pattern})\bmod\D{\pattern}$.
When~$\Result$ is zero, the loss of~$\x$ is~$\seqlength-1$ since a shortest time series without any~$\pattern$-patterns is of length~$1$.
Thus, we need to show that
$\CharArg{\Gap}{\CtrGamma}{\x}\cdot\D{\pattern}+\min(\seqlength,\C{\pattern})-1+\max(0,\seqlength-\C{\pattern})\bmod\D{\pattern}$
is equal to~$\seqlength-1$.
From the maximum value of~$\Result$, given by the~\HomogeneityProp{,} we have the following equality:
\begin{equation}
\label{eq:lemma-gap-loss-1}
\CharArg{\Gap}{\CtrGamma}{\x}=\Max{0,\Frac{\seqlength-\C{\pattern}}{\D{\pattern}}}-\Result=\Max{0,\Frac{\seqlength-\C{\pattern}}{\D{\pattern}}}. 
\end{equation}
Let us consider two cases wrt the value of $\CharArg{\Gap}{\CtrGamma}{\x}$, namely:

\begin{itemize}
\item
$\CharArg{\Gap}{\CtrGamma}{\x}$ is zero.
By~\eqref{eq:lemma-gap-loss-1}, $\seqlength<\C{\pattern}+\D{\pattern}$, and the value of the right-hand side of~\eqref{eq:lemma-gap-loss}
is equal to $\min(\seqlength,\C{\pattern})-1+\max(0,\seqlength-\C{\pattern})$, which is~$\seqlength-1$.

\item
$\CharArg{\Gap}{\CtrGamma}{\x}$ is positive.
Then, by~\eqref{eq:lemma-gap-loss-1}, $\seqlength\geq\C{\pattern}+\D{\pattern}$,
and we have the following equality: 
\begin{equation}
\label{eq:lemma-gap-loss-2}
\CharArg{\Gap}{\CtrGamma}{\x}=\Frac{\seqlength-\C{\pattern}}{\D{\pattern}}=
\frac{\seqlength-\C{\pattern}-(\seqlength-\C{\pattern})\bmod\D{\pattern}}{\D{\pattern}}
\end{equation}
From~\eqref{eq:lemma-gap-loss-2} we obtain the expression for $\seqlength-1$, which is
$\CharArg{\Gap}{\CtrGamma}{\x}\cdot\D{\pattern}+\C{\pattern}-1+(\seqlength-\C{\pattern})\bmod\D{\pattern}$.
\end{itemize}

\bigbreak

\noindent{\bf [$\boldsymbol{\Sgn{\Result}}$ is one]}
We need to prove that $\CharArg{\Loss}{\CtrGamma}{\x}$ is equal to
$\CharArg{\Gap}{\CtrGamma}{\x}\cdot\D{\pattern}+\max(0,\seqlength-\C{\pattern})\bmod\D{\pattern}$.
Since~$\Result$ is positive, $\seqlength$~is strictly greater than~$\C{\pattern}$, and thus
$\max(0,\seqlength-\C{\pattern})$ is equal to~$\seqlength-\C{\pattern}$.
Further, by definitions of gap and loss, we have:
\begin{equation}
\label{eq:lemma-gap-loss-3}
\CharArg{\Gap}{\CtrGamma}{\x}=\Frac{\seqlength-\C{\pattern}}{\D{\pattern}}-\Result=
\frac{\seqlength-\C{\pattern}-(\seqlength-\C{\pattern})\bmod\D{\pattern}}{\D{\pattern}}-\frac{(\seqlength-\CharArg{\Loss}{\CtrGamma}{\x})-\C{\pattern}}{\D{\pattern}} 
\end{equation}
Since on the right-hand side of~\eqref{eq:lemma-gap-loss-3}, both divisions are integer divisions we obtain:
\begin{equation}
\label{eq:lemma-gap-loss-4}
\CharArg{\Gap}{\CtrGamma}{\x}=\frac{\CharArg{\Loss}{\CtrGamma}{\x}-(\seqlength-\C{\pattern})\bmod\D{\pattern}}{\D{\pattern}}.
\end{equation}
By isolating $\CharArg{\Loss}{\CtrGamma}{\x}$ from~\eqref{eq:lemma-gap-loss-4} we obtain the formula of the theorem.
\hspace{\fill} \qed
\end{proof}

\begin{example}[gap-to-loss condition]
\label{ex:loss-to-gap-function-peak}
Consider a $\constraint{nb}\_\pattern(\XSeq,\Result)$ time-series constraint with~$\pattern$ being the~$\PeakPatternName$
regular expression, which has the~\HomogeneityProp{}.
Hence, we can apply~\thref{th:gap-to-loss-nb} for computing the gap-to-loss function for $\constraint{nb}\_\pattern$.
By~\lemref{lem:max-value-nb}, the maximum value of~$\Result$ is $\Max{0,\Frac{\seqlength-1}2}$, and thus~$\C{\pattern}$
and~$\D{\pattern}$, are~$1$ and~$2$, respectively.
Then the gap-to-loss function for $\constraint{nb}\_\pattern$ is

$\CharArg{\Loss}{\CtrGamma}{\x}=2\cdot\CharArg{\Gap}{\CtrGamma}{\x}+\max(0,\seqlength-1)\bmod 2$.
\qedexample
\end{example}

\begin{theorem}[boundedness condition]
\label{th:boundedness-nb}
Consider a~$\CtrGamma(\x,\Result)$ time-series constraint that belongs to the $\constraint{nb}\_\pattern$ family
with~$\pattern$ having the~\HomogeneityProp{.}
First, the boundedness condition is satisfied for~$\CtrGamma$;
second, for any given gap~$\Shift$ and any value of~$\Sgn{\Result}$, 
the loss interval~$\LossInt$ wrt~$\Tuple{\Shift,\Sgn{\Result}}$ is defined by:
\begin{enumerate}[(i)]
\item
$\LossMin\hspace*{1pt}=\Shift\cdot\D{\pattern}+(1-\Sgn{\Result})\cdot\Sgn{\Shift}\cdot(\C{\pattern}-1)$,

\item
$\LossMax=\D{\pattern}\cdot(\Shift+1)-1+(1-\Sgn{\Result})\cdot(\C{\pattern}-1)$.
\end{enumerate}

\end{theorem}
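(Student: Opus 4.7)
The plan is to derive both the boundedness property and the explicit formulas for $\LossMin$ and $\LossMax$ directly from the gap-to-loss formula established in \thref{th:gap-to-loss-nb}. Concretely, I fix the parameters $\Shift$ and $\sigma=\Sgn{\Result}$ and view the loss
$$\CharArg{\Loss}{\CtrGamma}{\x}=\Shift\cdot\D{\pattern}+(1-\sigma)\cdot(\min(\seqlength,\C{\pattern})-1)+\max(0,\seqlength-\C{\pattern})\bmod\D{\pattern}$$
as a function of $\seqlength$ alone, ranging over the set of lengths compatible with the constraints $\CharArg{\Gap}{\CtrGamma}{\x}=\Shift$ and $\Sgn{\Result}=\sigma$. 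Since the first summand is a fixed constant and the two remaining summands are bounded (the first by $(1-\sigma)(\C{\pattern}-1)$ and the second by $\D{\pattern}-1$), the loss itself is bounded, which immediately yields the boundedness condition; the only remaining work is a careful computation of the attained minimum and maximum.

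Next I would determine, for each of the four combinations of $\sigma\in\{0,1\}$ and $\Sgn{\Shift}\in\{0,1\}$, which values of $\seqlength$ are actually feasible. Using \lemref{lem:max-value-nb}, the equation $\CharArg{\Gap}{\CtrGamma}{\x}=\Shift$ rewrites as $\Max{0,\Frac{\seqlength-\C{\pattern}}{\D{\pattern}}}-\Result=\Shift$. Combined with the sign of $\Result$, this pins down the allowed range of $\seqlength$: for $\sigma=1$ one needs $\seqlength\in[\C{\pattern}+(\Shift+1)\D{\pattern},\;\C{\pattern}+(\Shift+2)\D{\pattern}-1]$ (so $\seqlength\geq\C{\pattern}$ always holds and the min/max/mod simplifications apply); for $\sigma=0$ and $\Shift>0$ one has $\seqlength\in[\C{\pattern}+\Shift\cdot\D{\pattern},\;\C{\pattern}+(\Shift+1)\D{\pattern}-1]$; and for $\sigma=0$ and $\Shift=0$ one has $\seqlength\in[1,\C{\pattern}+\D{\pattern}-1]$ so $\min(\seqlength,\C{\pattern})$ genuinely varies.

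Armed with these intervals, the extremisation is elementary. In the two cases $\sigma=1$ and in the case $\sigma=0,\Shift>0$, the length is always at least $\C{\pattern}$, so $\min(\seqlength,\C{\pattern})-1=\C{\pattern}-1$ is constant and $(\seqlength-\C{\pattern})\bmod\D{\pattern}$ achieves every value in $\{0,\dots,\D{\pattern}-1\}$ as $\seqlength$ traverses the allowed interval; this gives $\LossMin=\Shift\cdot\D{\pattern}+(1-\sigma)(\C{\pattern}-1)$ and $\LossMax=\LossMin+\D{\pattern}-1$, matching the claimed formulas after rewriting $(1-\sigma)(\C{\pattern}-1)$ as $(1-\sigma)\cdot\Sgn{\Shift}\cdot(\C{\pattern}-1)$ when $\Shift>0$. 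In the remaining case $\sigma=0,\Shift=0$, a short calculation (splitting on $\seqlength\leq\C{\pattern}$ versus $\seqlength>\C{\pattern}$) shows the loss reduces to $\seqlength-1$ and hence takes all values in $[0,\C{\pattern}+\D{\pattern}-2]$; the factor $\Sgn{\Shift}=0$ in the $\LossMin$ formula correctly kills the $(\C{\pattern}-1)$ contribution here.

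The main bookkeeping obstacle is ensuring that the compact closed-form for $\LossMin$ (with its $\Sgn{\Shift}$ factor) correctly subsumes the degenerate subcase $\sigma=0,\Shift=0$, where $\seqlength$ itself can be arbitrarily small and the $\min(\seqlength,\C{\pattern})$ term stops being constant; every other subcase is a routine application of the fact that $\seqlength-\C{\pattern}\bmod\D{\pattern}$ sweeps a full residue class. Once the case split is organised cleanly and the four endpoint computations are carried out, the theorem follows without further difficulty.
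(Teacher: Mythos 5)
Your proof follows essentially the same route as the paper's: substitute the gap-to-loss formula of \thref{th:gap-to-loss-nb}, fix $\Sgn{\Result}$ and $\Sgn{\Shift}$, and extremise the two variable summands $(1-\Sgn{\Result})\cdot(\min(\seqlength,\C{\pattern})-1)$ and $\max(0,\seqlength-\C{\pattern})\bmod\D{\pattern}$ over the admissible lengths, correctly isolating the one delicate subcase $\Sgn{\Result}=0$, $\Shift=0$, where $\min(\seqlength,\C{\pattern})$ is non-constant and the $\Sgn{\Shift}$ factor in $\LossMin$ is needed. One small inaccuracy: for $\Sgn{\Result}=1$ the feasible lengths are not confined to $[\C{\pattern}+(\Shift+1)\cdot\D{\pattern},\,\C{\pattern}+(\Shift+2)\cdot\D{\pattern}-1]$ --- that interval corresponds to $\Result=1$ only, whereas $\Result$ may be any positive integer, so $\seqlength$ is unbounded above; this is harmless here because for $\Sgn{\Result}=1$ the loss depends on $\seqlength$ only through $(\seqlength-\C{\pattern})\bmod\D{\pattern}$, which still sweeps $\{0,\dots,\D{\pattern}-1\}$, but the interval as you state it is not the set of admissible lengths.
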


\begin{proof}
Let~$\x$ be a ground time series of length~$\seqlength$ whose gap is~$\Shift$.
From~\thref{th:gap-to-loss-nb}, we have that $\CharArg{\Loss}{\CtrGamma}{\x}$ is
$\Shift\cdot\D{\pattern}+(1-\Sign)\cdot(\min(\seqlength,\C{\pattern})-1)+\max(0,\seqlength-\C{\pattern})\bmod\D{\pattern}$. 
By case analysis wrt the value of~$\Sign$, i.e.~either~$0$ or~$1$,
we now show that $\LossMin\leq\CharArg{\Loss}{\CtrGamma}{\x}\leq\LossMax$.

\bigbreak

\noindent {\bf[$\boldsymbol{\Sign}$ is zero]} 
In this case, $\CharArg{\Loss}{\CtrGamma}{\x}$ simplifies to
$\Shift\cdot\D{\pattern}+\min(\seqlength,\C{\pattern})-1+\max(0,\seqlength-\C{\pattern})\bmod\D{\pattern}$. 
Since $\Shift\cdot\D{\pattern}-1$ is a constant, in order to prove that $\LossMin$ (resp.\ $\LossMax$)
is a lower (resp.\ upper) bound on $\CharArg{\Loss}{\CtrGamma}{\x}$, we need to find the minimum (resp.\ maximum)
of the function~$z(\seqlength)=\min(\seqlength,\C{\pattern})+\max(0,\seqlength-\C{\pattern})\bmod\D{\pattern}$.
\begin{enumerate}[(i)]
\item
{\bf $\boldsymbol{\LossMin}\leq\boldsymbol{\CharArg{\Loss}{\CtrGamma}{\x}}$}.
We prove that $\CharArg{\Loss}{\CtrGamma}{\x}=\Shift\cdot\D{\pattern}+z(\seqlength)\geq\LossMin$
by case analysis on~$\Shift$:
\begin{enumerate}
\item
{\bf [$\boldsymbol{\Sgn{\Shift}}$ is zero]}  As shown in the proof of~\thref{th:gap-to-loss-nb},
$\seqlength<\C{\pattern}+\D{\pattern}$ and the minimum value of the function~$z(\seqlength)$ is~$1$,
and is reached for~$\seqlength$ being~$1$.
\item
{\bf [$\boldsymbol{\Sgn{\Shift}}$ is one]} We have~$\seqlength\geq\C{\pattern}+\D{\pattern}$,
and thus~$\min(\seqlength,\C{\pattern})$ is equal to~$\C{\pattern}$, and the minimum value of
the function~$z(\seqlength)$ is~$\C{\pattern}$.
\end{enumerate}

Hence, $\Shift\cdot\D{\pattern}+\Sgn{\Shift}\cdot(\C{\pattern} -1)$ is indeed
a lower bound on~$\CharArg{\Loss}{\CtrGamma}{\x}$ when~$\Sign$ is zero.

\item
{\bf $\boldsymbol{\LossMax}\geq\boldsymbol{\CharArg{\Loss}{\CtrGamma}{\x}}$}.
We prove that $\CharArg{\Loss}{\CtrGamma}{\x}\leq\LossMax$.
The maximum value of~$z(\seqlength)$ is $\C{\pattern}+\D{\pattern}-1$.
Hence, 
$\D{\pattern}\cdot(\Shift+1)-1+\C{\pattern}-1$ is indeed an upper bound on $\CharArg{\Loss}{\CtrGamma}{\x}$.

\end{enumerate}

\bigbreak

\noindent {\bf [$\boldsymbol{\Sign}$ is one]}
In this case,~$\CharArg{\Loss}{\CtrGamma}{\x}$ simplifies to
$\Shift\cdot\D{\pattern}+\max(0,\seqlength-\C{\pattern})\bmod\D{\pattern}$. 
A lower (resp.~upper) bound on $(\seqlength-\C{\pattern})\bmod\D{\pattern}$ is zero (resp.\ $\D{\pattern}-1$).
Hence, $\LossMin$ and~$\LossMax$ are, respectively, a lower and an upper bound on~$\CharArg{\Loss}{\CtrGamma}{\x}$.
\hspace{\fill} \qed
\end{proof}

\begin{example}[boundedness condition]
\label{ex:boundedness-peak}
Consider a $\constraint{nb}\_\pattern(\x,\Result)$ time-series constraint with~$\pattern$ being the $\PeakPatternName$ regular expression.
Since~$\pattern$ has the~\HomogeneityProp{} we can apply \thref{th:boundedness-nb} for computing the loss interval
for $\constraint{nb}\_\pattern$.
Recall that the values of~$\C{\pattern}$ and~$\D{\pattern}$, are respectively, $1$~and~$2$.
Then, for any value~$\Shift$ of gap and any value of~$\Sgn{\Result}$,
the loss interval wrt~$\Tuple{\Shift,\Sign}$ is $[2\cdot\Shift,2\cdot\Shift+1]$.
\qedexample
\end{example}

\begin{theorem}[disjointedness condition]
\label{th:disjointedness-nb}
Consider a $\constraint{nb}\_\pattern(\XSeq,\Result)$ time-series constraint such that~$\pattern$ has the~\HomogeneityProp{.}
Then the disjointedness condition is satisfied for~$\constraint{nb}\_\pattern$.
\end{theorem}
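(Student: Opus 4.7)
The plan is to prove the disjointedness of loss intervals directly by computation, leveraging the explicit closed-form expressions for $\LossMin$ and $\LossMax$ obtained in \thref{th:boundedness-nb}. Without loss of generality, I fix a sign value $\Sign \in \{0,1\}$ and two integer gaps with $\Shift_1 < \Shift_2$, and show $\LossMax(\Shift_1, \Sign) < \LossMin(\Shift_2, \Sign)$, which suffices since the loss intervals are closed integer intervals.

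First I would substitute the formulas of \thref{th:boundedness-nb}. The claim reduces to the inequality
\[
\D{\pattern}(\Shift_1 + 1) - 1 + (1-\Sign)(\C{\pattern} - 1) \;<\; \Shift_2 \cdot \D{\pattern} + (1-\Sign)\cdot\Sgn{\Shift_2}\cdot(\C{\pattern}-1).
\]
I would then split into two cases depending on $\Sign$. In the case $\Sign = 1$, the $(1-\Sign)$ terms vanish on both sides, and the inequality becomes $\D{\pattern}(\Shift_1+1-\Shift_2) < 1$; since $\Shift_2 \geq \Shift_1 + 1$ and $\D{\pattern} \geq 1$ (a $\Found$-path contains at least one $\Found$-transition), the left-hand side is non-positive and the inequality is immediate.

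In the case $\Sign = 0$, I further subdivide according to whether $\Shift_1 = 0$ or $\Shift_1 \geq 1$. If $\Shift_1 \geq 1$, then $\Sgn{\Shift_2} = 1$ as well, the $\C{\pattern}-1$ terms cancel, and the inequality again reduces to $\D{\pattern}(\Shift_1 + 1 - \Shift_2) < 1$, which holds as before. If $\Shift_1 = 0$ (so $\Sgn{\Shift_1} = 0$ while $\Sgn{\Shift_2} = 1$), the inequality simplifies to $\D{\pattern} - 1 < \Shift_2 \D{\pattern}$, i.e.~$\D{\pattern}(1-\Shift_2) < 1$, which holds since $\Shift_2 \geq 1$.

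There is no genuine obstacle: once \thref{th:boundedness-nb} is in hand, the proof is pure arithmetic and fits into a few lines. The only minor point requiring care is the subcase split on $\Sgn{\Shift_1}$ in the $\Sign=0$ branch, which arises from the $\Sgn{\Shift}$ factor appearing in $\LossMin$ but not in $\LossMax$. This asymmetry is precisely what could, a priori, make the intervals overlap at $\Shift = 0$, so I would emphasise that verification most explicitly.
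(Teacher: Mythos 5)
Your proposal is correct and follows the same route as the paper: the paper's proof is a one-paragraph sketch that says to compare, for a fixed value of $\Sgn{\Result}$, the upper limit of the loss interval for the smaller gap against the lower limit for the larger gap using the formulas of \thref{th:boundedness-nb}, which is exactly the computation you carry out (including the only delicate point, the $\Sgn{\Shift}$ asymmetry at $\Shift_1=0$ in the $\Sign=0$ branch). Your version simply makes explicit the arithmetic that the paper leaves to the reader.
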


\begin{proof}
The disjointedness condition can be proved using the formula of the loss interval of~\thref{th:boundedness-nb}.
For each value of~$\Sgn{\Result}$, i.e.~either $0$~or~$1$, we take two different values of gap,
w.l.o.g.~$\Shift$ and~$\Shift+t$ with a non-negative integer~$t$, and show that the upper limit of the loss interval
wrt~$\Tuple{\Shift,\Sgn{\Result}}$ is strictly less than the lower limit of the loss interval wrt $\Tuple{\Shift+t,\Sgn{\Result}}$.
This implies the disjointedness condition.

~\hspace{\fill} \qed
\end{proof}

\subsubsection{Verifying the  Loss-Automaton Condition}
\label{sec:loss-automata-nb}

We focus on the loss-automaton condition for the $\constraint{nb}\_\pattern$ time-series constraints,
i.e.~we construct a loss automaton~$\Aut$ for $\constraint{nb}\_\pattern$ satisfying the non-negativity and the separation conditions.
This is done by deriving~$\Aut$ from a seed transducer for~$\pattern$, which exists assuming~$\pattern$ has
the~\HomogeneityProp{~\cite{ASTRA:ICTAI17:generation}.}
In order to satisfy the separation condition for the loss automaton for $\constraint{nb}\_\pattern$,
we require the seed transducer for~$\pattern$ to have a specific form that we now introduce in~\defref{def:separated-transducer}.

\begin{definition}[separated seed transducer]
\label{def:separated-transducer}
Given a regular expression~$\pattern$, a seed transducer~$\Transducer{\pattern}$ for~$\pattern$ is \emph{separated}
iff for any state~$q$ of~$\Transducer{\pattern}$, one of the two following conditions holds:
\begin{enumerate}
\item
Any path from the initial state of~$\Transducer{\pattern}$ to~$q$ is a~$\Found$-path.
\item
There are no~$\Found$-paths from the initial state of~$\Transducer{\pattern}$ to~$q$.
\end{enumerate}
\end{definition}

\begin{figure}[!h]
\scalebox{0.8}{
\begin{tikzpicture}
\begin{scope}[xshift=-2.3cm,yshift=-0.6cm,->,>=stealth',shorten >=1pt,auto,node distance=32mm,semithick,
              information text/.style={rounded corners=1pt,inner sep=1ex}]
\node[initial,initial where=above,initial text=,initial distance=3mm,accepting,state,draw=black,fill=white] (s) {$s$};
\node[accepting,state,draw=black,fill=white] (r)[below left of=s] {$r$};
\node[accepting,state,draw=black,fill=white] (t)[below right of=s] {$t$};
\path
      (s)    edge [loop right,draw=black]    node{\scriptsize $\begin{array}{c} > 
                                    \hspace*{2pt}:{\color{black}\mathbf{not\_found}}
                                    \end{array}$} (s)
       (s)    edge [draw=black,out=65,in=30,looseness=7] node[right]{\scriptsize $\begin{array}{c} = 
                                    \hspace*{2pt}:{\color{black}\mathbf{not\_found}}
                                    \end{array}$} (s)
      (s)    edge [bend right,draw=black]    node[left] {\scriptsize $\begin{array}{c} < 
                                    \hspace*{2pt}:{\color{black}\mathbf{not\_found}}
                                    \end{array}$} (r)
      (r)    edge [draw=black]    node{\scriptsize $\begin{array}{c} > 
                                    \hspace*{2pt}:{\color{black}\mathbf{found}}
                                    \end{array}$} (t)
      (r)    edge [loop left,draw=black]    node[left]{\scriptsize $\begin{array}{c} < 
                                    \hspace*{2pt}:{\color{black}\mathbf{not\_found}}
                                    \end{array}$} (r)
 (r)    edge [loop below,draw=black]    node[below]{\scriptsize $\begin{array}{c} = 
                                    \hspace*{2pt}:{\color{black}\mathbf{not\_found}}
                                    \end{array}$} (r)
      (t)    edge [loop above,draw=black]    node[above]{\scriptsize $\begin{array}{c} > 
                                    \hspace*{2pt}:{\color{black}\mathbf{not\_found}}
                                    \end{array}$} (t)
      (t)    edge [loop below,draw=black]    node[below]{\scriptsize $\begin{array}{c} = 
                                    \hspace*{2pt}:{\color{black}\mathbf{not\_found}}
                                    \end{array}$} (t)
      (t)    edge [bend left,draw=black]    node {\scriptsize $\begin{array}{c} < 
                                    \hspace*{2pt}:{\color{black}\mathbf{not\_found}}
                                    \end{array}$} (r);
\node at (-3.8,-4.5) {\small (A)};
\end{scope}
\begin{scope}[xshift=4.5cm,yshift=-0.6cm,->,>=stealth',shorten >=1pt,auto,node distance=24mm,semithick,
              information text/.style={rounded corners=1pt,inner sep=1ex}]
 \node[initial,initial where=above,initial text=,initial distance=3mm,accepting,state,draw=black, fill=white] (t) {$s$};
 \node[accepting,state,fill=white]                                                                (s) at (3.33,0) {$r$};
 \node[accepting,state, fill=white]                                                               (s1)[below of=t]{$r'$};
      \node[accepting,state, fill=white]                                                          (t1)[below of=s]{$t$};
      \path
        (t) edge [loop left,draw=black]               node [left] {\scriptsize $\begin{array}{c} 
                                    =\hspace*{2pt}:{\color{black}\mathbf{not\_found}}
                                    \end{array}$} (t)
 (t) edge [loop below,draw=black]               node [below] {\scriptsize $\begin{array}{c} 
                                    >\hspace*{2pt}:{\color{black}\mathbf{not\_found}}
                                    \end{array}$} (t)
        (t) edge [draw=black] node [above]{\scriptsize $\begin{array}{c}  
                                    <\hspace*{2pt}:{\color{black}\mathbf{not\_found}}
                                    \end{array}$}  (s)
	 (s) edge [loop right,draw=black]              node [right]{\scriptsize $\begin{array}{c} 
                                    =\hspace*{2pt}:{\color{black}\mathbf{not\_found}}
                                    \end{array}$}  (s)
(s) edge [loop above,draw=black]              node [above]{\scriptsize $\begin{array}{c} 
                                    <\hspace*{2pt}:{\color{black}\mathbf{not\_found}}
                                    \end{array}$}  (s)
        (s) edge [draw=black] node [right]{\scriptsize $\begin{array}{c} 
                                    >\hspace*{2pt}:{\color{black}\mathbf{found}}
                                    \end{array}$}  (t1)
        (t1)edge [loop right,draw=black]              node [right]{\scriptsize $\begin{array}{c} 
                                    =\hspace*{2pt}:{\color{black}\mathbf{not\_found}}
                                    \end{array}$}  (t1)
	(t1)edge [loop below,draw=black]              node [below]{\scriptsize $\begin{array}{c} 
                                    >\hspace*{2pt}:{\color{black}\mathbf{not\_found}}
                                    \end{array}$} (t1)
	(t1)edge [bend angle=20,bend left] node [below]{\scriptsize $\begin{array}{c}  
                                    <\hspace*{2pt}:{\color{black}\mathbf{not\_found}}
                                    \end{array}$}(s1)
        (s1)edge [loop left,draw=black]               node [left]{\scriptsize $\begin{array}{c} 
                                    =\hspace*{2pt}:{\color{black}\mathbf{not\_found}}
                                    \end{array}$}  (s1)
(s1)edge [loop below,draw=black]               node [below]{\scriptsize $\begin{array}{c} 
                                    <\hspace*{2pt}:{\color{black}\mathbf{not\_found}}
                                    \end{array}$}  (s1)
        
        (s1)edge [draw=black] node [above]{\scriptsize $\begin{array}{c} > 
                                    \hspace*{2pt}:{\color{black}\mathbf{not\_found}}
                                    \end{array}$}  (t1);
\node at (-3,-4.5){\small (B)};
\end{scope}
\end{tikzpicture}}
\caption{\label{fig:peak-transducer-separated} (A)~Seed transducer and (B)~separated seed transducer for the $\PeakPatternName$ regular expression.}
\end{figure}
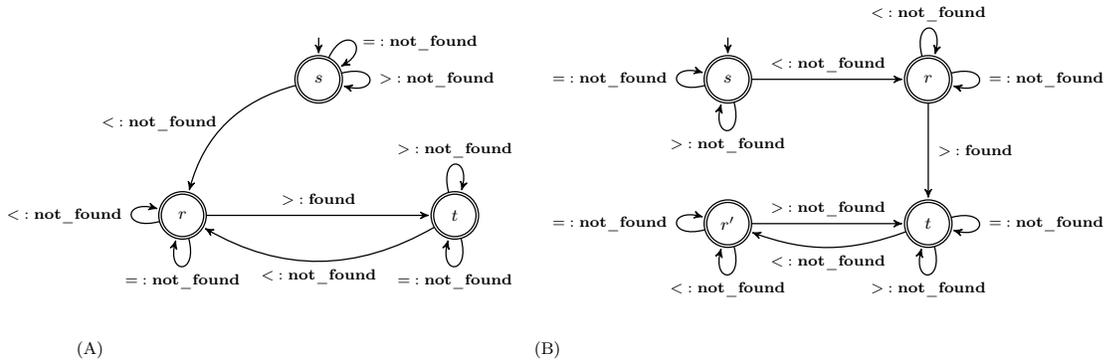


\begin{example}[separated seed transducer]
\label{ex:peak-transducer-to-loss-automaton}
Part~(B) of \figref{fig:peak-transducer-separated} gives the separated seed transducer
for $\PeakPatternName$ obtained from the seed transducer in Part~(A).
\qedexample
\end{example}

Note that, even if the seed transducer for~$\pattern$ constructed by the method of~\cite{ASTRA:ICTAI17:generation} is not separated,
it can be easily made so by duplicating some of its states.
Subsequently we assume that the seed transducer for~$\pattern$ is separated,
and we derive the loss automaton~$\Aut$ in the same way as we generate register
automata for time-series constraints~\cite{Beldiceanu:synthesis}, namely:
\begin{enumerate}
\item
First, we identify the required registers of~$\Aut$ and their role.
\item
Second, to each phase letter of the output alphabet of the seed transducer for~$\pattern$,
we associate a set of instructions, i.e.~register updates.
The loss automaton~$\Aut$ is obtained by replacing every phase letter of the seed transducer for~$\pattern$
by the corresponding set of instructions.
\end{enumerate}

\paragraph{Identifying the Required Registers of the Loss Automaton}
\label{sec:registers-nb}
Consider a $\constraint{nb}\_\pattern$ time-series constraint.
Intuitively, when consuming the signature of a ground time series, every transition triggered by the seed
transducer~$\Transducer{\pattern}$ for~$\pattern$ has a certain impact on the loss of this time series.
To quantify this impact for the case of $\constraint{nb}\_\pattern$ time-series constraints, 
\defref{def:regret} introduces the notion of \emph{regret of a transition} of a seed transducer for~$\pattern$.
The regret of a transition $t$ gives how many additional transitions $\Transducer{\pattern}$ has to trigger,
before it can trigger the next~$\Found$-transition, if it triggers~$t$ rather than the transition on a shortest~$\Found$-path.

\begin{definition}[regret of a transition]
\label{def:regret}
Consider a regular expression~$\pattern$ and its seed transducer~$\Transducer{\pattern}$.
For any transition~$\Tr$ of~$\Transducer{\pattern}$ from state~$q_1$ to state~$q_2$,
the \emph{regret} of~$\Tr$ equals one plus the difference between the lengths of
the shortest~$\Found$-paths from~$q_2$, respectively~$q_1$.
\end{definition}

\begin{example}[regret of a transition]
\label{ex:regret}
Consider the~$\PeakPatternName$ regular expression, whose separated seed transducer is given in Part~(B)
of~\figref{fig:peak-transducer-separated}. We denote by~$\trans{q_1}{q_2}{a}$ a transition of the seed transducer
from state~$q_1$ to state~$q_2$ whose input symbol is~$a$.
All transitions in~$\Curly{\trans{s}{r}{<},\trans{r}{t}{>},\trans{t}{r'}{>},\trans{r'}{t}{<}}$ between
two distinct states have a regret of~$0$, while all transitions
in~$\Curly{\trans{s}{s}{>},\trans{s}{s}{=},\trans{r}{r}{<},\trans{r}{r}{=},\trans{t}{t}{>},\trans{t}{t}{=},\trans{r'}{r'}{<},\trans{r'}{r'}{=}}$
have a regret of~$1$.
\qedexample
\end{example}

\lemref{lem:regret-loss} shows the connection between the loss of a ground time series~$\x$ and the regret of the transitions
triggered by the seed transducer for~$\pattern$ when consuming the signature of~$\x$.

\begin{lemma}[regret-loss relation]
\label{lem:regret-loss}
Consider a~$\CtrGamma(\x,\Result)$ time-series constraint with~$\CtrGamma$ being $\constraint{nb}\_\pattern$
such that~$\pattern$ has the \HomogeneityProp{.}
Let~$t=\Tuple{t_1,t_2,\dots,t_{\seqlength-1}}$ denote the sequence of transitions triggered
by the seed transducer~$\Transducer{\pattern}$ for~$\pattern$ upon consuming the signature of~$\x=\xseq$,
and let~$\LastFoundIndex$ denote the index of the last~$\Found$-transition in~$t$, if no such transition exists,
$\LastFoundIndex$~is zero.
The following equality holds:

$\CharArg{\Loss}{\CtrGamma}{\x}=\seqlength-1-\LastFoundIndex+\sum_{i=1}\limits^{\LastFoundIndex}\Regret{t_i}$,
where~$\Regret{t_i}$ denotes the regret of transition~$t_i$.
\end{lemma}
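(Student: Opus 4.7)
The plan is to prove the identity by case analysis on~$\Result$, decomposing in the positive case the prefix $t_1,\dots,t_{\LastFoundIndex}$ into $\Result$ consecutive segments, each ending in a $\Found$\nobreakdash-transition. First I would handle the trivial case $\Result=0$: then $\LastFoundIndex=0$, the sum is empty, the right\nobreakdash-hand side of the claimed identity reduces to $\seqlength-1$, and the loss of any time series of length $\seqlength$ containing no $\pattern$-pattern is exactly $\seqlength-1$, since a shortest time series with zero $\pattern$-patterns has length one.

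For $\Result\geq 1$, I would invoke \lemref{lem:max-value-nb} to observe that a shortest time series yielding exactly $\Result$ occurrences of $\pattern$ has length $\Result\cdot\D{\pattern}+\C{\pattern}$, so by \defref{def:loss} we have $\CharArg{\Loss}{\CtrGamma}{\x}=\seqlength-\Result\cdot\D{\pattern}-\C{\pattern}$. I would then split the transitions $t_1,\dots,t_{\LastFoundIndex}$ into segments $S_1,\dots,S_{\Result}$, with each $S_k$ ending at the $k$-th $\Found$\nobreakdash-transition of the run. By the \HomogeneityProp{,} every state that is the destination of a $\Found$\nobreakdash-transition shares the same shortest $\Found$-path length $\D{\pattern}$, while the shortest $\Found$-path from the initial state has length $\ell_0=\D{\pattern}+\C{\pattern}-1$, as one reads off from the formula of \lemref{lem:max-value-nb}.

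The core computation is a telescoping argument inside each segment. Using \defref{def:regret}, $\sum_{t\in S_k}\Regret{t}$ collapses to the defect $|S_k|-m_k$, where $m_1=\ell_0$ and $m_k=\D{\pattern}$ for $k>1$ are the minimum possible segment lengths; the intermediate shortest\nobreakdash-found\nobreakdash-path values at the states internally visited cancel pairwise. Summing these equalities over $k=1,\dots,\Result$ yields
\[
\sum_{i=1}^{\LastFoundIndex}\Regret{t_i}=\LastFoundIndex-\ell_0-(\Result-1)\cdot\D{\pattern}=\LastFoundIndex-\Result\cdot\D{\pattern}-\C{\pattern}+1,
\]
and substituting this into the right\nobreakdash-hand side of the lemma gives $\seqlength-\Result\cdot\D{\pattern}-\C{\pattern}$, which is precisely the loss computed above.

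The main obstacle will be justifying the intra\nobreakdash-segment telescoping cleanly, because the regret of a transition mixes shortest\nobreakdash-found\nobreakdash-path lengths at its source and destination, and one must carefully handle the boundary contribution of the $\Found$\nobreakdash-transition that terminates each segment. The \HomogeneityProp{} is exactly what makes these boundary values align, since it forces every post\nobreakdash-$\Found$ state to serve as a uniform reset point with the same shortest $\Found$\nobreakdash-path length $\D{\pattern}$; without this uniformity, segments starting at different post\nobreakdash-$\Found$ states could have incompatible minimum lengths and the telescoping would not reduce to a single defect of the form $|S_k|-m_k$.
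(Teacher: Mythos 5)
Your proof is correct, and it takes a substantially more explicit route than the paper, whose entire argument is three sentences: the suffix after the last $\Found$-transition forces a loss of at least $\seqlength-1-\LastFoundIndex$, the sum of regrets ``shows how many additional transitions were triggered to achieve the same number of $\Found$-transitions'', and the loss is the sum of the two. Your segment decomposition is the formal content of that second sentence: telescoping $\Regret{t}=1+\bigl(\text{sfp}(q_2)-\text{sfp}(q_1)\bigr)$ over a segment collapses to $|S_k|$ minus the shortest-$\Found$-path length at the segment's start, which the \HomogeneityProp{} pins to $\D{\pattern}$ for every segment after the first; combined with the closed form $\Result\cdot\D{\pattern}+\C{\pattern}$ for the length of a shortest time series with $\Result\geq 1$ patterns, both sides evaluate to $\seqlength-\Result\cdot\D{\pattern}-\C{\pattern}$. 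What your route buys is an actual verification where the paper only gestures; what it costs is a dependence on the constants of \lemref{lem:max-value-nb}, in particular on $\ell_0=\D{\pattern}+\C{\pattern}-1$, which you have to extract yourself because the paper's statement of that lemma describes $\C{\pattern}$ inconsistently with its own proof (your reading is the one consistent with the bound $\lfloor\frac{\seqlength-1}{2}\rfloor$ for $\PeakPatternName$). Two points to nail down when you write the telescoping out. First, \defref{def:regret} read literally assigns regret $1+\D{\pattern}-1=\D{\pattern}>0$ to a $\Found$-transition, whereas \exref{ex:regret} and the decoration table make clear its regret is $0$; so you must drop the terminating $\Found$-transition from each telescoped sum and use that the shortest $\Found$-path from its source has length $1$ — this is exactly what yields the clean defect $|S_k|-m_k$, and it works out, but state it rather than leave it as an acknowledged obstacle. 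Second, your segment count $\Result$ tacitly uses the bijection between $\pattern$-patterns and $\Found$ symbols (stated in the proof of \lemref{lem:max-value-nb}); cite it so that $\LastFoundIndex=0\Leftrightarrow\Result=0$ and the number of segments is justified.
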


\begin{proof}
Since $\Tuple{t_{\LastFoundIndex+1},t_{\LastFoundIndex+2},\dots,t_{\seqlength-1}}$
does not contain any~$\Found$-transition, it implies that the loss of~$\x$ is at least $\seqlength-1-\LastFoundIndex$.
Then, the sum $\sum\limits_{i=1}^{\LastFoundIndex}\Regret{t_i}$ shows how many additional transitions were triggered
to achieve the same number of~$\Found$-transitions in the output sequence.
Hence, the loss of~$\x$ is the sum of $\seqlength-1-\LastFoundIndex$ and $\sum\limits_{i=1}^{\LastFoundIndex}\Regret{t_i}$.
\hspace{\fill} \qed
\end{proof}

\begin{example}[regret-loss transition]
\label{ex:regret-peak}
Consider the $\PeakPatternName$ regular expression, whose
separated seed transducer $\Transducer{\PeakPatternName}$ is given
in Part~(B) of \figref{fig:peak-transducer-separated}. 
Upon consuming the signature of the time series~$\x=\langle 1,1,2,1,2,$ $1,1,2,1,2\rangle$,
the seed transducer~$\Transducer{\PeakPatternName}$ triggers the following sequence of transitions
$\langle\trans{s}{s}{=},\trans{s}{r}{<},\trans{r}{t}{>},\trans{t}{r'}{<},$ $\trans{r'}{t}{>},\trans{t}{t}{=},\trans{t}{r'}{<},\trans{r'}{t}{>},\trans{t}{r'}{<}\rangle$.
The index of the last triggered~$\Found$-transition is~$8$.
From \lemref{lem:regret-loss}, we obtain
$\CharArg{\Loss}{\CtrGamma}{\x}=10-1-8+(1+0+0+0+0+1+0+0+0)=3$.
\qedexample 
\end{example}

From \lemref{lem:regret-loss}, three registers are needed for the loss automaton.
Given a prefix of a signature consumed by the seed transducer, let~$\LastFoundIndex$ denote the last triggered~$\Found$-transition:
\begin{itemize}
\item
Register~$\ThirdAcc$ gives the sum of the regrets of the transitions triggered before~$\LastFoundIndex$.
Note that the regret of~$\LastFoundIndex$ is zero.
\item
Register~$\SecondAcc$ gives the sum of the regrets of the transitions triggered after~$\LastFoundIndex$.
\item
Register~$\FirstAcc$ gives the number of transitions triggered after~$\LastFoundIndex$.
\end{itemize}

The initial value of these three registers is zero.
The decoration table, given in the next section, follows from~\lemref{lem:regret-loss}.

\paragraph{Decoration Table of a Loss Automaton}
\label{sec:decoration-table-nb}

As stated before, a loss automaton for $\constraint{nb}\_\pattern$ has three registers~$\FirstAcc$, $\SecondAcc$ and~$\ThirdAcc$.
Given a prefix of some signature consumed by the seed transducer~$\Transducer{\pattern}$,
let~$\LastFoundIndex$ denote the last triggered~$\Found$-transition.
When~$\Transducer{\pattern}$ triggers the transition~$t$, we have one of the two following cases:
\begin{enumerate}

\item

[$t$ is not a~$\Found$-transition]
Then~$\LastFoundIndex$ is still the last triggered~$\Found$-transition.
There is one more transition triggered after~$\LastFoundIndex$, and the register~$\FirstAcc$ must be increased by~$1$.
Further, the value of~$\SecondAcc$ should be increased by the regret of~$t$.
Finally, register~$\ThirdAcc$ remains unchanged.

\item

[$t$ is a~$\Found$-transition]
Then~$t$ becomes the last triggered~$\Found$-transition.
Since there is no transition triggered after~$t$, registers~$\FirstAcc$ and~$\SecondAcc$ must both be reset to~$0$.
Register~$\ThirdAcc$ must be increased by the sum of the regrets of all the transitions triggered
after~$\LastFoundIndex$ and before~$t$, i.e.~the value of~$\SecondAcc$.
\end{enumerate}

By~\lemref{lem:regret-loss}, the loss of a time series is the sum between the sum of the regrets
of all the triggered transitions before the last~$\Found$-transition and the number of transitions
triggered after the last~$\Found$\nobreakdash-transition.
This is the sum of the last values of~$\FirstAcc$ and~$\ThirdAcc$.
Part~(A) of~Figure~\ref{tab:decoration-table-nb} summarises how registers are updated.

\begin{figure}[!h]
{\small
\begin{tikzpicture}[scale=0.88, every node/.style={scale=0.88}]
\begin{scope}[information text/.style={rounded corners,inner sep=1ex}]
\draw node[right,text width=8.6cm,information text,fill=white] {
\begin{tabular}{llll}
\toprule
{\bf initial values} & $\FirstAcc\gets 0$ & $\SecondAcc\gets 0$ & $\ThirdAcc\gets 0$ \\
{$\begin{array}{l}\textbf{acceptance}\\ \textbf{function}\end{array}$} & $\ThirdAcc+\FirstAcc$ \\ \midrule
\textbf{phase letters} & \hspace*{3pt}\emph{update of $C$} & \hspace*{2pt}\emph{update of $D$} &
                                                             \hspace*{2pt}\emph{update of $R$} \\ \midrule
$\Found$ & $\FirstAcc\gets 0$ & $\SecondAcc\gets 0$ & $\ThirdAcc\gets\ThirdAcc+\SecondAcc$ \\
$\NotFound$ & $\FirstAcc\gets\FirstAcc+1$ & $\SecondAcc\gets\SecondAcc+\Regret{t}$ \\
\bottomrule
\end{tabular}
};
\draw node at (5,-1.5) {\normalsize(A)};
\end{scope}
\begin{scope}[xshift=11cm,yshift=1.5cm,->,>=stealth',shorten >=1pt,auto,semithick,
              information text/.style={rounded corners=1pt,inner sep=1ex},font=\scriptsize,
              node distance=28mm,->,>=stealth',shorten >=1pt,auto,semithick]
      \node[initial,initial where=above,initial text=,initial distance=3mm,accepting,state] (t)             {{$s$}};
      \node[accepting,state]                                                                (s) [right of=t]{{$r$}};
      \node[accepting,state]                                                                (s1) at (0,-3)  {{$r'$}};
      \node[accepting,state]                                                                (t1) at (2.8,-3){{$t$}};
      \node[rectangle,draw] at (1.4,-1) (ctr) { return $R+C$};
      \draw[dotted,-] (t)  -- (ctr.west);
      \draw[dotted,-] (s)  -- (ctr.east);
      \draw[dotted,-] (s1) -- (ctr.west);
      \draw[dotted,-] (t1) -- (ctr.east);
      \path
        (s) edge [loop right]              node [right]{ $\begin{array}{c} <,= \\
                                                                \left \{\begin{array}{l}
                                                                         C \leftarrow\hspace*{1pt}C + 1 \\
                                                                         D \leftarrow D + 1
                                                                        \end{array}
                                                                \right\} \\ ~  \\ ~ \\ ~
                                                                \end{array}$} (s)
        (t) edge [loop left]               node [left] { $\begin{array}{c} >,= \\
                                                                \left \{\begin{array}{l}
                                                                         C \leftarrow\hspace*{1pt}C + 1 \\
                                                                         D \leftarrow D + 1
                                                                        \end{array}
                                                                \right\} \\ ~  \\ ~ \\ ~
                                                                \end{array}$} (t)
        (t) edge node [above]{ $\begin{array}{c} < \\
                                                                \left \{\begin{array}{l}
                                                                         C \leftarrow C + 1 \\
                                                                        \end{array}
                                                                \right\}
                                                                \end{array}$} (s)
        (s) edge node [sloped,above]{ $\begin{array}{c} > \\
                                                                \left \{\begin{array}{l}
                                                                         C \leftarrow 0     \\
                                                                         D \leftarrow 0     \\
                                                                         R \leftarrow R + D
                                                                        \end{array}
                                                                \right\}
                                                                \end{array}$} (t1)
        (t1)edge [loop right]              node [right]{ $\begin{array}{c} \\ ~  \\ ~ \\ >,= \\
                                                                \left \{\begin{array}{l}
                                                                         C \leftarrow\hspace*{1pt}C + 1 \\
                                                                         D \leftarrow D + 1
                                                                        \end{array}
                                                                \right\}
                                                                \end{array}$} (t1)
        (s1)edge [loop left]               node [left] { $\begin{array}{c}  \\ ~  \\ ~ \\ <,= \\
                                                                \left \{\begin{array}{l}
                                                                         C \leftarrow\hspace*{1pt}C + 1 \\
                                                                         D \leftarrow D + 1
                                                                        \end{array}
                                                                \right\}
                                                                \end{array}$} (s1)
        (t1)edge node [below]{ $\begin{array}{c} < \\
                                                                \left \{\begin{array}{l}
                                                                         C \leftarrow C + 1 \\
                                                                        \end{array}
                                                                \right\}
                                                                \end{array}$} (s1)
        (s1)edge [bend angle=20,bend left] node [above]{ $\begin{array}{c} > \\
                                                                \left \{\begin{array}{l}
                                                                         C \leftarrow 0     \\
                                                                         D \leftarrow 0     \\
                                                                         R \leftarrow R + D
                                                                        \end{array}
                                                                \right\}
                                                                \end{array}$} (t1);
\draw node at (-0.7,-1.05) {\normalsize(B)};
\end{scope}
\end{tikzpicture}
}
\caption{\label{tab:decoration-table-nb}
(A)~Decoration table for the loss automaton for $\constraint{nb}\_\pattern$ time-series constraints,
    where~$\Regret{t}$ denotes the regret of a transition~$t$ of the seed transducer for~$\pattern$;
(B)~Loss automaton for $\PeakConstraint$; the initial value of the registers~$C$, $D$, and~$R$ is zero;
    as the regret of the~$\NotFound$ transitions~$\trans{s}{r}{<}$ and~$\trans{t}{r'}{<}$ of the seed
    transducer for~$\pattern$ is zero, the register~$D$ remains unchanged while triggering these two
    transitions.}
\end{figure}

To obtain the loss automaton for a $\constraint{nb}\_\pattern$ time-series constraint, we replace every
output letter in the separated seed transducer for~$\pattern$ with the corresponding set of register
updates according to the decoration table shown in Part~(A) of Figure~\ref{tab:decoration-table-nb}.
The initial value of all three registers is zero, and the acceptance function is $\FirstAcc+\ThirdAcc$.
\begin{example}[loss automaton]
\label{ex:loss-automaton}
The loss automaton for $\constraint{nb}\_\PeakPatternName$, obtained from the seed transducer in Part~(B)
of~\figref{fig:peak-transducer-separated} and from the decoration table in Part~(A) of Figure~\ref{tab:decoration-table-nb},
is given in Part~(B) of Figure~\ref{tab:decoration-table-nb}.
\qedexample
\end{example}


\subsection{Summary}
We presented a systematic approach for generating $\Shift$-gap automata for time\nobreakdash-series constraints,
and demonstrated its applicability for the $\constraint{nb}\_\pattern$ family.
We used the obtained automata both (i)~for proving that 70\% of our synthesised linear invariants were facet defining,
and (ii)~for proving the correctness of all non-linear invariants of a database of invariants on conjunctions of time-series constraints.

Although, we did this work in the context of time series, the same method can be used
for generating~$\Shift$\nobreakdash-gap automata
for any constraint satisfying the four principal conditions.
As an example, consider the $\constraint{nb\_group}(X,\Result,P)$ constraint~\cite{Cosytec97,BeldiceanuCarlssonRamponTruchet05},
where~$X$ is a sequence of $n$ integer variables, $\Result$ is an integer variable, and~$P$ is a non-empty finite set of integer numbers.
This constraint restricts $\Result$ to be the number of maximal subsequences of~$X$ whose elements are in~$P$.
For example, $\constraint{nb\_group}(\Tuple{1,3,4,1,0,9,0},$ $3,\{0,1\})$ holds. 
Then a sharp upper bound on~$\Result$ is~$\Frac{\seqlength}{2}$, and it can be shown that all the four principal conditions
are satisfied for $\constraint{nb\_group}$.
Hence by \thref{th:existence-gap-automaton} for any natural~$\Shift$, the~$\Shift$-gap automaton for $\constraint{nb\_group}$
exists and can be constructed by the method given in the proof of \thref{th:existence-gap-automaton}.



\section{Evaluation}\label{sec:evaluation}

To test the generated invariants, we use real-world electricity demand data from an industrial partner.
The dataset contains time series of length~$96$ ($2$ days in half-hour resolution) for multiple years.
We use fixed size prefixes of the data to show scaleability of our methods. 

In a first experiment we consider prefixes of length~$25$ and test all binary combinations of the considered constraints
both with our baseline implementation of the individual constraints (version \emph{pure}) and with the added,
generated invariants applied to each suffix (version \emph{incremental}).
From the dataset, we extract as features the observed values for a pair of constraints for a time-series instance,
and then try to find an assignment that achieves these values.
Each problem is feasible, as it is based on an existing assignment.
Any improvement of the propagation is due to detecting failures in partial assignments more quickly by applying the invariants
to suffixes of the complete series.
Our default search strategy labels the signature variables first, followed by the decision variables,
always starting with the smallest values. As all constraints used here operate on the signature variables only,
we can always find an assignment of the decision variables once a feasible assignment of the signatures is found.  

Figure~\ref{fig:matrix} shows the results, with the \emph{pure} baseline above the main diagonal,
and the results with the added invariants (\emph{incremental}) below the main diagonal.
Each box represents the results for~$100$ time series.
The number in the box, if present, shows how many of the~$100$ experiments timed out (limit $2$~seconds)
with the default search strategy. The colour of the cell indicates the average number of backtracks
required for the solved instances, based on the legend below the matrix. All experiments were run
using SICStus Prolog~4.3.5 on a Windows~10 laptop with~64~GB of memory,
using a single core of the Intel~i7 processor running at~2.9~GHz base speed.

\begin{figure}[h]
\caption{\label{fig:matrix}Comparing baseline (top left) and added invariants (bottom right) models
on all binary combinations of considered constraints;
Length~$25$ variables; $100$~feasible samples,
Number of timeouts as numbers, average number of backtracks of solved problems as cell colour.}
\input{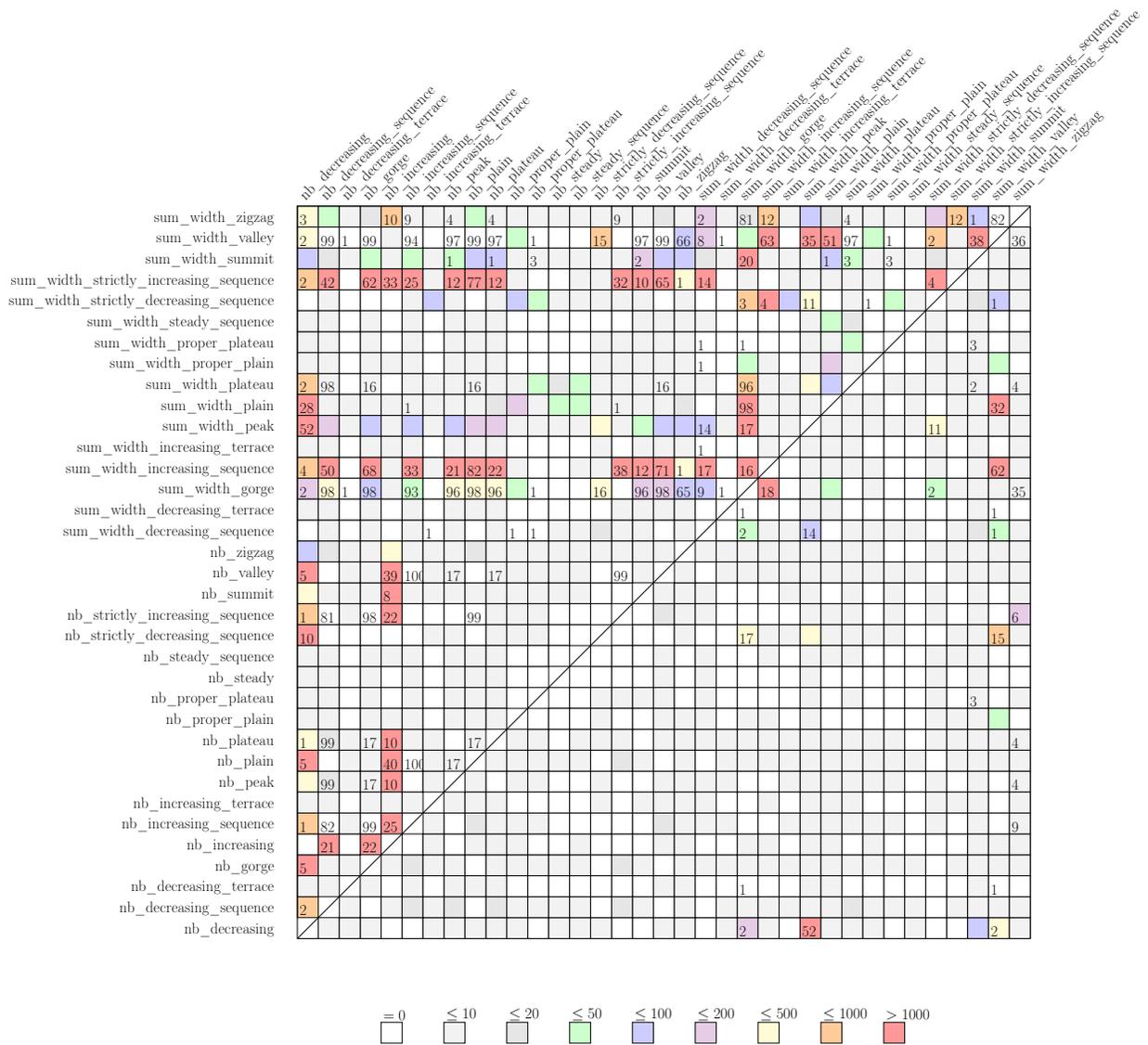}
\end{figure}

Adding the invariants decreases both the number of timeouts and the number of backtracks for most, but not all,
constraint combinations. While some constraint combinations are easily solved even without the invariants,
there are many cases where the baseline constraints are not able to find a solution quickly,
but the added invariants reduce the backtrack count close to zero.
It is interesting to note that all combinations of the \texttt{nb\_} constraints are solved
with less than~$20$~backtracks when the invariants are added, while the baseline constraint
do not find any solutions for several combinations of such constraints.

We repeat the experiments, but now for time-series length increasing from~$20$ to~$90$,
to investigate scaleability of the approach. Figure~\ref{fig:scalability} shows the baseline results on the left,
the results with added invariants on the right. We plot the percentage of instances solved as a function of execution time.
For the baseline, we see that with increasing problem size the percentage of problems solved steadily drops from~93.9\%
for size~$20$ to~75.9\% for size~90 with a timeout of~$2$~seconds. Adding the invariants improves the percentage to~99.3\%
for size~$20$, while still achieving~97.9\% for size $90$.

\begin{figure}[htbp]
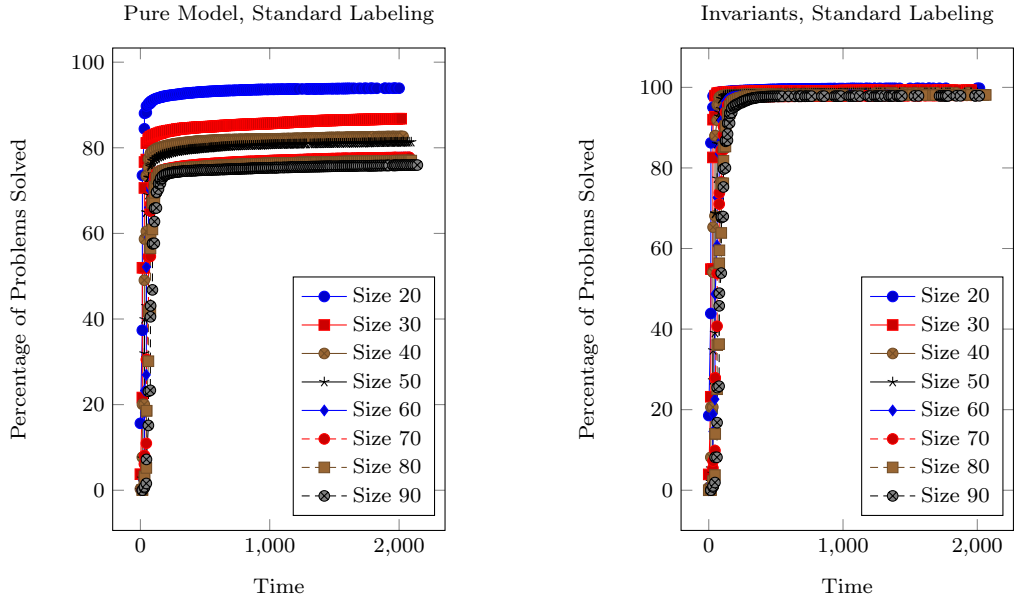

  \caption{\label{fig:scalability}Comparing baseline (left) and added invariants (right) models on time series of sizes~$20$-$90$;
  $100$~feasible sample; Showing cumulative percentage of problems solved as a function of execution time, timeout~$2$~seconds.}
  \begin{subfigure}[b]{.5\linewidth}
      \input{figures/tuples_graph_time_2_1_labelinga}
    \end{subfigure}%
  \begin{subfigure}[b]{.5\linewidth}
      \input{figures/tuples_graph_time_2_invariant_1_labelinga}
  \end{subfigure}%
  \end{figure}

\begin{figure}[htbp]
  \caption{\label{fig:overlap}Percentage of Problems Solved for~$3$ Overlapping Segments of Lengths~$22$, $24$, and~$25$;
  Execution time in top row, backtracks required in bottom row.}
  \begin{subfigure}[b]{.333\linewidth}
    \includegraphics[width=\linewidth]{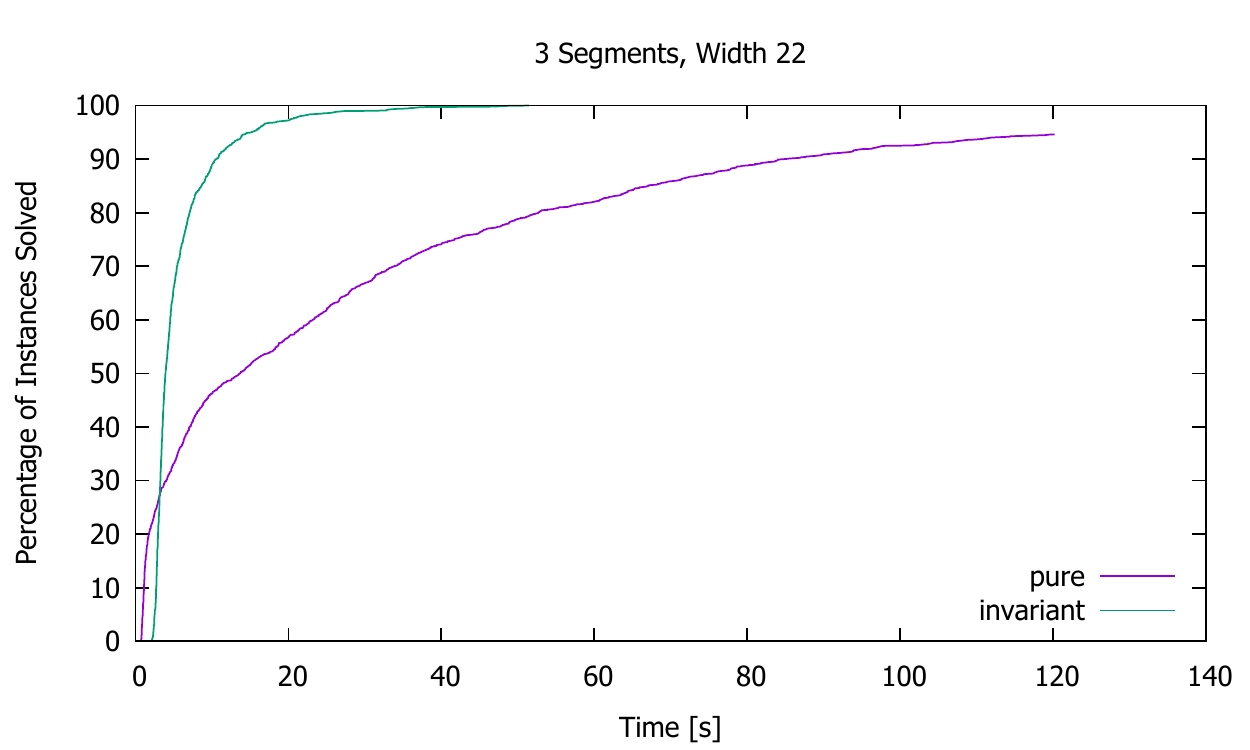}
    \subcaption{Time, Size 22}
  \end{subfigure}%
  \begin{subfigure}[b]{.333\linewidth}
  \includegraphics[width=\linewidth]{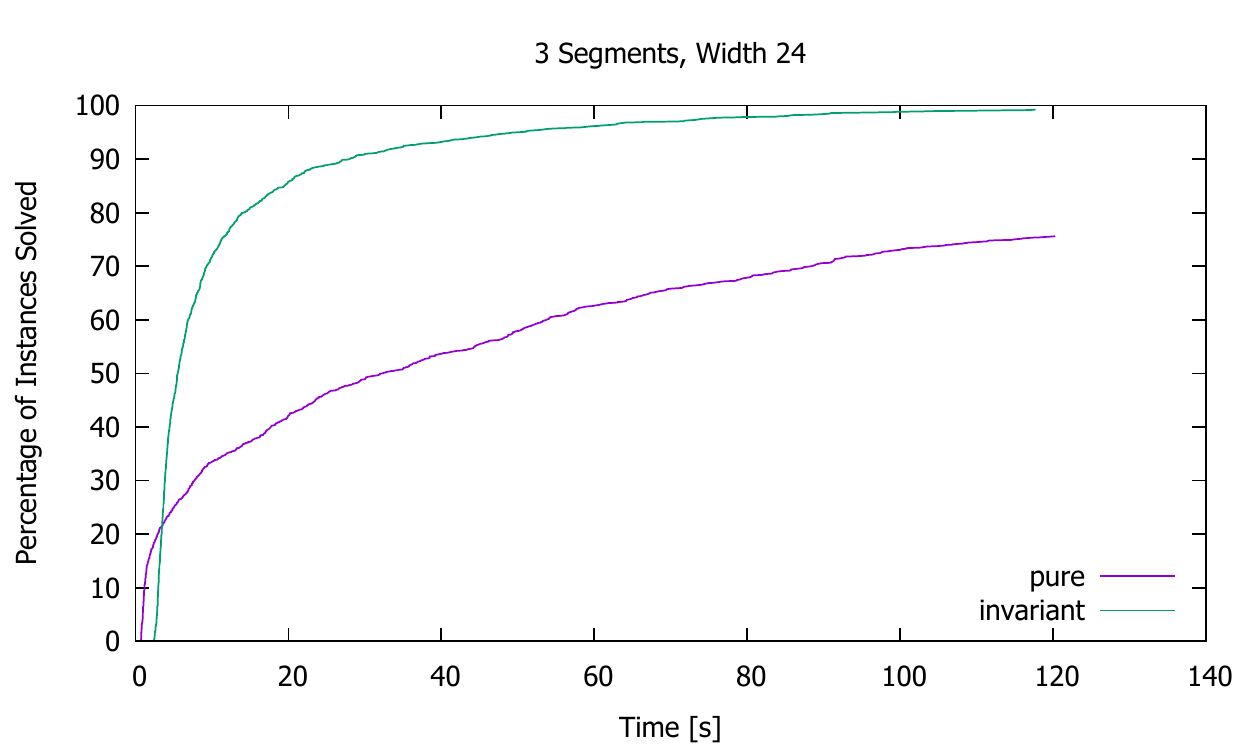}
    \subcaption{Time, Size 24}
  \end{subfigure}%
  \begin{subfigure}[b]{.333\linewidth}
  \includegraphics[width=\linewidth]{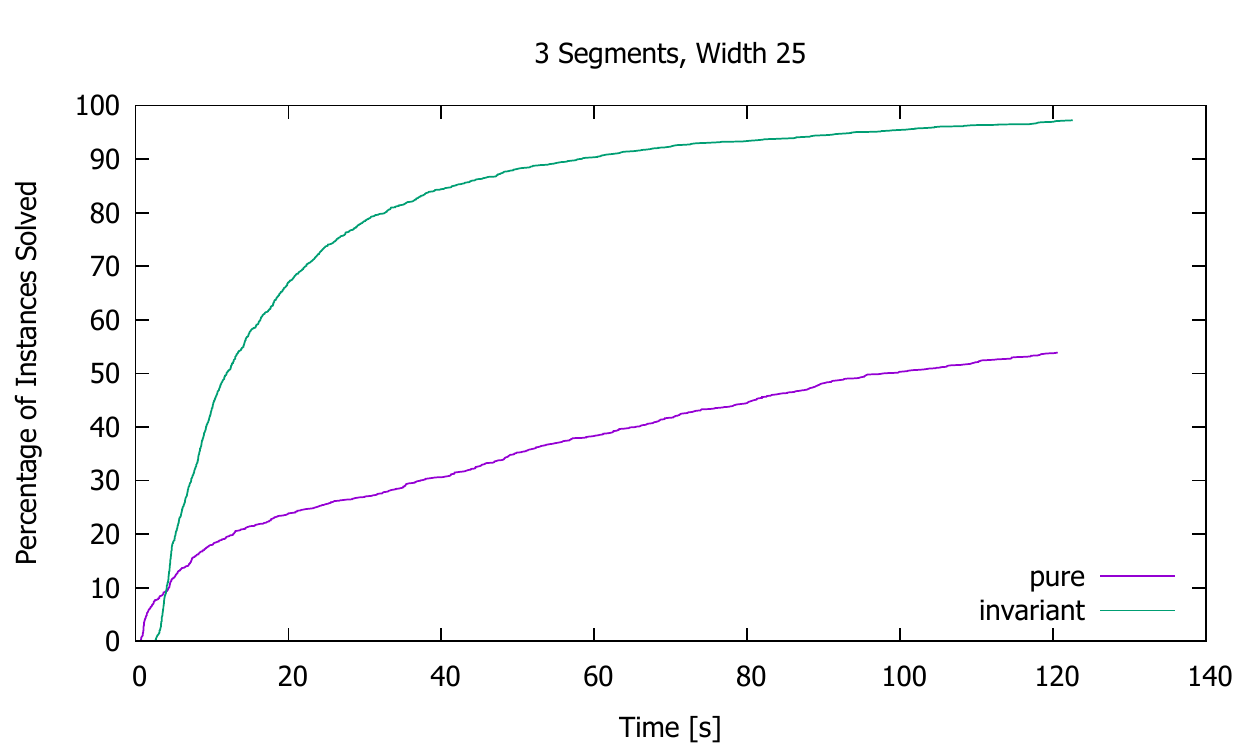}
    \subcaption{Time, Size 25}
  \end{subfigure}\\
  \begin{subfigure}[b]{.333\linewidth}
  \includegraphics[width=\linewidth]{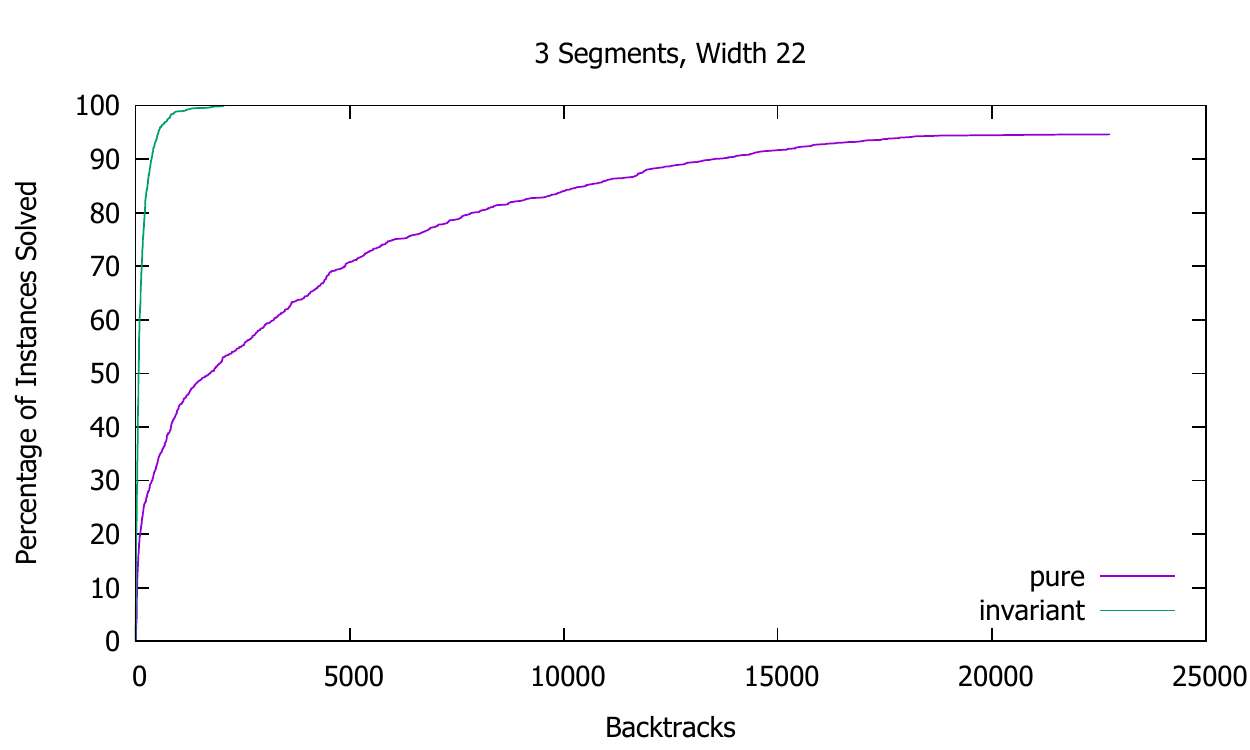}
    \subcaption{Backtracks, Size 22}
  \end{subfigure}%
  \begin{subfigure}[b]{.333\linewidth}
  \includegraphics[width=\linewidth]{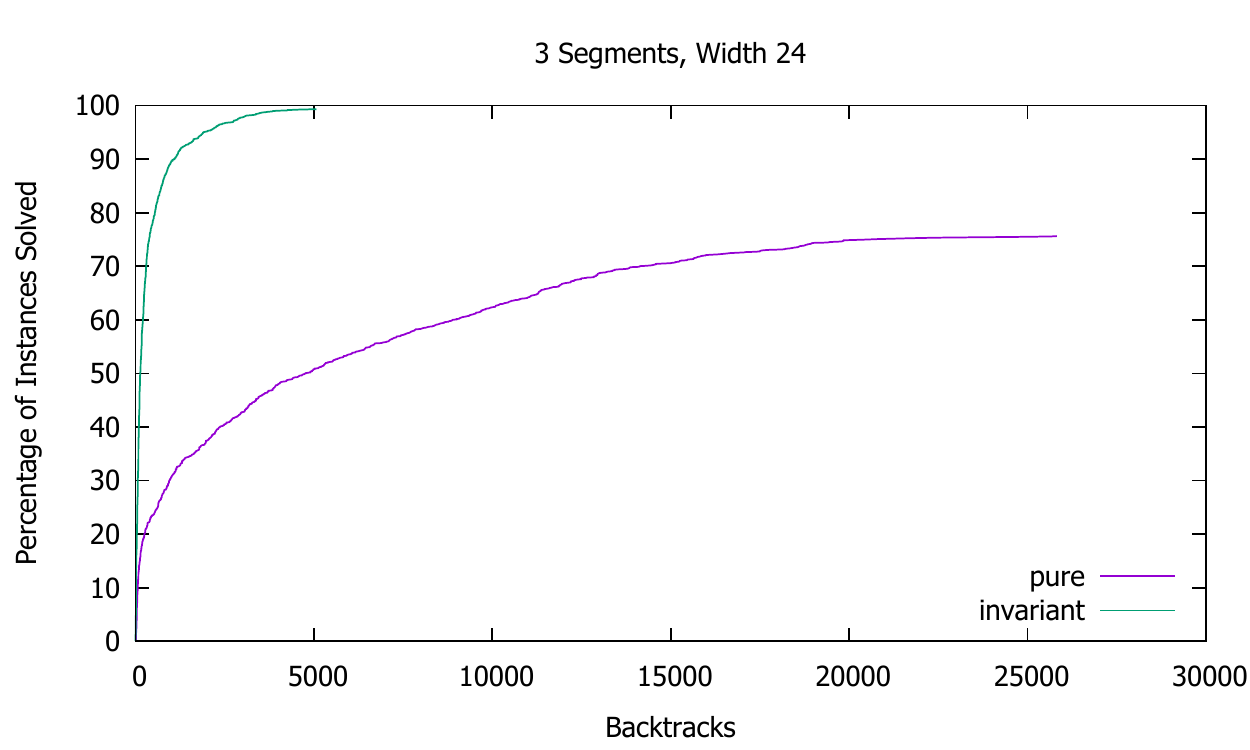}
    \subcaption{Backtracks, Size 24}
  \end{subfigure}%
  \begin{subfigure}[b]{.333\linewidth}
  \includegraphics[width=\linewidth]{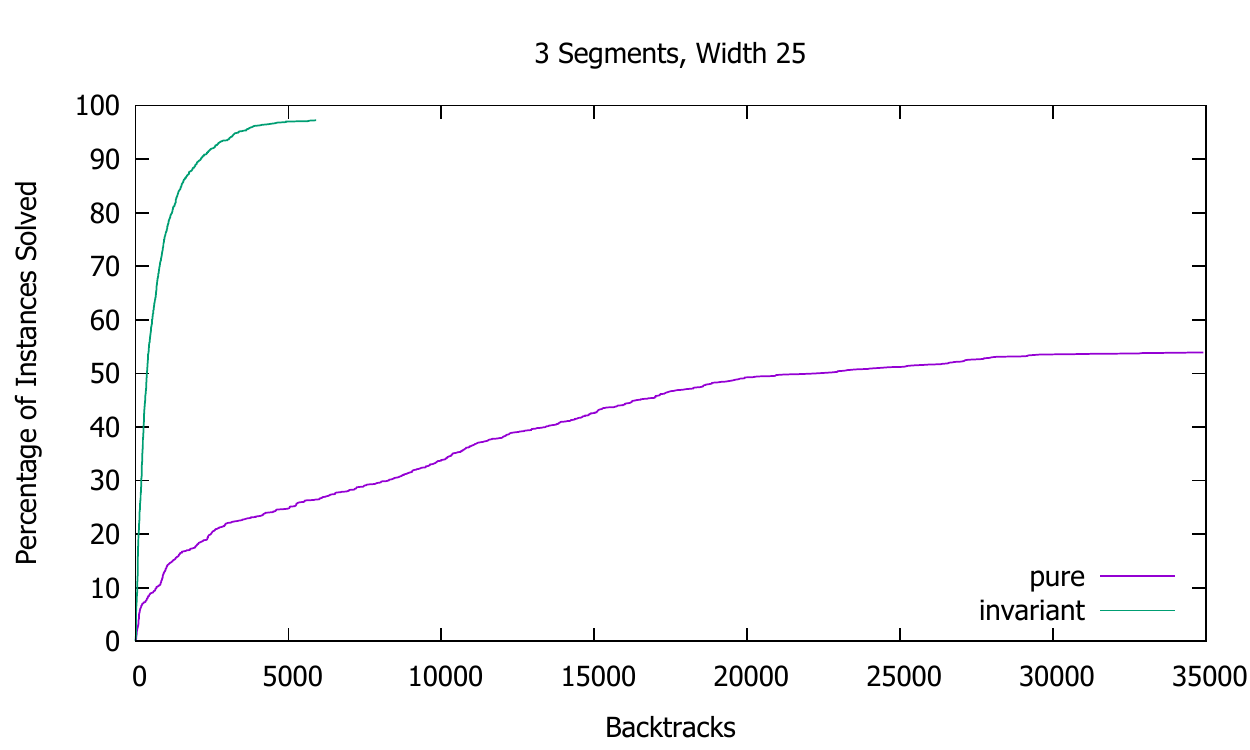}
    \subcaption{Backtracks, Size 25}
  \end{subfigure}%
\end{figure}
To test the method in a realistic setting, we consider the conjunction of all~$35$ considered time-series constraints on the dataset.
To capture the shape of the time series more accurately, we split the series into overlapping segments from~$00$-$12$, $06$-$18$,
and~$12$-$24$ hours, each segment containing~$24$ data points, overlapping in~$12$ data points with the previous segment.
We then set up the conjunction of the~$35$ time-series constraints for each segment, using the \emph{pure} and \emph{incremental}
variants described above. This leads to~$3\times 35\times 2=210$ automaton constraints with decision variables.
The invariants are created for every pair of constraints, and every suffix, leading to a large number of inequalities.
The search routine assigns all signature variables from left to right, and then assigns the decision variables, with a timeout of~$120$ seconds.

In order to understand the scaleability of the method, we also consider time series of~$44$ resp.\ $50$ data points
(three segments of length~$22$ and~$25$), extracted from the daily data stream covering a four-year period ($1448$~samples).
In Figure~\ref{fig:overlap} we show the time and backtrack profiles for finding a first solution.
The top row shows the percentage of instances solved within a given time budget, the bottom row shows the percentage of problems solved
within a backtrack budget. For easy problems, the \emph{pure} variant finds solutions more quickly, but the \emph{incremental} version
pays off for more complex problems, as it reduces the number of backtracks required sufficiently to account for the large overhead
of stating and pruning all invariants. The problems for segment length~$20$ (not shown) can be solved without timeout for both variants,
as the segment length increases, the number of timeouts increases much more rapidly for the \emph{pure} variant.

The results show that adding the generated invariants drastically improves the propagation, even for feasible problems.
The improvement is due to detecting infeasibility of a generated sub-problem for the remaining suffix of the unassigned variables more rapidly,
and therefore avoiding having to explore this infeasible subtree in the overall search.

\section{Conclusion}

Using the operational view of time-series constraints,
i.e.~the seed transducers for each regular expression and register automata,
we presented systematic methods for synthesising
1)~linear and 2)~non-linear invariants
linking the result values of several time-series constraints
and parameterised by a function of the time-series length,
and 3)~conditional automata representing a condition on the result value of a time-series constraint.
Since all these conditional automata have a number of states and an input alphabet
that do not depend on the length of an input sequence, these automata allow us to prove
both the fact that linear invariants are facet defining or not, and the validity of non-linear invariants,
for \emph{any long enough sequence length}.
All the~$2000$ synthesised parametrised invariants
were put in a publicly available database of invariants~\cite{Catalog18}
linked to the time-series catalogue that was used to automatically enhance
short-term electricity production models that were acquired from real production data.

\bibliographystyle{plain}
\bibliography{paper.bib}

\end{document}